%% file: main.tex
\documentclass[runningheads]{llncs}

\usepackage{eccv}

\usepackage{eccvabbrv}
\usepackage{wrapfig}
\usepackage{multirow}
\usepackage{url}
\usepackage{graphicx}
\usepackage{booktabs}
\usepackage{tcolorbox}
\tcbuselibrary{breakable}

\usepackage[accsupp]{axessibility}  
\usepackage{bm}

\newcommand{\vx}{\mathbf{x}}
\newcommand{\va}{\mathbf{a}}
\newcommand{\vr}{\mathbf{r}}
\newcommand{\vomega}{\bm{\omega}}
\newcommand{\inner}[2]{\langle #1, #2 \rangle}

\usepackage{hyperref}

\usepackage{orcidlink}

\begin{document}

\title{Disentangling Hallucinations: Orthogonal Semantic Projection for Robust Interpretability}

\titlerunning{Orthogonal Semantic Projection for Robust Interpretability}

\author{Emirhan Bilgi{\c{c}}\inst{1,2} \and
Baptiste Caramiaux\inst{2} \and
Zhi Yan\inst{1} \and
Gianni Franchi\inst{3}}

\authorrunning{E.~Bilgi{\c{c}} et al.}

\institute{U2IS, ENSTA, Institut Polytechnique de Paris, Palaiseau \and
ISIR, Université Sorbonne, Pierre et Marie Curie, Paris \and
AMIAD, Pôle Recherche, Palaiseau}

\maketitle

\begin{abstract}
As Vision-Language Models are increasingly deployed in safety-critical applications, the trustworthiness of their explanations becomes crucial. Explainable AI (XAI) methods for Vision-Language Models often suffer from \textit{semantic hallucination}, where attribution maps highlight prominent image regions even when prompted with incorrect text descriptions (e.g., highlighting a dog when prompted ``cat''). Although this problem is widespread, a formal mathematical analysis of XAI methods and CLIP embeddings is largely missing in the literature. We demonstrate that this phenomenon is not specific to a single architecture but is a fundamental consequence of Linear Semantic Leakage in high-dimensional embedding spaces. We propose a unified theoretical framework, Linear Semantic Attribution (LSA), which generalizes across discriminative methods. We introduce OSP, a geometric intervention that utilizes the residual property of OMP to disentangle unique semantic signals from shared concepts. We prove theoretically and demonstrate empirically that OSP minimizes hallucination by orthogonalizing the query vector against distractor concepts, rendering the attribution model blind to shared features while preserving fidelity for correct prompts. Our code is available at: \url{https://github.com/emirhanbilgic/Orthogonal-Semantic-Projection}
\keywords{Vision-Language Models, Explainable AI, Hallucination, Orthogonal Matching Pursuit, CLIP, SigLIP, Diffusion Models}
\end{abstract}

\section{Introduction}
\label{sec:intro}

Vision-Language Models (VLMs) and multimodal diffusion models have become central to modern computer vision. Well-known examples include CLIP~\cite{radford2021learning} and SigLIP~\cite{zhai2023sigmoid} for image--text understanding, as well as generative architectures such as Stable Diffusion~\cite{rombach2022high}, Flux~\cite{esser2024scaling}, and PixArt-$\alpha$~\cite{chen2024pixart}. These systems have enabled significant progress in areas such as autonomous driving~\cite{zhou2024vision}, medical image synthesis~\cite{kazerouni2023diffusion}, and zero-shot robotics~\cite{nasiriany2024pivot}. However, as they move from research prototypes to real-world deployment, a critical challenge emerges: they are difficult to interpret. Their internal representations function largely as ``black boxes,'' undermining trust, especially in safety-critical environments.

To better understand these models, researchers commonly apply post-hoc explainability methods originally developed for standard Deep Neural Networks (DNNs). These methods produce saliency maps or attribution masks that highlight important regions in an image. Yet when applied to multimodal models, a new and severe problem appears: \textit{semantic hallucination}. Saliency methods often highlight objects that are completely absent from the image, for instance, highlighting a dog when prompted with ``cat.'' This undermines the trustworthiness of Explainable AI (XAI) and, more broadly, reduces the possibility of effectively debugging AI models and makes AI less safe.

\begin{figure}[t]
\centering
\includegraphics[width=0.95\textwidth]{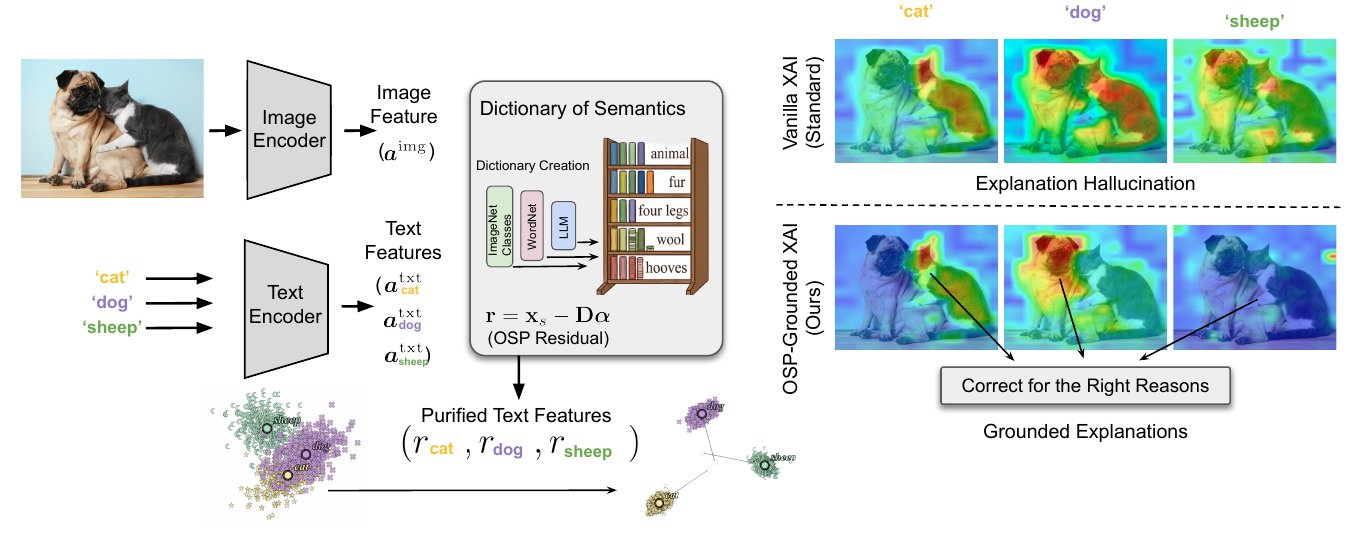}
\caption{Overall pipeline of OSP. Our method performs a geometric intervention in the multimodal embedding space to disentangle shared semantic components that cause hallucination, resulting in more faithful and grounded explanations.}
\label{fig:pipeline}

\end{figure}

Hallucination is commonly defined as generated content that is incorrect or not grounded in the input~\cite{huang2025survey}. In this work, we show that hallucination does not only affect model \emph{outputs}: it also affects \emph{explanations}. We define \textbf{explanation hallucination} as a situation where a saliency map highlights image regions that are not truly related to the given text prompt, as illustrated in Fig.~\ref{fig:pipeline}, where the explanation hallucinates the presence of sheep. This problem is critical because the explanation appears convincing but is not correctly grounded in the text: the model may seem correct, but it is ``right for the wrong reasons.''
\begin{wrapfigure}{r}{0.55\textwidth}
\vspace{-20pt}
\centering
\includegraphics[width=\linewidth]{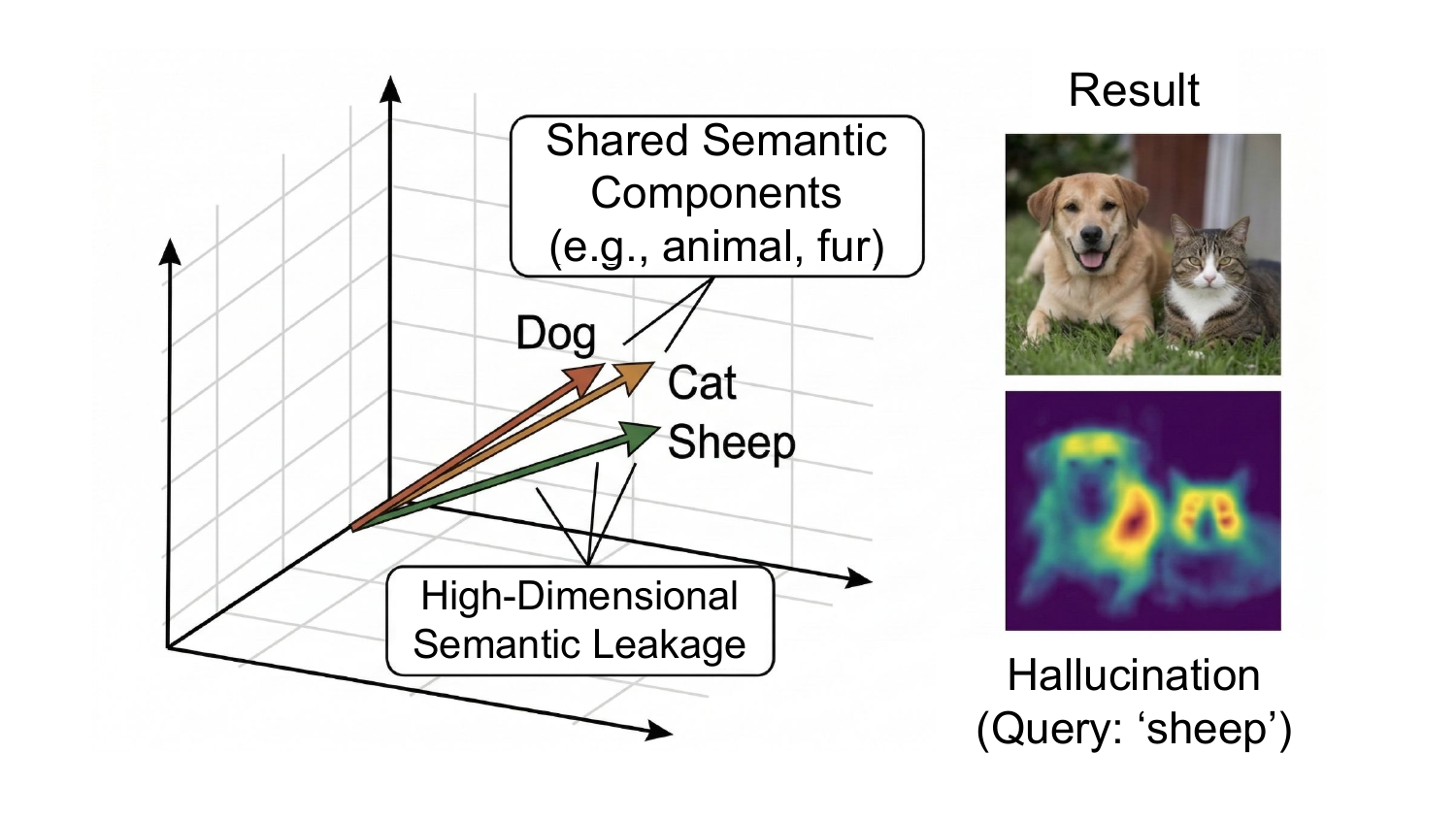}
\caption{\scriptsize 3D text embeddings and hallucination. Without disentanglement, the distractor is "distracting" the heatmaps.}
\vspace{-20pt}
\label{fig:3Dtextembedding}
\end{wrapfigure}

We argue that this issue stems from the geometry of the multimodal embedding space. Image and text features are represented in a shared high-dimensional space. Because these embeddings are not orthogonal, semantically related but distinct concepts share common directional components (see Fig.\ref{fig:3Dtextembedding}). 
When saliency maps are computed using these embeddings, these shared components cause the method to highlight irrelevant regions, and especially "wrong objects": producing hallucinated explanations. While not addressing the hallucination of attribution maps, there have been multiple works on image embeddings~\cite{papadimitriou2025interpreting, fel2025archetypal, pach2025sparse}, but to the best of our knowledge, we are the first to provide such work on text embedding to improve attribution XAI methods.

In this paper, we introduce \textbf{Orthogonal Semantic Projection (OSP)}, a geometric intervention designed to reduce hallucinations in saliency-based explanations. OSP is grounded in a simple geometric idea: before computing the saliency map, we remove the shared semantic components in the text embedding that cause hallucination.

To achieve this, we use a dictionary learning framework based on Orthogonal Matching Pursuit (OMP)~\cite{pati1993orthogonal}. This sparse decomposition allows us to isolate and remove the parts of the embedding that introduce semantic leakage. We then compute the saliency map using a refined, orthogonalized representation of the text query. Importantly, OSP acts as a \textbf{generalized framework} and is \textit{plug-and-play}: it can be applied to a wide range of Vision-Language models and saliency methods without retraining the underlying model.

Crucially, this represents a fundamentally new paradigm for interpretability: instead of decomposing the image embedding space~\cite{textspan2024} to find visual concepts, we mathematically decompose the \textit{text} embedding space to isolate pure semantic directions. We summarize our contributions as follows: 
\begin{enumerate}
    \item \textbf{Geometric Analysis of Hallucination:} We provide a formal mathematical framework that bridges the gap between grounding failures and hallucinatory outputs, characterizing hallucination as a function of the latent misalignment (i.e.\ non-orthogonality) between semantic embedding directions and offering a geometric interpretation of how grounding errors propagate into nonsensical attributions.
    \item \textbf{A Universal XAI Technique:} OSP is compatible with multiple architectures, including CLIP, SigLIP, and diffusion-based models, and can be integrated with different saliency methods as a plug-and-play module.
    \item \textbf{Extensive Validation:} We evaluate OSP across 3 foundational models and 5 state-of-the-art XAI methods, demonstrating consistent improvements.
    \item \textbf{Practical Use Cases and User Study:} We demonstrate improved interpretability reliability, backed by empirical results and a user study.
\end{enumerate}

\section{Related Work}

\subsection{Vision-Language Encoder Attribution Methods}
Attribution methods and saliency maps aim to demystify the ``black box'' nature of Vision-Language Models (VLMs) by grounding textual concepts in visual regions. Discriminative approaches adapt gradient-based strategies to multimodal architectures. For instance, GradCAM \cite{selvaraju2017grad} has been successfully applied to CLIP to identify visual features that maximize similarity with specific text queries. For Vision Transformers (ViTs), methods like LeGrad \cite{legrad2025} provide feature formation sensitivity by tracing gradients to internal patch embeddings, while CheferCAM \cite{chefer2021} directly propagates relevance scores using attention maps and their gradients. Generative models, such as Stable Diffusion, rely on methods like DAAM \cite{tang-etal-2023-daam}, which aggregates cross-attention maps to spatially localize the influence of prompt words. Recent works such as TextSpan \cite{textspan2024} evaluate concept attribution by decomposing image representations into text-based constituents. However, unlike these existing explainability approaches that accept raw entangled query embeddings and refine spatial attribution post-hoc or within bottlenecks, our method performs a structured geometric intervention directly in the continuous representation space prior to attribution to proactively resolve spatial leakage caused by shared semantic similarities.

\subsection{Hallucination in Vision-Language Models}
Despite their remarkable zero-shot capabilities, VLMs and Large Vision-Language Models (LVLMs) frequently suffer from different types of hallucinations.
The severity of this issue has motivated the creation of targeted evaluation benchmarks. Several works tried to address object hallucination in the Visual Question Answering (VQA) setting: the CHAIR metric \cite{rohrbach2018object} quantifies object hallucinations in image captioning, the POPE benchmark \cite{li2023evaluating} evaluates object hallucination in LVLMs through yes-no questions, and THRONE \cite{kaul2024throne} benchmarks hallucination in free-form generations. To mitigate concept hallucination within Concept Bottleneck Models (CBMs), Kazmierczak et al. \cite{kazmierczak2025enhancing} introduce CHILI, a technique that disentangles image embeddings to localize concept pixels. However, other benchmarks focus on VQA, and CHILI focuses on concept prediction within CBMs and does not address the hallucination of attribution maps. Our method specifically targets \textit{object attribution hallucination}. While these diagnostic tools highlight the prevalence of hallucination as an empirical phenomenon to be benchmarked or mitigated via targeted disentanglement, our work uniquely establishes a formal, mathematical link connecting the root cause of these errors backward into the linear geometry of the multimodal representation space itself,  while it can be easily integrated with existing attribution methods.

\subsection{Representation Geometry of Vision-Language Encoders}

Emerging research characterizes the high-dimensional latent spaces of networks like CLIP not as uniform structures, but as complex geometric manifolds. The Linear Representation Hypothesis \cite{park2023linear} suggests that high-level concepts and hierarchical structures are encoded as linear directions. Furthermore, the Platonic Representation Hypothesis \cite{huh2024platonic} posits that diverse architectures naturally converge toward a shared geometric model of reality. Within contrastive models like CLIP, geometric phenomena such as the ``modality gap'' \cite{liang2022mind} reveal that image and text embeddings reside on separated, offset hyperspheres, severely restricting the angular portion of the space utilized by the model. To disentangle these compressed spaces, Sparse Autoencoders (SAEs) \cite{fel2025archetypal} have recently been employed to extract monosemantic, interpretable directions from dense activations. While prior works focus on analyzing structures like the modality gap or sparse features, OSP uniquely utilizes these geometric insights as a mitigation tool, strictly dynamically orthogonalizing directions at inference time to correct flawed model behavior and concept bleeding.

\section{The Geometry of Hallucination}
\subsection{Background on Saliency Map Methods for Multimodal Models}
\textit{Notations.} We consider a DNN denoted by $f_{\vomega}(\cdot)$, where $\vomega$ represents the model parameters. In this paper, we focus on multimodal models that take two inputs: an image $\vx^{\text{img}}$, and a text prompt $\vx^{\text{txt}}$.
In diffusion-based 
models, the image input 
initially consist of noise that is progressively refined during generation.

\textit{Feature Extractors and Activations.}
We define the saliency maps based on two distinct categories of model activations: \textbf{(1) Forward Activations:} Let $\mathbf{A}^{\text{MHA}}_{l,h}(\vx^{\text{img}})$ denote the activation map corresponding to the $h$-th attention head in the $l$-th layer of the Transformer. Similarly, let $\mathbf{A}^{\text{MLP}}_{l}(\vx)$ represent the output activations of the Multi-Layer Perceptron (MLP) sub-layer at layer $l$. We denote the complete set of forward activations as $ \{ \mathbf{A}^{\text{MHA}}_{l,h}(\vx^{\text{img}}), \mathbf{A}^{\text{MLP}}_{l}(\vx^{\text{img}}) \}_{l,h=1}^{L,H}$.
\textbf{(2) Backward Activations:} To incorporate local sensitivity and importance weighting, we utilize gradient-based signals derived from the backward pass. Let $S_c = \langle \va^{\text{txt}}_c, \va^{\text{img}} \rangle$ be the cosine similarity score for class or concept $c$. Following the formulation in LeGrad \cite{legrad2025} and CheferCAM \cite{chefer2021}, we define the backward activation set as $ \{ \nabla_{\mathbf{A}^{\text{MHA}}_{l,h}} S_c \}_{l,h=1}^{L,H}$, where $\nabla_{\mathbf{A}^{\text{MHA}}_{l,h}} S_c$ represents the gradient of the class similarity score with respect to the $h$-th attention head in the $l$-th layer.

\textit{Text and Image Embeddings.}
Let us consider a set of textual classes or prompts:
\(
\{ \vx^{\text{txt}}_c \}_c.
\)
Each text input is mapped to an embedding:
\(
\{ \va^{\text{txt}}_c \}_c, \quad \va^{\text{txt}}_c \in \mathbb{R}^h,
\)
where $h$ is the embedding dimension. Similarly, we denote by $\va^{\text{img}}_j(\vx^{\text{img}})$ the image embedding extracted at layer $j$. Depending on the architecture, for example, for CLIP or SigLIP, this may correspond to the class token, and for diffusion models, this may correspond to an averaged image token or latent representation.
For simplicity, we write $\va^{\text{img}}_j$. 

\textit{General Formulation of Saliency Maps.}
Most saliency methods aim to produce a single spatial importance map $\mathbf{V}_c(\mathbf{x})$ for a given class or prompt $c$. We observe that many methods can be written as a weighted combination of activation maps, where the weights depend on the text embedding or on its interaction with the image embedding. Formally, the saliency map can be expressed as:

\small
\begin{equation}
\begin{aligned}\label{eq:method_cases}
\mathbf{V}_c(\vx^{\text{img}})  = \\
    ~~ &\begin{cases}
        \mathbf{w}^c \, \mathbf{A}^{\text{MLP}}_{L}(\vx^{\text{img}})\text{ with } \mathbf{w}^c = \va^{\text{txt}}_c  , & \text{(\textbf{CAM})}, \\
        \mathbf{w}^c \, \mathbf{A}^{\text{MLP}}_{L}(\vx^{\text{img}})\text{ with } \mathbf{w}^c = \frac{\partial}{\partial \va^{\text{txt}}_c}\langle \va^{\text{txt}}_c, \va^{\text{img}}_d \rangle, & \text{(\textbf{GradCAM}~\cite{selvaraju2017grad})}, \\
        \sum_{l,h} \mathbf{w}_j \, \nabla_{\mathbf{A}^{\text{MHA}}_{l,h}} S_c  \text{ with } \mathbf{w}_j = 1/(HL) , & \text{(\textbf{LeGrad}~\cite{legrad2025})}, \\
        \left[ \prod_{l=1}^L \left( \mathbf{I} + \frac{1}{H}\sum_{h} \bigl(\nabla_{\mathbf{A}^{\text{MHA}}_{l,h}} S_c \odot \mathbf{A}^{\text{MHA}}_{l,h}(\vx^{\text{img}})\bigr)^{+} \right) \right]_{\text{CLS}} , & \text{(\textbf{CheferCAM}~\cite{chefer2021})}, \\
        \sum_{h} \mathbf{w}_j \bigl(\nabla_{\mathbf{A}^{\text{MHA}}_{L,h}} S_c \odot \mathbf{A}^{\text{MHA}}_{L,h}(\vx^{\text{img}})\bigr)^{+}, \text{ with } \mathbf{w}_j = 1/H , &\ \text{(\textbf{AttentionCAM}~\cite{chefer2020})}, \\
        \frac{1}{T} \sum_{t=1}^{T} \frac{1}{|\mathcal{H}|} \sum_{h \in \mathcal{H}} \mathbf{A}^{\text{cross}}_{(t,h)}(\vx^{\text{img}}, c), & \text{(\textbf{DAAM}~\cite{tang-etal-2023-daam})},
    \end{cases}
\end{aligned}
\end{equation}
\normalsize
where:  $(\cdot)^{+}$ denotes the ReLU activation function, $\odot$ denotes the element-wise Hadamard product, $\mathbf{A}^{\text{cross}}_{(t,h)}(\vx, c)$ refers to the cross-attention map for token $c$ at diffusion timestep $t$ and head $h$. This formulation shows that most multimodal saliency methods follow a similar structure where, before the aggregation, the multimodal representation can be viewed as living in a space of dimension $d \times H\times L$ (where $d$ is the embedding dimension), and then in a space of 1 dimension. Therefore, saliency extraction can be interpreted as a dimensionality reduction operation that maps this high-dimensional multimodal representation to a single scalar importance value per spatial location. This unified view will allow us to analyze how interactions between text and image embeddings can introduce semantic leakage and lead to hallucinated explanations.

\subsection{From Linear Attribution to Semantic Leakage}

\textit{Linear Dependence on Text Embeddings.} Most saliency methods for vision-language models can be written under the general form
\begin{equation}
\mathbf{V}_c(\vx)
=
f\big(\bar{\va}^{\text{img}}, \va^{\text{txt}}_c\big),
\end{equation}
where the operator $f(\cdot,\cdot)$ characterizes the interaction mechanism between the visual features and the text query, mapping the joint multimodal representation into the spatial saliency domain as described in Eq.~\eqref{eq:method_cases}.  In many practical instantiations (e.g., similarity-based scores or their gradients), this interaction behaves linearly with respect to the text embedding, or can be locally approximated as such (see Appendix~\ref{sec:linearity_appendix}). We rely on this assumption for the rest of this section.

\textit{Geometric Structure of the Embedding Space.} Consider two classes $A$ and $B$ with text embeddings
$\va^{\text{txt}}_A, \va^{\text{txt}}_B \in \mathbb{R}^d$.
In contrastive models, embeddings are normalized:
\begin{equation}
\|\va^{\text{txt}}_A\|_2
=
\|\va^{\text{txt}}_B\|_2
=
1.
\end{equation}

Let $\theta_{AB}$ denote the angle between them:
\begin{equation}
\langle
\va^{\text{txt}}_A,
\va^{\text{txt}}_B
\rangle
=
\cos(\theta_{AB}).
\end{equation}
Since semantically related concepts share directions in embedding space,
$\cos(\theta_{AB})$ is typically non-zero.
To empirically evaluate this, we computed the pairwise cosine similarities between the text embeddings of all 1,000 ImageNet classes. Across all 499,500 unique pairs, the similarity is never zero: the minimum similarity is 0.0967, with a mean of 0.5925, a median of 0.6082, and a maximum of 1.0000. This confirms that orthogonal concept embeddings are practically non-existent in standard VLM representation spaces. We denote by
$\va^{\text{txt}}_{B\perp A}$
the component of $\va^{\text{txt}}_B$
orthogonal to $\va^{\text{txt}}_A$.

\paragraph{Definition (Hallucinated Attribution).}

Let image $\vx$ belong to class $A$.
Let $\mathbf{V}_c(\vx)$ denote the saliency map extracted for class $c$.
We say that hallucination occurs if
\begin{equation}
\label{eq:hallucination_def}
mean_{\vx} \mathbf{V}_B(\vx)
\;\geq\;
mean_{\vx} \mathbf{V}_A(\vx),
\end{equation}
even though the image contains only class $A$.

\begin{lemma}[Linear Semantic Leakage]
Let image $\vx$ contain an object of class $A$. Let us denote $\va^{\text{img}}$ its image embedding.
If
\begin{equation}
\langle
\va^{\text{txt}}_A,
\va^{\text{txt}}_B
\rangle \neq 0,
\end{equation}
then the saliency map $\mathbf{V}_B(\vx)$
contains a component proportional to
$\mathbf{V}_A(\vx)$.
\end{lemma}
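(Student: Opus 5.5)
We work under the linearity assumption stated just before the lemma: for a fixed image, the saliency operator is linear in the text embedding. Concretely, we write $\mathbf{V}_c(\vx) = f(\bar{\va}^{\text{img}}, \va^{\text{txt}}_c) = \mathcal{L}_{\vx}(\va^{\text{txt}}_c)$, where $\mathcal{L}_{\vx}:\mathbb{R}^d \to \mathbb{R}^{\text{spatial}}$ is a linear map depending only on the image. The proof then reduces to an orthogonal decomposition of $\va^{\text{txt}}_B$ along $\va^{\text{txt}}_A$, pushed through $\mathcal{L}_{\vx}$.

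\textbf{Step 1: decompose the distractor embedding.} Using the normalization $\|\va^{\text{txt}}_A\|_2 = 1$, we write
\begin{equation*}
\va^{\text{txt}}_B = \cos(\theta_{AB})\,\va^{\text{txt}}_A + \va^{\text{txt}}_{B\perp A},
\qquad
\va^{\text{txt}}_{B\perp A} := \va^{\text{txt}}_B - \langle \va^{\text{txt}}_A, \va^{\text{txt}}_B\rangle \va^{\text{txt}}_A .
\end{equation*}
This is the Gram--Schmidt projection, and by construction $\langle \va^{\text{txt}}_{B\perp A}, \va^{\text{txt}}_A\rangle = 0$.

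\textbf{Step 2: apply linearity.} Applying $\mathcal{L}_{\vx}$ to the decomposition gives
\begin{equation*}
\mathbf{V}_B(\vx) = \cos(\theta_{AB})\,\mathbf{V}_A(\vx) + \mathcal{L}_{\vx}(\va^{\text{txt}}_{B\perp A}).
\end{equation*}
The first term is proportional to $\mathbf{V}_A(\vx)$, with coefficient $\cos(\theta_{AB})$. By hypothesis this coefficient is nonzero, which establishes the claim.

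\textbf{Step 3: instantiate for the concrete methods.} We check the linearity assumption on the methods in Eq.~\eqref{eq:method_cases}.
\begin{itemize}
\item For CAM the weights are $\mathbf{w}^c = \va^{\text{txt}}_c$, so the map is exactly linear in the text embedding.
\item For GradCAM with the inner-product score, the gradient with respect to the text embedding does not itself depend on $c$. The natural linear object in that case is the gradient of $S_c$ with respect to the image activations.
\item For LeGrad and AttentionCAM, $\nabla_{\mathbf{A}} S_c = \nabla_{\mathbf{A}}\langle \va^{\text{txt}}_c, \va^{\text{img}}\rangle = (\partial \va^{\text{img}}/\partial \mathbf{A})^\top \va^{\text{txt}}_c$. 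This is linear in $\va^{\text{txt}}_c$, so the decomposition passes through the aggregation over heads and layers.
\end{itemize}

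\textbf{Main obstacle.} The hard part is the methods that apply nonlinearities after the gradient. These are the ReLU $(\cdot)^+$ in CheferCAM and AttentionCAM, the product over layers in CheferCAM, and the softmax in the DAAM cross-attention. For these I would not claim exact linearity. Instead, I would rely on the local linearization referenced in Appendix~\ref{sec:linearity_appendix}: take a first-order Taylor expansion of $f$ in its second argument around $\va^{\text{txt}}_A$, or restrict to the region where the ReLU activation pattern is fixed. On that region $(\cdot)^+$ acts as a fixed diagonal mask, which makes the map piecewise linear.

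The proportional-component statement then holds exactly within a linear piece and up to a remainder $O(\|\va^{\text{txt}}_B - \va^{\text{txt}}_A\|^2)$ under the Taylor approach. I would state the lemma rigorously for the linear case and present the nonlinear case as this approximation.
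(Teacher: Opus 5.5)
Your proposal is correct and is essentially the paper's proof: decompose $\va^{\text{txt}}_B = \cos(\theta_{AB})\,\va^{\text{txt}}_A + \va^{\text{txt}}_{B\perp A}$ and push this through the assumed linearity of $f$ in the text embedding to isolate the ghost term $\cos(\theta_{AB})\,\mathbf{V}_A(\vx)$; your Step~3 and your caveats for the nonlinear methods mirror the paper's appendix on linearity. Your observation that the GradCAM weight $\partial_{\va^{\text{txt}}_c}\langle \va^{\text{txt}}_c, \va^{\text{img}}\rangle = \va^{\text{img}}$ does not depend on $c$ is more careful than the paper's appendix, but fixing the ReLU pattern does not make CheferCAM piecewise linear (its product over $L$ layers stays a degree-$L$ polynomial in the text embedding), so for CheferCAM only your Taylor-expansion route applies.
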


\begin{proof}

Because $\|\va^{\text{txt}}_A\|_2 = 1$,
we decompose $\va^{\text{txt}}_B$ via orthogonal projection:

\begin{equation}
\va^{\text{txt}}_B
=
\langle
\va^{\text{txt}}_B,
\va^{\text{txt}}_A
\rangle
\va^{\text{txt}}_A
+
\va^{\text{txt}}_{B\perp A}.
\end{equation}

Using the cosine identity:

\begin{equation}
\va^{\text{txt}}_B
=
\cos(\theta_{AB}) \,
\va^{\text{txt}}_A
+
\va^{\text{txt}}_{B\perp A}.
\end{equation}

All attribution methods considered in Eq.~\eqref{eq:method_cases}
are linear in the text embedding and can be written as

\begin{equation}
\mathbf{V}_c(\vx)
=
f \left( \bar{\va}^{\text{img}} , \va^{\text{txt}}_c  \right).
\end{equation}
where $\bar{\va}^{\text{img}}$ denotes the specific image representation (e.g., intermediate activations or normalized embeddings) utilized by the respective algorithm. The operator $f(\cdot, \cdot)$ characterizes the interaction mechanism between the visual features and the text query. We assume that $f(\cdot, \cdot)$ is linear with the text input, hence we have:

\begin{equation}
\mathbf{V}_B(\vx)
=
f \left(\bar{\va}^{\text{img}} ,
\left(
\cos(\theta_{AB}) \va^{\text{txt}}_A
+
\va^{\text{txt}}_{B\perp A}
\right)  \right).
\end{equation}

By linearity:

\begin{equation}
\mathbf{V}_B(\vx)
=
\cos(\theta_{AB})
\mathbf{V}_A(\vx)
+
f\left( \bar{\va}^{\text{img}} ,
\va^{\text{txt}}_{B\perp A}\right).
\end{equation}

The first term is proportional to the true saliency map
$\mathbf{V}_A(\vx)$.
This term is the \emph{ghost signal}.
\end{proof}

\textit{Implications.}
This result shows that hallucination is directly controlled by
$\cos(\theta_{AB})$.  We empirically validate it in Appendix~\ref{sec:correlation_appendix}: "Correlation between Cosine Similarity and Hallucination".
If the angle between embeddings is small,
the ghost signal can dominate the residual term. Therefore, reducing hallucination amounts to reducing
the projection of a query embedding onto unwanted semantic directions. This phenomenon demonstrates
the semantic contamination of class/concept $A$'s saliency map by the representation of class/concept $B$. Such interference suggests that the attribution mechanism fails to isolate class/concept-specific features when embeddings are non-orthogonal. We provide a comprehensive extension of this analysis to multi-class scenarios in Appendix~\ref{sec:multiclass_appendix}.

\section{Methodology}
\label{sec:method}

\subsection{Sparse Dictionary Decomposition of Text Embeddings}

We formulate semantic purification as a sparse decomposition problem in the joint embedding space.
Let $\va^{\text{txt}}_{\text{target}} \in \mathbb{R}^d$ denote the unit-normalized embedding of a target concept obtained from a pretrained vision-language model.
Let 
\(
\mathbf{D}
=
[\va^{\text{txt}}_{D_1}, \dots, \va^{\text{txt}}_{D_k}]
\in \mathbb{R}^{d \times k}
\)
be a 
dictionary of semantics embeddings, each normalized to unit norm.

In sparse representation theory~\cite{elad2010sparse}, a signal $\vx_s \in \mathbb{R}^d$ can be approximated as
\begin{equation}
\vx_s \approx \mathbf{D}\bm{\alpha},
\end{equation}
where $\bm{\alpha} \in \mathbb{R}^k$ is sparse.
The residual
\begin{equation}
\mathbf{r} = \vx_s - \mathbf{D}\bm{\alpha}
\end{equation}
captures the component of the signal that cannot be explained by the selected atoms.

In our setting, we treat $\va^{\text{txt}}_{\text{target}}$ as the signal and the distractor embeddings as dictionary atoms.
Our objective is not reconstruction accuracy per se, but rather \emph{orthogonal purification}: we aim to remove the semantic components shared with distractors and retain only the unique semantic direction of the target.
\subsection{Orthogonal Matching Pursuit for Semantic Purification}

To compute this decomposition we use Orthogonal Matching Pursuit (OMP)~\cite{pati1993orthogonal}, a greedy algorithm that iteratively selects atoms most correlated with the current residual.

Starting from 
\(
\mathbf{r}^{(0)} = \va^{\text{txt}}_{\text{target}},
\)
OMP performs the following steps:

\begin{enumerate}
    \item Select the atom most correlated with the current residual:
    \begin{equation}
    j^\star = \arg\max_{j \notin \Lambda}
    \left|
    \inner{\mathbf{r}^{(t-1)}}{\va^{\text{txt}}_{D_j}}
    \right|.
    \end{equation}
    
    \item Update the active set $\Lambda$ of selected atoms ($\mathbf{D}_{\Lambda} =[\va^{\text{txt}}_{D_j}]_{j \in \Lambda^{(t)}}$).
    
    \item Recompute the residual via orthogonal projection:
    \begin{equation}
    \mathbf{r}^{(t)}
    =
    \va^{\text{txt}}_{\text{target}}
    -
    \mathbf{D}_\Lambda
    (\mathbf{D}_\Lambda^\top \mathbf{D}_\Lambda)^{-1}
    \mathbf{D}_\Lambda^\top
    \va^{\text{txt}}_{\text{target}}.
    \end{equation}
\end{enumerate}

A fundamental property of OMP is:

\begin{equation}
\inner{\mathbf{r}^{(T)}}{\va^{\text{txt}}_{D_j}} = 0,
\qquad \forall j \in \Lambda,
\end{equation}
meaning the final residual is strictly orthogonal to all selected distractors.

We then normalize:
\begin{equation}
\mathbf{r}
=
\frac{\mathbf{r}^{(T)}}{\|\mathbf{r}^{(T)}\|_2}.
\end{equation}

\subsection{Orthogonal Semantic Projection (OSP)}

We define \textbf{Orthogonal Semantic Projection (OSP)} as the use of the OMP residual as the purified query embedding.

\paragraph{Definition.}
Given a target embedding $\va^{\text{txt}}_{\text{target}}$ and a 
dictionary of semantics $\mathbf{D}$, OSP outputs
\begin{equation}
\mathbf{r}
=
\text{OMP-residual}(\va^{\text{txt}}_{\text{target}}, \mathbf{D}),
\end{equation}
which satisfies
\begin{equation}
\inner{\mathbf{r}}{\va^{\text{txt}}_{D_i}} = 0,
\quad \forall i \in \Lambda.
\end{equation}

Intuitively, OSP decomposes the target embedding into
\begin{equation}
\va^{\text{txt}}_{\text{target}}
=
\underbrace{\mathbf{D}\bm{\alpha}}_{\text{shared semantics}}
+
\underbrace{\mathbf{r}}_{\text{unique semantics}},
\end{equation}
and retains only the residual component.

Because multimodal saliency methods are linear in the text embedding,
replacing $\va^{\text{txt}}_{\text{target}}$ by $\mathbf{r}$ eliminates ghost signals originating from shared semantic directions.

\subsection{Application to Attribution Methods}

\paragraph{Discriminative Methods.}
For CAM, GradCAM, LeGrad, CheferCAM, and AttentionCAM,
the saliency map can be written as
$\mathbf{V}_c(\vx)
= f \left( \bar{\va}^{\text{img}} , \va^{\text{txt}}_c  \right).$
Replacing $\va^{\text{txt}}_c$ with $\mathbf{r}$ yields
\begin{equation}
\mathbf{V}^{\text{OSP}}_c(\vx)
= f \left( \bar{\va}^{\text{img}} , \vr \right)
\end{equation}

If a distractor concept $D_i$ explains a region of the image,
then
\(
\inner{\mathbf{r}}{\va^{\text{txt}}_{D_i}} = 0
\)
implies that the corresponding contribution vanishes.
Thus, ghost saliency components are removed by construction.

\paragraph{Diffusion-Based Attribution (DAAM).}

In diffusion models, attribution arises from cross-attention maps.
Let $\mathbf{K}^{(l,h)}_{\text{target}}$ denote the key vector associated with the target token in layer $l$, head $h$.
We apply OSP in key space:

\begin{equation}
\mathbf{r}^{(l,h)}
=
\text{OMP-residual}
\left(
\mathbf{K}^{(l,h)}_{\text{target}},
\{\mathbf{K}^{(l,h)}_{D_i}\}
\right).
\end{equation}

The purified keys are substituted before attention softmax.
Since attention scores depend linearly on keys before normalization,
orthogonality ensures that distractor-aligned attention weights are suppressed.

\subsection{Dictionary of Semantics Construction}
\label{sec:dictionary}

For each target concept, we extract the \textit{dictionary of semantics}: a structured collection of semantically related distractor concepts whose text embeddings form the atoms of the 
dictionary of semantics $\mathbf{D}$.

To build the 
dictionary of semantics $\mathbf{D}$, we explored four accessible dictionary creation strategies: (1) using ImageNet classes, (2) incorporating ImageNet classes with WordNet semantic relations, and generating customized dictionaries via accessible Large Language Models, specifically (3) Gemini 3 Flash and (4) GPT-OSS 120B. Detailed definitions for these strategies are provided in Appendix~\ref{sec:dict_appendix}. For all approaches, the number of dictionary atoms is a fixed hyperparameter kept identical for both positive and negative cases. Furthermore, we filter out any atoms that exhibit excessively high cosine similarity to the target concept to avoid deleting the core semantic information. The hyperparameters for different dictionaries vary in a small space. More can be found in Appendix~\ref{sec:extended_quantitative_results}.

As shown in Table~\ref{tab:avg_per_dictionary}, all dictionaries consistently perform well and improve the AUROC. The dictionary generated with Gemini 3 Flash is selected as our primary approach. We carefully designed the LLM prompt to generate dictionary atoms structured into three specific semantic groups:
\begin{itemize}
    \item \textbf{Group 1: Visual Confusers} (15 concepts) -- objects that share strong visual features with the target concept.
    \item \textbf{Group 2: Co-occurring Context} (15 concepts) -- objects or background elements frequently found in the same scenes as the target.
    \item \textbf{Group 3: Semantic Hierarchy} (10 concepts) -- structurally related concepts comprising 5 hypernyms and 5 hyponyms.
\end{itemize}

Our results are fully reproducible: the generated dictionary is open source and publicly available, and the exact prompt used to recreate the dictionary is provided in Appendix~\ref{sec:prompt_appendix}.

\section{Experiments}

\subsection{Experimental Setups}
Overall, we test our framework across 3 different models (CLIP~\cite{radford2021learning}, SigLIP~\cite{zhai2023sigmoid}, Stable Diffusion 2~\cite{rombach2022high}) and 5 different methods (GradCAM~\cite{selvaraju2017grad}, LeGrad~\cite{legrad2025}, CheferCAM~\cite{chefer2021}, AttentionCAM~\cite{chefer2020}, DAAM~\cite{tang-etal-2023-daam}). We evaluate the empirical performance of these methods on the ImageNet-Segmentation \cite{guillaumin2014imagenet}, a curated benchmark containing 4,276 images from the ImageNet validation set equipped with precise pixel-level annotations. We also conduct extra experiments on MS COCO, and the results are available in Appendix~\ref{sec:extended_quantitative_results}.
To further validate our approach, we conducted a human study using a dataset of 1.5k heatmaps. For this evaluation, we use the five methods, evaluated both with and without their corresponding OSP heatmaps across three different concepts/classes. We use 50
randomly sampled images from the validation sets of Pascal VOC 2012 \cite{everingham2015pascal} and MS COCO \cite{lin2014microsoft}, making it 5 $\times$ 2 $\times$ 3 $\times$ 50 = 1,500 heatmaps. We specifically selected images that contain at least 2 unique objects to effectively test multi-concept disambiguation in clustered scenes. More information about this experiment is provided in section ~\ref{sec:user_study} and Appendix ~\ref{sec:user_study_appendix}. 

\subsection{Quantitative Results}

We quantitatively evaluate OSP on the ImageNet-Segmentation benchmark using four standard metrics: mean Intersection-over-Union (mIoU), mean Average Precision (mAP), pixel accuracy, and the Area Under the ROC Curve (AUROC). Detailed results are presented in Table~\ref{tab:detailed_results}, where the dictionary of semantics was extracted using Gemini. 
For each method-model pair, we generate heatmaps under two distinct conditions: \textbf{Positive prompts}: The queried concept is present in the image. \textbf{Negative prompts}: The queried concept is absent.
A faithful attribution method should yield high scores for positive prompts and low scores for negative prompts. OSP is designed to widen this gap by enhancing positive scores while suppressing negative ones. 
In particular, we utilize AUROC to validate our theoretical claims regarding hallucination. As formalized in Eq.~\ref{eq:hallucination_def}, explanation hallucination occurs when the mean saliency of an absent distractor class $B$ is comparable to or exceeds that of a present class $A$. Since AUROC evaluates the model's ability to correctly rank these attributions, it serves as a direct measure of our method's success in mitigating hallucination.

Our results indicate a significant improvement in AUROC, suggesting that OSP substantially enhances hallucination detection. Notably, this improvement does not come at the cost of localization performance, as the mIoU for positive prompts remains robust. Furthermore, the reduction in scores for the \textit{negative prompts} demonstrates that inaccurate heatmaps are more effectively suppressed. Any improvement observed in the negative prompts, if present, is smaller than that of the positive prompts. In this set of experiments, increasing the gap between the positive and negative prompts was selected as the objective criterion. This trend 
is consistent across all dictionaries as shown in Table~\ref{tab:avg_per_dictionary} (see Appendix~\ref{sec:extended_quantitative_results} for further details). Importantly, OSP is not dependent on LLMs: dictionaries constructed from ImageNet class names or WordNet semantic relations already yield consistent improvements across all metrics. Furthermore, OSP is a computationally efficient operation, with a runtime of less than one second across all evaluated architectures and saliency methods. Detailed analysis of the computational overhead is provided in Appendix~\ref{sec:runtime_appendix}.

\begin{table}[t]
\centering
\caption{\textbf{Detailed mIoU, mAP, and AUROC} before and after OSP on ImageNet-Segmentation using the Gemini 3 Flash dictionary. $\Delta$ columns show the change introduced by OSP. For positive prompts (\textcolor{green!60!black}{$\uparrow$} desired), OSP should increase scores; for negative prompts (\textcolor{red!70!black}{$\downarrow$} desired), OSP should decrease them. Favorable changes are highlighted in \textbf{bold}.}
\label{tab:detailed_results}
\setlength{\tabcolsep}{3pt}
\scriptsize
\resizebox{\textwidth}{!}{%
\begin{tabular}{ll@{\hskip 4pt}c@{\hskip 6pt}rr@{\hskip 3pt}r@{\hskip 8pt}rr@{\hskip 3pt}r@{\hskip 8pt}rr@{\hskip 3pt}r}
\toprule
\textbf{Method} & \textbf{Model} & \textbf{Prompt} & \multicolumn{3}{c}{\textbf{mIoU}} & \multicolumn{3}{c}{\textbf{mAP}} & \multicolumn{3}{c} {\textbf{AUROC}}\\
\cmidrule(lr){4-6} \cmidrule(lr){7-9} \cmidrule(lr){10-12}
 & & & Base & +OSP & $\Delta$ & Base & +OSP & $\Delta$ & Base & +OSP & $\Delta$ \\
\midrule
\multirow{4}{*}{LeGrad}
 & \multirow{2}{*}{CLIP}   & Pos \textcolor{green!60!black}{$\uparrow$} & 58.66 & 61.33 & \textbf{+2.67} & 82.49 & 85.74 & \textbf{+3.25}& \multirow{2}{*}{ 79.62 } & \multirow{2}{*}{ 80.64 } & \multirow{2}{*}{ \textbf{+1.02 }} \\
 &                         & Neg \textcolor{red!70!black}{$\downarrow$} & 40.88 & 40.74 & \textbf{-0.14} & 67.99 & 70.86 & +2.87 & & & \\
\cmidrule(l){2-12}
 & \multirow{2}{*}{SigLIP} & Pos \textcolor{green!60!black}{$\uparrow$} & 49.51 & 51.18 & \textbf{+1.67} & 78.32 & 78.78 & \textbf{+0.46}& \multirow{2}{*}{ 74.42 } & \multirow{2}{*}{ 74.73 } & \multirow{2}{*}{ \textbf{+0.31 }} \\
 &                         & Neg \textcolor{red!70!black}{$\downarrow$} & 37.27 & 34.68 & \textbf{-2.59} & 63.63 & 62.84 & \textbf{-0.79}& & & \\
\midrule
\multirow{4}{*}{CheferCAM}
 & \multirow{2}{*}{CLIP}   & Pos \textcolor{green!60!black}{$\uparrow$} & 48.71 & 51.32 & \textbf{+2.61} & 80.36 & 82.60 & \textbf{+2.24}& \multirow{2}{*}{ 77.63 } & \multirow{2}{*}{ 80.14 } & \multirow{2}{*}{ \textbf{+2.51 }} \\
 &                         & Neg \textcolor{red!70!black}{$\downarrow$} & 44.88 & 42.99 & \textbf{-1.89} & 78.74 & 80.12 & +1.38 & & & \\
\cmidrule(l){2-12}
 & \multirow{2}{*}{SigLIP} & Pos \textcolor{green!60!black}{$\uparrow$} & 37.66 & 39.60 & \textbf{+1.94} & 73.49 & 75.95 & \textbf{+2.46}& \multirow{2}{*}{ 55.47 } & \multirow{2}{*}{ 62.64 } & \multirow{2}{*}{ \textbf{+7.17 }} \\
 &                         & Neg \textcolor{red!70!black}{$\downarrow$} & 36.65 & 38.45 & +1.80 & 72.38 & 74.67 & +2.29 & & & \\
\midrule
\multirow{4}{*}{AttentionCAM}
 & \multirow{2}{*}{CLIP}   & Pos \textcolor{green!60!black}{$\uparrow$} & 40.14 & 47.96 & \textbf{+7.82} & 70.34 & 76.62 & \textbf{+6.28}& \multirow{2}{*}{ 52.68 } & \multirow{2}{*}{ 67.73 } & \multirow{2}{*}{ \textbf{+15.05 }} \\
 &                         & Neg \textcolor{red!70!black}{$\downarrow$} & 33.34 & 36.63 & +3.29 & 65.74 & 67.55 & +1.81 & & & \\
\cmidrule(l){2-12}
 & \multirow{2}{*}{SigLIP} & Pos \textcolor{green!60!black}{$\uparrow$} & 50.01 & 52.12 & \textbf{+2.11} & 80.20 & 80.81 & \textbf{+0.61}& \multirow{2}{*}{ 80.49 } & \multirow{2}{*}{ 83.45 } & \multirow{2}{*}{ \textbf{+2.96 }} \\
 &                         & Neg \textcolor{red!70!black}{$\downarrow$} & 40.00 & 38.51 & \textbf{-1.49} & 72.70 & 70.16 & \textbf{-2.54}& & & \\
\midrule
\multirow{4}{*}{GradCAM}
 & \multirow{2}{*}{CLIP}   & Pos \textcolor{green!60!black}{$\uparrow$} & 44.68 & 51.43 & \textbf{+6.75} & 74.94 & 79.80 & \textbf{+4.86}& \multirow{2}{*}{ 65.39 } & \multirow{2}{*}{ 71.82 } & \multirow{2}{*}{ \textbf{+6.43 }} \\
 &                         & Neg \textcolor{red!70!black}{$\downarrow$} & 33.83 & 34.23 & +0.40 & 67.34 & 68.45 & +1.11 & & & \\
\cmidrule(l){2-12}
 & \multirow{2}{*}{SigLIP} & Pos \textcolor{green!60!black}{$\uparrow$} & 38.69 & 43.26 & \textbf{+4.57} & 71.43 & 73.61 & \textbf{+2.18}& \multirow{2}{*}{ 67.07 } & \multirow{2}{*}{ 73.01 } & \multirow{2}{*}{ \textbf{+5.94 }} \\
 &                         & Neg \textcolor{red!70!black}{$\downarrow$} & 34.78 & 37.21 & +2.43 & 68.53 & 69.52 & +0.99 & & & \\
\midrule
\multirow{2}{*}{DAAM}
 & \multirow{2}{*}{SD 2}   & Pos \textcolor{green!60!black}{$\uparrow$} & 65.71 & 66.34 & \textbf{+0.63} & 88.55 & 89.76 & \textbf{+1.21}& \multirow{2}{*}{ 83.07 } & \multirow{2}{*}{ 85.48 } & \multirow{2}{*}{ \textbf{+2.41 }} \\
 &                         & Neg \textcolor{red!70!black}{$\downarrow$} & 59.49 & 59.03 & \textbf{-0.46} & 86.39 & 87.33 & +0.94 & & & \\
\bottomrule
\end{tabular}
}
\end{table}

\begin{table}[t]
\centering
\caption{\textbf{Average performance per dictionary strategy} on ImageNet-Segmentation. Values are averaged over all 9 method--model pairs (4 discriminative methods $\times$ 2 models + DAAM). $\Delta$ columns show the mean change introduced by OSP. For positive prompts (\textcolor{green!60!black}{$\uparrow$} desired), OSP should increase scores; for negative prompts (\textcolor{red!70!black}{$\downarrow$} desired), OSP should decrease them. Favorable changes are highlighted in \textbf{bold}.}
\label{tab:avg_per_dictionary}
\setlength{\tabcolsep}{3pt}
\scriptsize
\resizebox{\textwidth}{!}{%
\begin{tabular}{l@{\hskip 4pt}c@{\hskip 6pt}rr@{\hskip 3pt}r@{\hskip 8pt}rr@{\hskip 3pt}r@{\hskip 8pt}rr@{\hskip 3pt}r}
\toprule
\textbf{Dictionary} & \textbf{Prompt} & \multicolumn{3}{c}{\textbf{mIoU}} & \multicolumn{3}{c}{\textbf{mAP}} & \multicolumn{3}{c}{\textbf{AUROC}}\\
\cmidrule(lr){3-5} \cmidrule(lr){6-8} \cmidrule(lr){9-11}
 & & Base & +OSP & $\Delta$ & Base & +OSP & $\Delta$ & Base & +OSP & $\Delta$ \\
\midrule
\multirow{2}{*}{ImageNet Classes}
 & Pos \textcolor{green!60!black}{$\uparrow$} & 48.20 & 51.96 & \textbf{+3.76} & 77.79 & 80.32 & \textbf{+2.53} & \multirow{2}{*}{ 70.64 } & \multirow{2}{*}{ 74.08 } & \multirow{2}{*}{ \textbf{+3.44 }} \\
 & Neg \textcolor{red!70!black}{$\downarrow$} & 42.09 & 41.02 & \textbf{-1.07} & 73.55 & 72.68 & \textbf{-0.87} & & & \\
\midrule
\multirow{2}{*}{ImageNet + WordNet}
 & Pos \textcolor{green!60!black}{$\uparrow$} & 47.98 & 51.40 & \textbf{+3.42} & 77.79 & 80.32 & \textbf{+2.53} & \multirow{2}{*}{ 70.43 } & \multirow{2}{*}{ 73.90 } & \multirow{2}{*}{ \textbf{+3.47 }} \\
 & Neg \textcolor{red!70!black}{$\downarrow$} & 40.12 & 41.20 & +1.08 & 71.49 & 72.77 & +1.28 & & & \\
\midrule
\multirow{2}{*}{Gemini 3 Flash}
 & Pos \textcolor{green!60!black}{$\uparrow$} & 48.20 & 51.61 & \textbf{+3.41} & 77.79 & 80.41 & \textbf{+2.62} & \multirow{2}{*}{ 70.65 } & \multirow{2}{*}{ 75.52 } & \multirow{2}{*}{ \textbf{+4.87 }} \\
 & Neg \textcolor{red!70!black}{$\downarrow$} & 40.12 & 40.27 & +0.15 & 71.49 & 72.39 & +0.90 & & & \\
\midrule
\multirow{2}{*}{GPT-OSS 120B}
 & Pos \textcolor{green!60!black}{$\uparrow$} & 48.20 & 51.34 & \textbf{+3.14} & 77.79 & 80.29 & \textbf{+2.50} & \multirow{2}{*}{ 70.65 } & \multirow{2}{*}{ 76.26 } & \multirow{2}{*}{ \textbf{+5.61 }} \\
 & Neg \textcolor{red!70!black}{$\downarrow$} & 40.12 & 39.90 & \textbf{-0.22} & 71.49 & 72.02 & +0.53 & & & \\
\bottomrule
\end{tabular}
}
\end{table}

\subsection{Visual Results}

Figure~\ref{fig:visual_all1}, and figures in Appendix \ref{sec:extra_visuals} present qualitative comparisons across all five attribution methods (AttentionCAM, CheferCAM, GradCAM, LeGrad, DAAM) with and without OSP, spanning both discriminative (CLIP/SigLIP) and generative (Diffusion) architectures. Each figure shows heatmaps for the target objects present in the image as well as a non-existent distractor class, illustrating how standard methods produce hallucinated attributions for the absent concept while OSP effectively suppresses these ghost signals. While standard methods often hallucinate absent concepts, OSP suppresses these signals and improves \textbf{positive prompt} localization. Thus, OSP both mitigates hallucination and refines true attributions. The following section evaluates these gains through a user study

\begin{figure}[t]
\centering
\includegraphics[width=\textwidth]{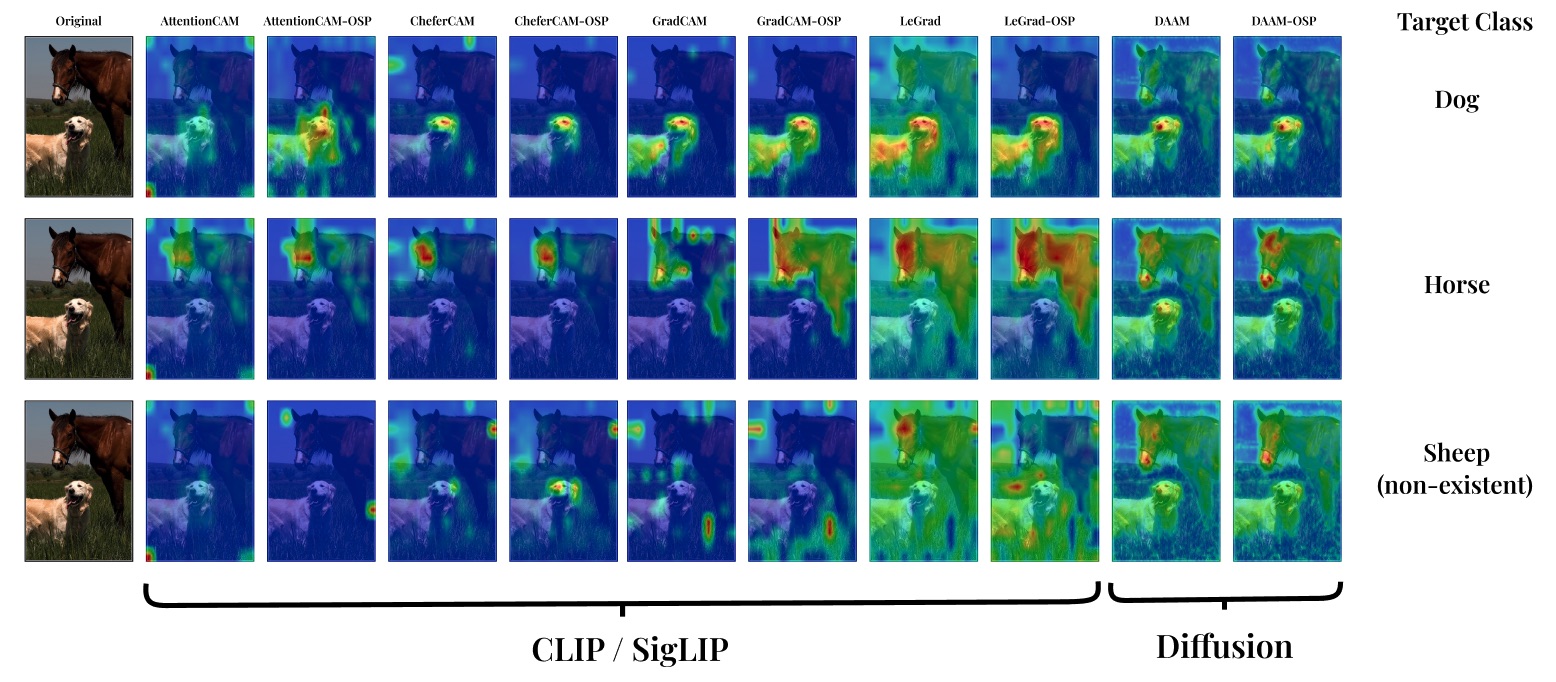}
\vspace{-2mm}
\caption{Visual comparison across all methods on a \textbf{horse--dog} scene. Rows correspond to queries for ``Dog'' (target), ``Horse'' (target), and ``Sheep'' (non-existent distractor). OSP consistently eliminates hallucinated attributions for the absent sheep across both discriminative and generative methods.}
\label{fig:visual_all1}
\end{figure}

\section{Practical Use Cases and User Study}

\subsection{Use Cases}
A key implication of the Hallucination Theorem is that the severity of hallucination is predictive solely from the geometry of the text embedding space, without requiring access to image data or model inference. This mathematical property enables several practical applications for deploying Vision-Language Models in real-world interpretation tasks.

\paragraph{Fine-Grained Concept Disambiguation.}
A frequent failure in multimodal interpretability is the inability to distinguish between closely related semantic concepts or sub-categories. This is often linked to the grounding issues inherent in these models \cite{kazmierczak2025enhancing, liu2024investigating}. This problem also affects Concept Bottleneck Models (CBMs) \cite{srivastava2024vlg, kazmierczak2025enhancing, oikarinen2023label, havasi2022addressing}, where the concept extractor typically operates as a "black box." As shown in \cite{kazmierczak2025enhancing, debole2025if} and illustrated in Fig.~\ref{fig:usecase_bird}, standard attribution methods struggle to provide accurate heatmaps for these specific concepts. However, OSP rectifies these heatmaps, enabling robust, fine-grained concept disambiguation. We evaluate several attribution methods across various bird-related semantic concepts; additional accuracy results are provided in Appendix~\ref{sec:extra_use_cases}. Ultimately, OSP can explain the concepts extracted by CBMs, making the previously opaque components of these models more transparent.

\begin{figure}[htbp]
\centering
\includegraphics[width=\textwidth]{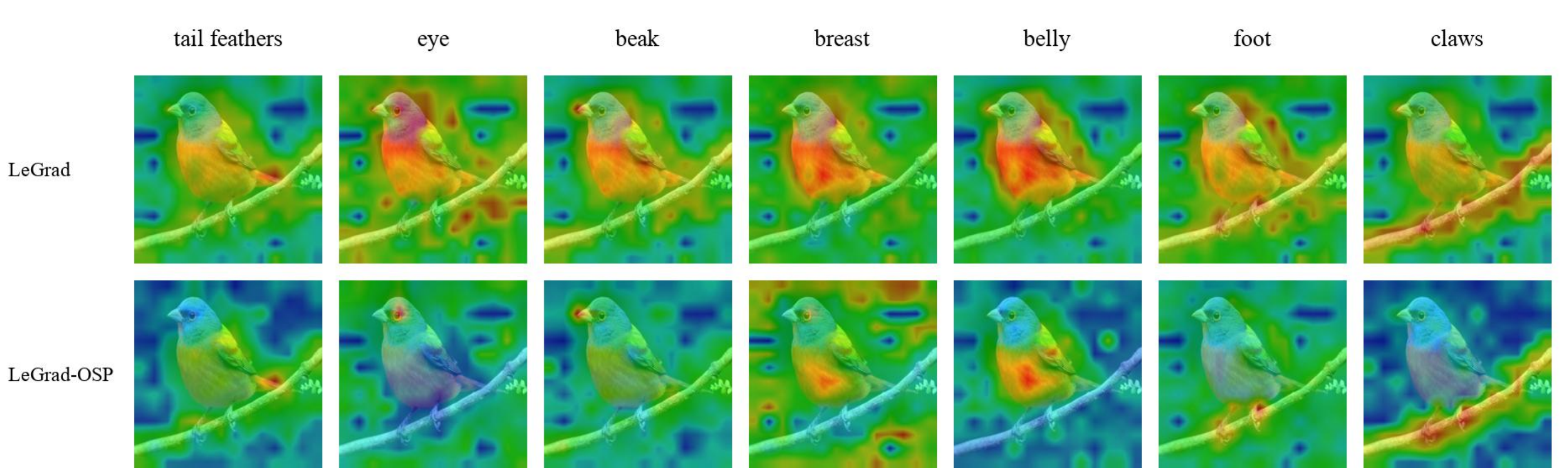}
\vspace{-2mm}
\caption{Practical use case of OSP for fine-grained concept disambiguation. Standard methods suffer from semantic leakage and highlight the general object broadly, whereas OSP successfully disentangles the concepts to highlight specific features.}
\label{fig:usecase_bird}
\end{figure}
\subsection{User Study}
While our quantitative metrics validate OSP on ground-truth segmentation masks, a critical question remains: does OSP make attribution methods more \textit{reliable} for human interpretation? To evaluate this, we conducted a user study with 200 participants, inspired by the human-aligned evaluation framework proposed by Kazmierczak et al.~\cite{kazmierczak2024benchmarking}. 
Participants were shown heatmaps generated by standard attribution methods, with and without OSP, and were asked: \textit{``Based on the heatmap, which class is the model focusing on?''}. 

\label{sec:user_study} 
\begin{wrapfigure}{r}{0.45\textwidth}
\centering
\vspace{-20pt}
\includegraphics[width=\linewidth]{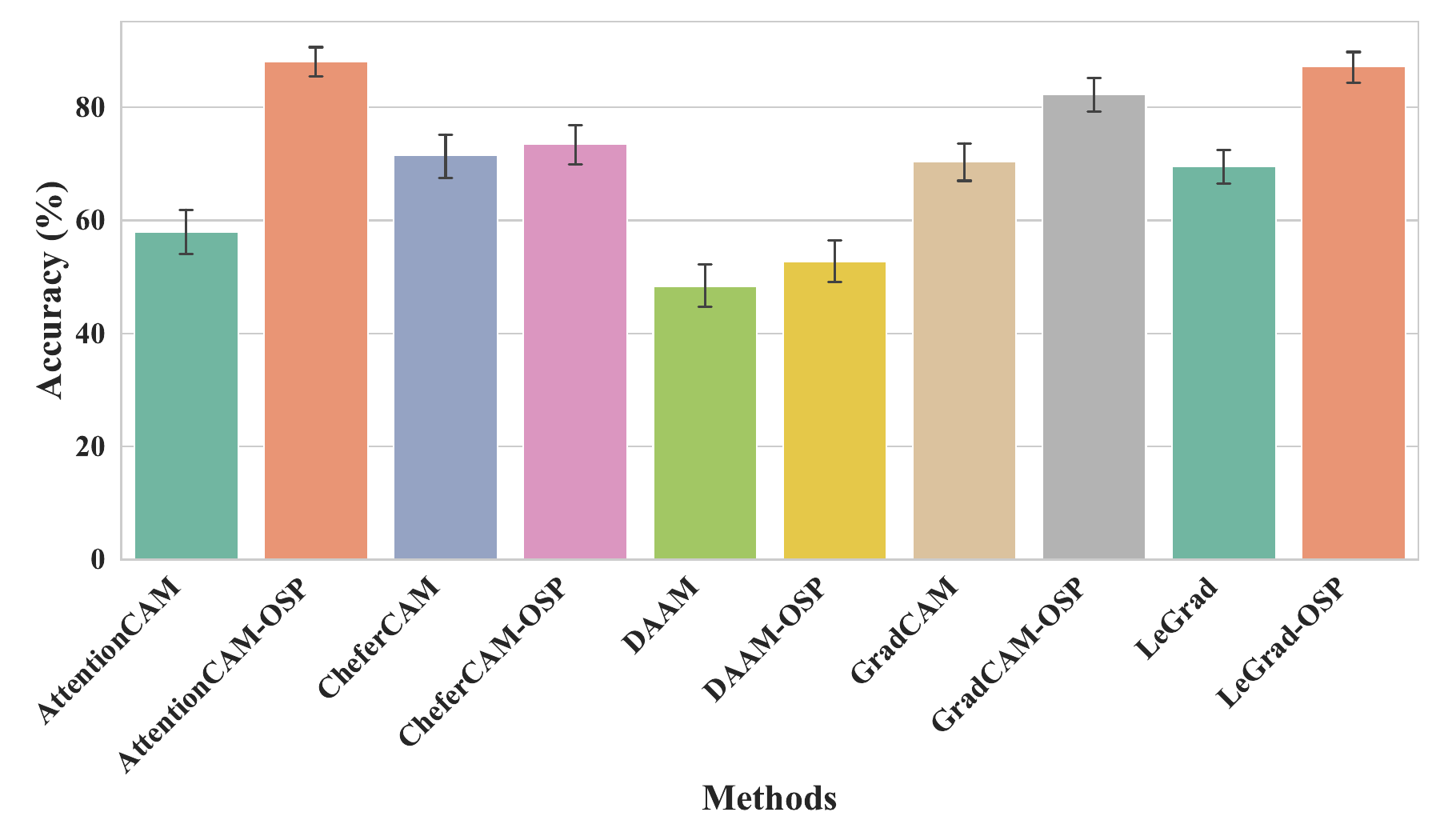}
\caption{User study results. Our method consistently achieves better accuracy, confidence, and trust scores.}
\vspace{-80pt}
\label{fig:method_comparison}
\end{wrapfigure} 
They then rated their confidence in their answer and, after the correct class was revealed, their trust in the explanation.

\textbf{Results.} The results are shown in Fig.~\ref{fig:method_comparison} and Appendix~\ref{sec:user_study_appendix}. OSP increases human identification accuracy, confidence, and trust scores and architectures. This demonstrates that our geometric intervention not only provides a theoretically sound grounding but also makes attribution maps more reliable for end users, translating into measurable improvements in human interpretability. Detailed demographic distributions, statistical analysis including ANOVA results, deeper performance results including detailed confidence and trust scores, and subgroup analyses can be found in Appendix~\ref{sec:user_study_appendix}. Extended results further confirm that OSP's improvements hold across subgroups.

\vspace{-5mm}
\section{Conclusion}
We presented a geometric analysis of attribution hallucination in Vision-Language Models, showing that it arises from Linear Semantic Leakage: the unavoidable non-orthogonality of text embeddings in representation spaces. Building on this analysis, we introduced Orthogonal Semantic Projection, a modular geometric intervention that leverages Orthogonal Matching Pursuit to project query embeddings onto directions orthogonal to distractors. We proved that this orthogonalization mathematically eliminates ghost saliency signals by construction.

Extensive experiments across three foundational models, five attribution methods, four dictionary construction strategies, and two datasets demonstrate that OSP consistently improves AUROC while preserving or enhancing localization fidelity. A comprehensive user study further confirms that OSP yields gains in interpretability.

Looking ahead, we believe that the geometric perspective introduced here opens promising directions for extending OSP to larger-scale generative architectures and broader trustworthy AI applications.

\clearpage
\bibliographystyle{splncs04}
\bibliography{main}

\input{appendix}

\end{document}

%% file: appendix.tex
\clearpage
\setcounter{page}{1}
\pagenumbering{arabic} 
\appendix

\renewcommand{\thetable}{A.\arabic{table}}
\renewcommand{\thefigure}{A.\arabic{figure}}

\begin{center}
  \Large \textbf{Disentangling Hallucinations: Orthogonal Semantic Projection for Robust Interpretability\\ --Supplementary Material--}
\end{center}
\vspace{1cm}

\section{Theoretical Insights}
\subsection{Linearity Principle of Multimodal Saliency Methods}
\label{sec:linearity_appendix}

We now show that all methods defined in Eq.~\eqref{eq:method_cases}
satisfy a linearity principle with respect to the text embedding.

\paragraph{Assumption.}
Let the text embedding for class $c$ be decomposed as:
\begin{equation}
\va^{\text{txt}}_{c}
=
\alpha \va^{\text{txt}}_{c_1}
+
\va^{\text{txt}}_{c_2},
\quad
\alpha \in \mathbb{R}.
\end{equation}

We prove that for each method:
\begin{equation}
\mathbf{V}_c(\vx^{\text{img}})
=
\alpha
\mathbf{V}_{c_1}(\vx^{\text{img}})
+
\mathbf{V}_{c_2}(\vx^{\text{img}}).
\end{equation}

\paragraph{1. CAM.}

From Eq.~\eqref{eq:method_cases}:
\begin{equation}
\mathbf{V}_c(\vx^{\text{img}})
=
\va^{\text{txt}}_c
\mathbf{A}^{\text{MLP}}_{L}(\vx^{\text{img}}).
\end{equation}

Substituting the decomposition:
\begin{align}
\mathbf{V}_c
&=
(\alpha \va^{\text{txt}}_{c_1}
+
\va^{\text{txt}}_{c_2})
\mathbf{A}^{\text{MLP}}_{L}
\\
&=
\alpha
\va^{\text{txt}}_{c_1}
\mathbf{A}^{\text{MLP}}_{L}
+
\va^{\text{txt}}_{c_2}
\mathbf{A}^{\text{MLP}}_{L}
\\
&=
\alpha
\mathbf{V}_{c_1}
+
\mathbf{V}_{c_2}.
\end{align}

Thus CAM is linear.

\paragraph{2. GradCAM.}

GradCAM depends on:
\begin{equation}
\mathbf{w}^c
=
\frac{\partial}{\partial \va^{\text{txt}}_c}
\langle
\va^{\text{txt}}_c,
\va^{\text{img}}
\rangle.
\end{equation}

Since:
\begin{equation}
S_c
=
\langle
\va^{\text{txt}}_c,
\va^{\text{img}}
\rangle
=
\alpha
\langle
\va^{\text{txt}}_{c_1},
\va^{\text{img}}
\rangle
+
\langle
\va^{\text{txt}}_{c_2},
\va^{\text{img}}
\rangle,
\end{equation}

we obtain:
\begin{equation}
\mathbf{w}^c
=
\alpha \mathbf{w}^{c_1}
+
\mathbf{w}^{c_2}.
\end{equation}

Since the saliency map is linear in the weights,
GradCAM is linear.

\paragraph{3. LeGrad.}

LeGrad uses:
\begin{equation}
\mathbf{V}_c
=
\sum_{l,h}
\frac{1}{HL}
\nabla_{\mathbf{A}^{\text{MHA}}_{l,h}} S_c.
\end{equation}

Because:
\begin{equation}
S_c
=
\alpha S_{c_1}
+
S_{c_2},
\end{equation}

and gradients are linear:
\begin{equation}
\nabla S_c
=
\alpha \nabla S_{c_1}
+
\nabla S_{c_2},
\end{equation}

we obtain:
\begin{equation}
\mathbf{V}_c
=
\alpha
\mathbf{V}_{c_1}
+
\mathbf{V}_{c_2}.
\end{equation}

Thus LeGrad is linear.

\paragraph{4. CheferCAM.}

CheferCAM computes relevance by accumulating attention gradients across layers via matrix multiplication:
\begin{equation}
\mathbf{V}_c = \left[ \prod_{l=1}^L \left( \mathbf{I} + \frac{1}{H}\sum_{h} \bigl(\nabla_{\mathbf{A}^{\text{MHA}}_{l,h}} S_c \odot \mathbf{A}^{\text{MHA}}_{l,h}\bigr)^{+} \right) \right]_{\text{CLS}}.
\end{equation}

Before applying ReLU, the update matrix at each layer $l$ is linear in $\nabla S_c$.
Since:
\begin{equation}
\nabla S_c
=
\alpha \nabla S_{c_1}
+
\nabla S_{c_2},
\end{equation}

the pre-ReLU update matrix $\tilde{\mathbf{M}}^l_c = \frac{1}{H}\sum_{h} \nabla_{\mathbf{A}^{\text{MHA}}_{l,h}} S_c \odot \mathbf{A}^{\text{MHA}}_{l,h}$ satisfies:
\begin{equation}
\tilde{\mathbf{M}}^l_c
=
\alpha
\tilde{\mathbf{M}}^l_{c_1}
+
\tilde{\mathbf{M}}^l_{c_2}.
\end{equation}

Because CheferCAM multiplies these matrices across layers, the final attribution map is not strictly linear with respect to the text embedding. However, the ghost signal is linearly injected into the update matrix at every single layer before the ReLU activation, which subsequently accumulates through the forward matrix product.

\paragraph{5. AttentionCAM.}

AttentionCAM restricts the relevance aggregation to the last layer $L$ without any matrix multiplications across layers.
Since it computes the relevance from $\bigl(\nabla_{\mathbf{A}^{\text{MHA}}_{L,h}} S_c \odot \mathbf{A}^{\text{MHA}}_{L,h}\bigr)^{+}$, before the ReLU it is purely a linear combination of $\nabla S_c$ and activations.
Thus, it strictly satisfies the linearity property up to the final ReLU operation.

\paragraph{6. DAAM.}

DAAM averages cross-attention maps:
\begin{equation}
\mathbf{V}_c
=
\frac{1}{T}
\sum_{t=1}^{T}
\frac{1}{|\mathcal{H}|}
\sum_{h \in \mathcal{H}}
\mathbf{A}^{\text{cross}}_{(t,h)}(\vx^{\text{img}}, c).
\end{equation}

Cross-attention maps depend linearly on the query embedding
$\va^{\text{txt}}_c$.
Therefore:
\begin{equation}
\mathbf{A}^{\text{cross}}(\cdot, c)
=
\alpha
\mathbf{A}^{\text{cross}}(\cdot, c_1)
+
\mathbf{A}^{\text{cross}}(\cdot, c_2),
\end{equation}

which implies:
\begin{equation}
\mathbf{V}_c
=
\alpha
\mathbf{V}_{c_1}
+
\mathbf{V}_{c_2}.
\end{equation}

\paragraph{Conclusion.}

All considered multimodal saliency methods are linear
with respect to the text embedding
(up to the final ReLU in CheferCAM/AttentionCAM).

Therefore, for any decomposition
\(
\va^{\text{txt}}_c
=
\alpha \va^{\text{txt}}_{c_1}
+
\va^{\text{txt}}_{c_2},
\)
the resulting saliency map decomposes as:
\begin{equation}
\mathbf{V}_c
=
\alpha
\mathbf{V}_{c_1}
+
\mathbf{V}_{c_2}.
\end{equation}

This linearity is the fundamental mechanism
behind semantic leakage and hallucinated explanations.

Technically, "strict" linearity does not hold for all methods; for instance, CheferCAM and AttentionCAM use ReLU operations, while DAAM applies a softmax function over the cross-attention keys. Nonetheless, a first-order Taylor expansion of a non-linear attribution function $g(\bar{\va}^{\text{img}}, \va^{\text{txt}}_c)$ around a base text embedding $\va_0$ yields:
\begin{equation}
g(\bar{\va}^{\text{img}}, \va^{\text{txt}}_c) = g(\bar{\va}^{\text{img}}, \va_0) + \mathbf{J}_{\va_0} \left( \va^{\text{txt}}_c - \va_0 \right) + \mathcal{O}\left(\|\va^{\text{txt}}_c - \va_0\|^2\right),
\end{equation}
where $\mathbf{J}_{\va_0}$ is the Jacobian of the attribution map with respect to the text embedding, and the higher-order error term is bounded by $\frac{1}{2} \|\mathbf{H}\| \|\va^{\text{txt}}_c - \va_0\|^2$ with $\mathbf{H}$ being the Hessian. Since OSP acts as a geometric rotation of the text embedding on the unit hypersphere rather than a large translation, the perturbation $\|\va^{\text{txt}}_c - \va_0\|$ remains small, ensuring the system operates well within the locally linear regime. Furthermore, for methods utilizing ReLUs (such as CheferCAM), the ghost signal is injected linearly before the activation functions at each layer, propagating through the network layers. Empirically, the consistent improvements in AUROC across non-linear methods (e.g., CheferCAM, AttentionCAM, and DAAM) confirm that the linear approximation effectively captures and mitigates the primary sources of explanation hallucination.

\subsection{Extension of Linear Semantic Leakage to Multi-Class Scenarios}
\label{sec:multiclass_appendix}

In the main text, Theorem~1 considers a simplified scenario where the image contains an object of a single class $A$. We can naturally extend this analysis to a multi-class setting where the image $\vx$ contains objects belonging to multiple classes, denoted as $A_1, A_2, \dots, A_m$. 

Suppose the text embeddings for these present classes are given by $\va^{\text{txt}}_{A_i}$ for $i=1,\dots,m$. We consider a distractor class $B$ that does not exist in the image. Although the object $B$ is absent, its text embedding $\va^{\text{txt}}_B$ may have non-zero cosine similarities with the present concepts $\va^{\text{txt}}_{A_i}$.

We can decompose the embedding of $B$ by projecting it onto the subspace spanned by the embeddings of the present classes $\{ \va^{\text{txt}}_{A_i} \}_{i=1}^m$:
\begin{equation}
\va^{\text{txt}}_B
=
\sum_{i=1}^m \beta_i \va^{\text{txt}}_{A_i}
+
\va^{\text{txt}}_{B\perp \{A_i\}},
\end{equation}
where $\beta_i$ are projection coefficients that depend on the pairwise similarities among the present classes and their similarities with $B$, and $\va^{\text{txt}}_{B\perp \{A_i\}}$ is the residual component orthogonal to the subspace spanned by all $\va^{\text{txt}}_{A_i}$.

As established in Appendix~\ref{sec:linearity_appendix}, the attribution function $f \left( \bar{\va}^{\text{img}}, \va^{\text{txt}}_c \right)$ is linear in the text embedding $\va^{\text{txt}}_c$. Substituting the decomposition into the attribution function yields:
\begin{align}
\mathbf{V}_B(\vx)
&=
f \left( \bar{\va}^{\text{img}}, \sum_{i=1}^m \beta_i \va^{\text{txt}}_{A_i} + \va^{\text{txt}}_{B\perp \{A_i\}} \right) \\
&=
\sum_{i=1}^m \beta_i f \left( \bar{\va}^{\text{img}}, \va^{\text{txt}}_{A_i} \right)
+
f \left( \bar{\va}^{\text{img}}, \va^{\text{txt}}_{B\perp \{A_i\}} \right) \\
&=
\sum_{i=1}^m \beta_i \mathbf{V}_{A_i}(\vx)
+
f \left( \bar{\va}^{\text{img}}, \va^{\text{txt}}_{B\perp \{A_i\}} \right).
\end{align}

This result demonstrates that in images containing multiple distinct objects, the hallucinated saliency map for an absent concept $B$ will be a linear combination of the true saliency maps of all related concepts present in the image. The ghost signal is thus distributed across multiple regions, proportionally weighted by the semantic similarity ($\beta_i$) of $B$ to each valid concept $A_i$. The OSP intervention proposed in our methodology efficiently reduces this complex, multi-concept semantic leakage by mathematically orthogonalizing the text embedding of $B$ with respect to a comprehensive dictionary of semantics.

\section{Correlation between Cosine Similarity and Hallucination}
\label{sec:correlation_appendix}

Our theoretical analysis suggests that the severity of hallucination is directly controlled by the cosine similarity between the queried target and a potential distractor embedding. To validate this empirically, we analyzed the correlation between the text embedding cosine similarity and the degree of hallucinated attribution (measured via area under the attribution heatmap on negative prompts) for different methods.

As shown in Table~\ref{tab:correlation} and Figure~\ref{fig:correlation}, there is a highly significant positive correlation between the cosine similarity of the concepts and the amount of hallucination across all evaluated methods. This confirms our theoretical finding that semantic leakage driven by non-orthogonal embeddings is a primary cause of explanation hallucination.

\begin{table}[htbp]
\centering
\caption{\textbf{Correlation between similarity and hallucination.}}
\label{tab:correlation}
\resizebox{0.7\textwidth}{!}{
\begin{tabular}{lcccc}
\toprule
\textbf{Method} & \textbf{Pearson $r$} & \textbf{Pearson $p$-value} & \textbf{Spearman $r$} & \textbf{Spearman $p$-value} \\
\midrule
LeGrad & 0.2231 & $9.68 \times 10^{-13}$ & 0.2148 & $6.73 \times 10^{-12}$ \\
GradCAM & 0.3515 & $1.89 \times 10^{-30}$ & 0.3407 & $1.36 \times 10^{-28}$ \\
CheferCAM & 0.1791 & $1.17 \times 10^{-8}$ & 0.1603 & $3.45 \times 10^{-7}$ \\
AttentionCAM & 0.2295 & $2.01 \times 10^{-13}$ & 0.1760 & $2.11 \times 10^{-8}$ \\
DAAM & 0.2913 & $5.28 \times 10^{-21}$ & 0.2923 & $3.78 \times 10^{-21}$ \\
\bottomrule
\end{tabular}
}
\end{table}

\begin{figure}[htbp]
\centering
\includegraphics[width=0.5\linewidth]{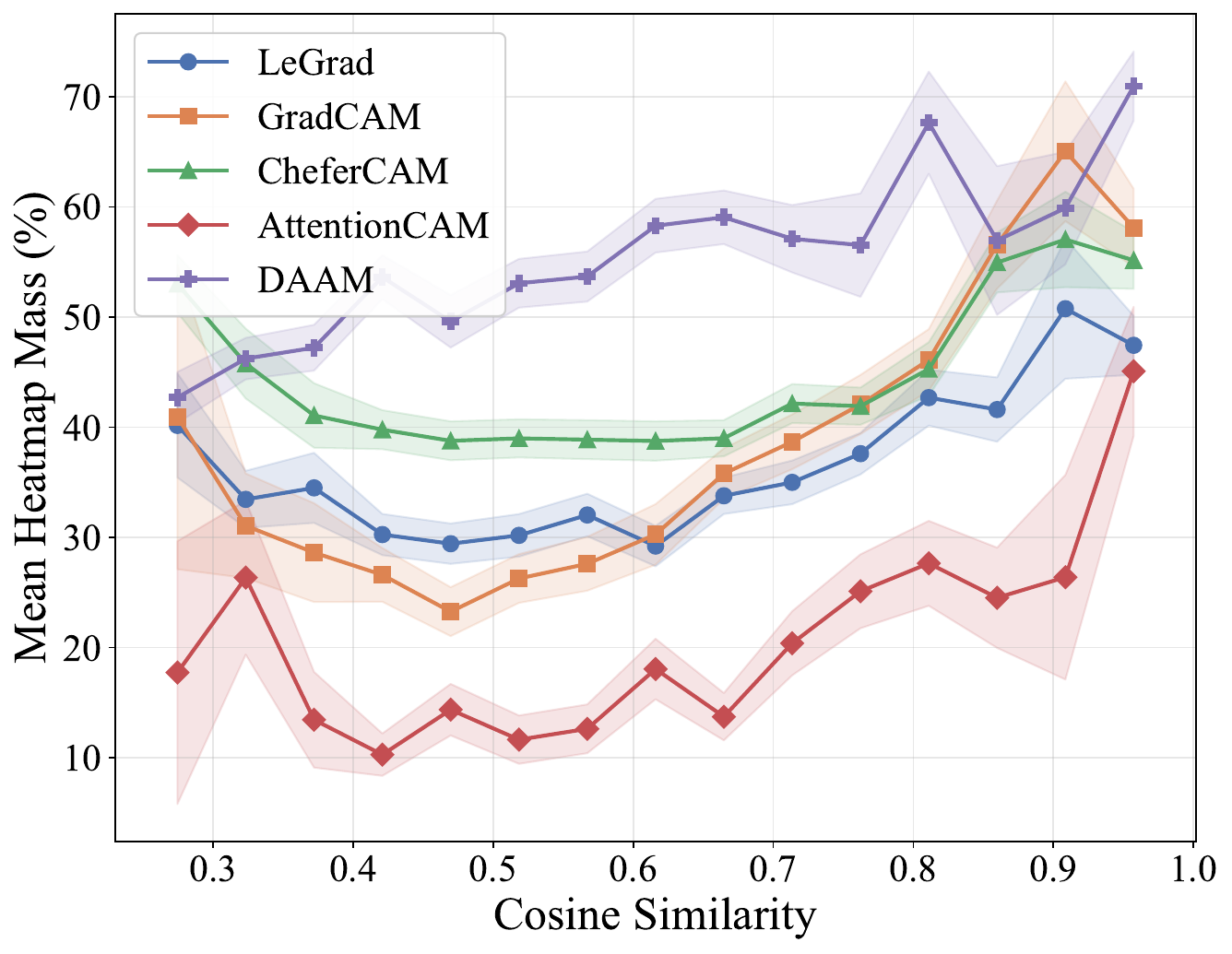}
\caption{Cosine similarity vs Hallucination.}
\label{fig:correlation}
\end{figure}

\section{Dictionary of Semantics Construction Strategies}
\label{sec:dict_appendix}

The \textbf{Dictionary of Semantics} $\mathbf{D}$ is defined as a collection of human-understandable concepts (words or short phrases) that convey semantic information related to the target class. These "atoms" or semantics represent categories that share visual or conceptual features with the main class. The primary objective of the dictionary is to facilitate the \textbf{denoising} of the embedding of the class/concept prompt. By identifying and isolating overlapping features through our OSP algorithm, we can filter out "semantic noise" and isolate the unique attributes of the target. Because OSP relies on these semantics to define the projection space, the choice of the dictionary is essential. In this section, we present various strategies to select the optimal semantics.

As discussed in the main text, we explored four different strategies for building the dictionary of semantics $\mathbf{D}$. The full quantitative results obtained with each of these four strategies are detailed in Section~\ref{sec:extended_quantitative_results}.

\begin{enumerate}
    \item \textbf{ImageNet Classes:} We used all 445 classes from the ImageNet Segmentation dataset as dictionary atoms, excluding the target class itself (resulting in 444 concepts).
    \item \textbf{ImageNet Classes + WordNet:} We augmented the ImageNet classes with semantic relations from WordNet. WordNet provides hypernyms, co-hyponyms (siblings), hyponyms, and synonyms. In this strategy, we incorporated all of these relations except for the synonyms, which we intentionally excluded to prevent fully deleting the target concept's semantic information.
    \item \textbf{Gemini 3 Flash Generation:} We prompted the Gemini 3 Flash Large Language Model (LLM) to generate a customized dictionary of semantics for each target class.
    \item \textbf{GPT-OSS 120B Generation:} We similarly prompted the GPT-OSS 120B model to generate a custom dictionary of semantics.
\end{enumerate}

Regarding the WordNet strategy, we observed that including synonyms negatively affects performance. While most synonyms are filtered out by the cosine similarity threshold, the ones that are not filtered decrease performance by $-1.33$ mIoU on average.

To demonstrate that our method does not strictly rely on LLMs, we highlight two additional settings:
\begin{itemize}
    \item \textbf{User Study:} We utilized minimal dictionaries containing only two existing objects and one non-existing object.
    \item \textbf{PartImageNet++:} The dictionary size was restricted to a maximum of 4 atoms/semantics per category, directly reflecting the dataset's natural part-based structure.
\end{itemize}
These cases demonstrate that highly effective localization can be achieved with concise, manually-defined dictionaries, proving that large-scale model generation is a convenience rather than a requirement.

\section{Extended Quantitative Results}
\label{sec:extended_quantitative_results}

We present the complete set of hyperparameters and quantitative results for the four different dictionaries of semantics creation strategies evaluated in this work. For each strategy, we report the hyperparameters used, the Detailed mIoU and mAP, and the Full quantitative results (including pixel Accuracy and Area Under the ROC Curve).

\begin{quote} \textbf{AUROC Computation Details} \ The AUROC is computed using a paired-image approach that evaluates both localization accuracy and hallucination suppression simultaneously. For an image with ground truth (GT) mask $M \in \{0,1\}^{H \times W}$, two heatmaps are utilized: $H_{pos}$ (correct prompt) and $H_{neg}$ (wrong/absent prompt).

\begin{enumerate} \item \textbf{Pairing Strategy:} The measurement is framed as a binary classification problem over a concatenated domain. The ground truth vector is $Y = [\text{vec}(M), \vec{0}]$ and the prediction score vector is $P = [\text{vec}(H_{pos}), \text{vec}(H_{neg})]$.
\item \textbf{Absent Classes:} For each image, exactly $N=1$ absent class is sampled to construct the negative scenario. In the MS COCO script, this is chosen from other objects present in the image metadata; in the ImageNet script, it is sampled randomly. \end{enumerate} \end{quote} The reported mean AUROC is the arithmetic average over all valid pairs.

\paragraph{Implementation Details for DAAM.}
DAAM utilizes the generative prior of Stable Diffusion in a discriminative manner on real images.
To adapt the generative DAAM to real images, the pipeline performs a partial reconstruction flow:
\begin{enumerate}
    \item The real image $I$ is encoded into the latent space $z = \mathcal{E}(I)$ using the VAE.
    \item A single forward pass is executed at a specific diffusion timestep by adding minimal noise $\epsilon$ to the latents: $z_t = \text{SC}(z, \epsilon, t)$.
    \item Cross-attention maps are then extracted from the UNet during this single denoising step to produce the attribution heatmaps.
\end{enumerate}

\paragraph{Key-Space Orthogonalization.}
The intervention for anti-hallucination is implemented by overriding the attention processors. For a target concept $c$ and distractor concepts $\{d_i\}$, the key vector $k_c$ for the target token is orthogonalized against the subspace spanned by distractor keys $k_{d_i}$:
\begin{equation}
k'_c = k_c - \beta_{\text{omp}} \sum_{i} \text{proj}_{k_{d_i}}(k_c)
\end{equation}
Here, $\beta_{\text{omp}}$ is the \textbf{OMP substitution weight}, which controls the strength of the orthogonalization.

\subsection{Strategy 1 for the dictionary of semantics: ImageNet Classes}

In this strategy, we utilized the 445 ImageNet classes as the vocabulary for our dictionary of semantics. We provide the selected hyperparameters in Table~\ref{tab:all_hyperparameters}, and the full quantitative results in Table~\ref{tab:full_results_imagenet}.

\begin{table}[h]
\centering
\caption{\textbf{Full quantitative results on ImageNet-Segmentation using the dictionary from Strategy 1 (ImageNet Classes).} All four metrics are shown before (Base) and after (+OSP) applying Orthogonal Semantic Projection when using only ImageNet classes, along with deltas ($\Delta$). The Gap Improvement is computed as $\Delta_{\text{pos}} - \Delta_{\text{neg}}$, where each $\Delta$ sums changes in mIoU, Accuracy, mAP, and AUROC. Favorable changes are highlighted in \textbf{bold}.}
\label{tab:full_results_imagenet}
\resizebox{\textwidth}{!}{%
\setlength{\tabcolsep}{2.5pt}
\scriptsize
\begin{tabular}{ll@{\hskip 3pt}c@{\hskip 4pt}rr@{\hskip 2pt}r@{\hskip 6pt}rr@{\hskip 2pt}r@{\hskip 6pt}rr@{\hskip 2pt}r@{\hskip 6pt}rr@{\hskip 2pt}r@{\hskip 6pt}r}
\toprule
\textbf{Method} & \textbf{Model} & \textbf{Pr.} & \multicolumn{3}{c}{\textbf{mIoU}} & \multicolumn{3}{c}{\textbf{Accuracy}} & \multicolumn{3}{c}{\textbf{mAP}} & \multicolumn{3}{c}{\textbf{AUROC}} & \textbf{Gap} \\
\cmidrule(lr){4-6} \cmidrule(lr){7-9} \cmidrule(lr){10-12} \cmidrule(lr){13-15}
 & & & B & +O & $\Delta$ & B & +O & $\Delta$ & B & +O & $\Delta$ & B & +O & $\Delta$ & \textbf{Impr.} \\
\midrule
\multirow{4}{*}{LeGrad}
 & \multirow{2}{*}{CLIP}   & + & 58.66 & 55.30 & -3.36 & 77.52 & 73.32 & -4.20 & 82.49 & 82.25 & -0.24 & \multirow{2}{*}{79.60} & \multirow{2}{*}{77.47} & \multirow{2}{*}{-2.13} & \multirow{2}{*}{\textbf{1.67}} \\
 &                         & -- & 40.88 & 37.46 & \textbf{-3.42} & 64.32 & 56.63 & \textbf{-7.69} & 67.99 & 67.50 & \textbf{-0.49} &  &  &  &  \\
\cmidrule(l){2-16}
 & \multirow{2}{*}{SigLIP} & + & 49.51 & 49.83 & \textbf{+0.32} & 73.28 & 69.55 & -3.73 & 78.32 & 78.30 & -0.02 & \multirow{2}{*}{74.42} & \multirow{2}{*}{70.82} & \multirow{2}{*}{-3.60} & \multirow{2}{*}{\textbf{10.76}} \\
 &                         & -- & 37.27 & 32.82 & \textbf{-4.45} & 63.47 & 51.85 & \textbf{-11.62} & 63.63 & 61.91 & \textbf{-1.72} &  &  &  &  \\
\midrule
\multirow{4}{*}{CheferCAM}
 & \multirow{2}{*}{CLIP}   & + & 48.71 & 55.12 & \textbf{+6.41} & 69.32 & 75.94 & \textbf{+6.62} & 80.36 & 83.39 & \textbf{+3.03} & \multirow{2}{*}{77.63} & \multirow{2}{*}{79.19} & \multirow{2}{*}{\textbf{+1.56}} & \multirow{2}{*}{\textbf{28.53}} \\
 &                         & -- & 51.90 & 48.39 & \textbf{-3.51} & 76.30 & 71.37 & \textbf{-4.93} & 83.68 & 81.21 & \textbf{-2.47} &  &  &  &  \\
\cmidrule(l){2-16}
 & \multirow{2}{*}{SigLIP} & + & 37.66 & 39.12 & \textbf{+1.46} & 60.53 & 63.33 & \textbf{+2.80} & 73.49 & 75.50 & \textbf{+2.01} & \multirow{2}{*}{55.47} & \multirow{2}{*}{60.55} & \multirow{2}{*}{\textbf{+5.08}} & \multirow{2}{*}{\textbf{17.50}} \\
 &                         & -- & 40.53 & 38.59 & \textbf{-1.94} & 63.44 & 61.36 & \textbf{-2.08} & 76.64 & 74.51 & \textbf{-2.13} &  &  &  &  \\
\midrule
\multirow{4}{*}{Att.CAM}
 & \multirow{2}{*}{CLIP}   & + & 40.14 & 55.08 & \textbf{+14.94} & 68.67 & 76.98 & \textbf{+8.31} & 70.34 & 81.88 & \textbf{+11.54} & \multirow{2}{*}{52.68} & \multirow{2}{*}{75.20} & \multirow{2}{*}{\textbf{+22.52}} & \multirow{2}{*}{\textbf{84.12}} \\
 &                         & -- & 54.74 & 44.04 & \textbf{-10.70} & 76.86 & 69.65 & \textbf{-7.21} & 83.54 & 74.64 & \textbf{-8.90} &  &  &  &  \\
\cmidrule(l){2-16}
 & \multirow{2}{*}{SigLIP} & + & 50.01 & 51.87 & \textbf{+1.86} & 70.33 & 71.96 & \textbf{+1.63} & 80.20 & 80.44 & \textbf{+0.24} & \multirow{2}{*}{80.49} & \multirow{2}{*}{79.07} & \multirow{2}{*}{-1.42} & \multirow{2}{*}{\textbf{7.63}} \\
 &                         & -- & 40.00 & 39.26 & \textbf{-0.74} & 66.83 & 64.70 & \textbf{-2.13} & 72.70 & 70.25 & \textbf{-2.45} &  &  &  &  \\
\midrule
\multirow{4}{*}{GradCAM}
 & \multirow{2}{*}{CLIP}   & + & 44.68 & 50.32 & \textbf{+5.64} & 69.48 & 70.86 & \textbf{+1.38} & 74.94 & 79.32 & \textbf{+4.38} & \multirow{2}{*}{65.39} & \multirow{2}{*}{70.77} & \multirow{2}{*}{\textbf{+5.38}} & \multirow{2}{*}{\textbf{24.38}} \\
 &                         & -- & 33.83 & 32.72 & \textbf{-1.11} & 58.86 & 53.12 & \textbf{-5.74} & 68.84 & 68.09 & \textbf{-0.75} &  &  &  &  \\
\cmidrule(l){2-16}
 & \multirow{2}{*}{SigLIP} & + & 38.69 & 41.13 & \textbf{+2.44} & 57.78 & 69.10 & \textbf{+11.32} & 71.43 & 72.00 & \textbf{+0.57} & \multirow{2}{*}{67.07} & \multirow{2}{*}{69.39} & \multirow{2}{*}{\textbf{+2.32}} & \multirow{2}{*}{\textbf{33.68}} \\
 &                         & -- & 39.60 & 37.19 & \textbf{-2.41} & 83.36 & 69.04 & \textbf{-14.32} & 68.53 & 68.23 & \textbf{-0.30} &  &  &  &  \\
\midrule
\multirow{2}{*}{DAAM}
 & \multirow{2}{*}{SD 2}   & + & 65.71 & 69.89 & \textbf{+4.18} & 81.33 & 83.94 & \textbf{+2.61} & 88.55 & 89.85 & \textbf{+1.30} & \multirow{2}{*}{83.07} & \multirow{2}{*}{84.28} & \multirow{2}{*}{\textbf{+1.21}} & \multirow{2}{*}{\textbf{10.15}} \\
 &                         & -- & 59.49 & 58.70 & \textbf{-0.79} & 76.60 & 75.13 & \textbf{-1.47} & 86.39 & 87.80 & +1.41 &  &  &  &  \\
\bottomrule
\end{tabular}}
\end{table}

\subsection{Strategy 2 for the dictionary of semantics:: ImageNet Classes + WordNet (Without Synonyms)}
Here, the dictionary of semantics was constructed by augmenting the ImageNet classes with semantic relationships from WordNet, deliberately excluding synonyms to preserve the target concept. The tuned hyperparameters are listed in Table~\ref{tab:all_hyperparameters}, and the complete performance metrics are summarized in Table~\ref{tab:full_results_wordnet}.

\begin{table}[h]
\centering
\caption{\textbf{Full quantitative results on ImageNet-Segmentation using the dictionary from Strategy 2 (ImageNet Classes + WordNet).} All four metrics are shown before (Base) and after (+OSP) applying Orthogonal Semantic Projection when using ImageNet classes and WordNet (except for synonyms), along with deltas ($\Delta$). The Gap Improvement is computed as $\Delta_{\text{pos}} - \Delta_{\text{neg}}$, where each $\Delta$ sums changes in mIoU, Accuracy, mAP, and AUROC. Favorable changes are highlighted in \textbf{bold}.}
\label{tab:full_results_wordnet}
\resizebox{\textwidth}{!}{%
\setlength{\tabcolsep}{2.5pt}
\scriptsize
\begin{tabular}{ll@{\hskip 3pt}c@{\hskip 4pt}rr@{\hskip 2pt}r@{\hskip 6pt}rr@{\hskip 2pt}r@{\hskip 6pt}rr@{\hskip 2pt}r@{\hskip 6pt}rr@{\hskip 2pt}r@{\hskip 6pt}r}
\toprule
\textbf{Method} & \textbf{Model} & \textbf{Pr.} & \multicolumn{3}{c}{\textbf{mIoU}} & \multicolumn{3}{c}{\textbf{Accuracy}} & \multicolumn{3}{c}{\textbf{mAP}} & \multicolumn{3}{c}{\textbf{AUROC}} & \textbf{Gap} \\
\cmidrule(lr){4-6} \cmidrule(lr){7-9} \cmidrule(lr){10-12} \cmidrule(lr){13-15}
 & & & B & +O & $\Delta$ & B & +O & $\Delta$ & B & +O & $\Delta$ & B & +O & $\Delta$ & \textbf{Impr.} \\
\midrule
\multirow{4}{*}{LeGrad}
 & \multirow{2}{*}{CLIP}   & + & 58.66 & 55.61 & -3.05 & 77.52 & 73.88 & -3.64 & 82.49 & 82.20 & -0.29 & \multirow{2}{*}{77.62} & \multirow{2}{*}{75.28} & \multirow{2}{*}{-2.34} & \multirow{2}{*}{\textbf{0.37}} \\
 &                         & -- & 40.88 & 38.13 & \textbf{-2.75} & 64.32 & 57.71 & \textbf{-6.61} & 67.99 & 67.66 & \textbf{-0.33} &  &  &  &  \\
\cmidrule(l){2-16}
 & \multirow{2}{*}{SigLIP} & + & 49.51 & 49.51 & 0.00 & 73.28 & 69.21 & -4.07 & 78.32 & 78.14 & -0.18 & \multirow{2}{*}{74.42} & \multirow{2}{*}{68.98} & \multirow{2}{*}{-5.44} & \multirow{2}{*}{\textbf{5.50}} \\
 &                         & -- & 37.27 & 33.77 & \textbf{-3.50} & 63.47 & 52.28 & \textbf{-11.19} & 63.63 & 63.13 & \textbf{-0.50} &  &  &  &  \\
\midrule
\multirow{4}{*}{CheferCAM}
 & \multirow{2}{*}{CLIP}   & + & 48.71 & 56.10 & \textbf{+7.39} & 69.32 & 75.58 & \textbf{+6.26} & 80.36 & 83.42 & \textbf{+3.06} & \multirow{2}{*}{77.63} & \multirow{2}{*}{79.45} & \multirow{2}{*}{\textbf{+1.82}} & \multirow{2}{*}{\textbf{8.23}} \\
 &                         & -- & 44.88 & 48.90 & +4.02 & 66.44 & 70.30 & +3.86 & 78.74 & 81.16 & +2.42 &  &  &  &  \\
\cmidrule(l){2-16}
 & \multirow{2}{*}{SigLIP} & + & 37.66 & 39.15 & \textbf{+1.49} & 60.53 & 63.39 & \textbf{+2.86} & 73.49 & 75.45 & \textbf{+1.96} & \multirow{2}{*}{55.47} & \multirow{2}{*}{60.73} & \multirow{2}{*}{\textbf{+5.26}} & \multirow{2}{*}{\textbf{6.04}} \\
 &                         & -- & 36.65 & 38.27 & +1.62 & 59.28 & 61.16 & +1.88 & 72.38 & 74.41 & +2.03 &  &  &  &  \\
\midrule
\multirow{4}{*}{Att.CAM}
 & \multirow{2}{*}{CLIP}   & + & 40.14 & 55.32 & \textbf{+15.18} & 68.67 & 77.21 & \textbf{+8.54} & 70.34 & 82.08 & \textbf{+11.74} & \multirow{2}{*}{52.68} & \multirow{2}{*}{75.64} & \multirow{2}{*}{\textbf{+22.96}} & \multirow{2}{*}{\textbf{31.16}} \\
 &                         & -- & 33.34 & 44.28 & +10.94 & 62.44 & 69.96 & +7.52 & 65.74 & 74.54 & +8.80 &  &  &  &  \\
\cmidrule(l){2-16}
 & \multirow{2}{*}{SigLIP} & + & 50.01 & 51.08 & \textbf{+1.07} & 70.33 & 70.29 & -0.04 & 80.20 & 80.41 & \textbf{+0.21} & \multirow{2}{*}{80.49} & \multirow{2}{*}{79.01} & \multirow{2}{*}{-1.48} & \multirow{2}{*}{\textbf{3.26}} \\
 &                         & -- & 40.00 & 38.59 & \textbf{-1.41} & 62.57 & 62.78 & +0.21 & 72.70 & 70.40 & \textbf{-2.30} &  &  &  &  \\
\midrule
\multirow{4}{*}{GradCAM}
 & \multirow{2}{*}{CLIP}   & + & 44.68 & 51.29 & \textbf{+6.61} & 69.48 & 71.32 & \textbf{+1.84} & 74.94 & 79.89 & \textbf{+4.95} & \multirow{2}{*}{65.39} & \multirow{2}{*}{71.53} & \multirow{2}{*}{\textbf{+6.14}} & \multirow{2}{*}{\textbf{22.15}} \\
 &                         & -- & 33.83 & 34.10 & +0.27 & 58.86 & 54.23 & \textbf{-4.63} & 67.34 & 69.09 & +1.75 &  &  &  &  \\
\cmidrule(l){2-16}
 & \multirow{2}{*}{SigLIP} & + & 38.69 & 41.20 & \textbf{+2.51} & 57.78 & 69.13 & \textbf{+11.35} & 71.43 & 72.06 & \textbf{+0.63} & \multirow{2}{*}{67.07} & \multirow{2}{*}{69.75} & \multirow{2}{*}{\textbf{+2.68}} & \multirow{2}{*}{\textbf{0.95}} \\
 &                         & -- & 34.78 & 37.14 & +2.36 & 54.72 & 69.02 & +14.30 & 68.53 & 68.09 & \textbf{-0.44} &  &  &  &  \\
\midrule
\multirow{2}{*}{DAAM}
 & \multirow{2}{*}{SD 2}   & + & 63.71 & 63.36 & -0.35 & 80.94 & 79.83 & -1.11 & 88.55 & 89.17 & \textbf{+0.62} & \multirow{2}{*}{83.07} & \multirow{2}{*}{84.69} & \multirow{2}{*}{\textbf{+1.62}} & \multirow{2}{*}{\textbf{4.39}} \\
 &                         & -- & 59.49 & 57.62 & \textbf{-1.87} & 76.60 & 74.82 & \textbf{-1.78} & 86.39 & 86.43 & +0.04 &  &  &  &  \\
\bottomrule
\end{tabular}}
\end{table}

\subsection{Strategy 3 for the dictionary of semantics:: Gemini 3 Flash Generation}
For this strategy, we prompted the Gemini 3 Flash Large Language Model to generate a customized dictionary of semantics for each target class using the structured prompt detailed in Figure~\ref{fig:llm_prompt}. The resulting hyperparameters used for this dictionary of semantics are reported in Table~\ref{tab:all_hyperparameters}, and the full quantitative results demonstrating its performance are provided in Table~\ref{tab:full_results_gemini}.

\begin{table}[h]
\centering
\caption{\textbf{Full quantitative results on ImageNet-Segmentation using the dictionary from Strategy 3 (Gemini 3 Flash).} All four metrics are shown before (Base) and after (+OSP) applying Orthogonal Semantic Projection when using a dictionary of semantics created with Gemini 3 Flash, along with deltas ($\Delta$). The Gap Improvement is computed as $\Delta_{\text{pos}} - \Delta_{\text{neg}}$, where each $\Delta$ sums changes in mIoU, Accuracy, mAP, and AUROC. Favorable changes are highlighted in \textbf{bold}.}
\label{tab:full_results_gemini}
\resizebox{\textwidth}{!}{%
\setlength{\tabcolsep}{2.5pt}
\scriptsize
\begin{tabular}{ll@{\hskip 3pt}c@{\hskip 4pt}rr@{\hskip 2pt}r@{\hskip 6pt}rr@{\hskip 2pt}r@{\hskip 6pt}rr@{\hskip 2pt}r@{\hskip 6pt}rr@{\hskip 2pt}r@{\hskip 6pt}r}
\toprule
\textbf{Method} & \textbf{Model} & \textbf{Pr.} & \multicolumn{3}{c}{\textbf{mIoU}} & \multicolumn{3}{c}{\textbf{Accuracy}} & \multicolumn{3}{c}{\textbf{mAP}} & \multicolumn{3}{c}{\textbf{AUROC}} & \textbf{Gap} \\
\cmidrule(lr){4-6} \cmidrule(lr){7-9} \cmidrule(lr){10-12} \cmidrule(lr){13-15}
 & & & B & +O & $\Delta$ & B & +O & $\Delta$ & B & +O & $\Delta$ & B & +O & $\Delta$ & \textbf{Impr.} \\
\midrule
\multirow{4}{*}{LeGrad}
 & \multirow{2}{*}{CLIP}   & + & 58.66 & 61.33 & \textbf{+2.67} & 77.52 & 78.44 & \textbf{+0.92} & 82.49 & 85.74 & \textbf{+3.25} & \multirow{2}{*}{79.62} & \multirow{2}{*}{80.64} & \multirow{2}{*}{\textbf{+1.02}} & \multirow{2}{*}{\textbf{7.81}} \\
 &                         & -- & 40.88 & 40.74 & \textbf{-0.14} & 64.32 & 61.64 & \textbf{-2.68} & 67.99 & 70.86 & +2.87 &  &  &  &  \\
\cmidrule(l){2-16}
 & \multirow{2}{*}{SigLIP} & + & 49.51 & 51.18 & \textbf{+1.67} & 73.28 & 71.60 & -1.68 & 78.32 & 78.78 & \textbf{+0.46} & \multirow{2}{*}{74.42} & \multirow{2}{*}{74.73} & \multirow{2}{*}{\textbf{+0.31}} & \multirow{2}{*}{\textbf{10.76}} \\
 &                         & -- & 37.27 & 34.68 & \textbf{-2.59} & 63.47 & 56.85 & \textbf{-6.62} & 63.63 & 62.84 & \textbf{-0.79} &  &  &  &  \\
\midrule
\multirow{4}{*}{CheferCAM}
 & \multirow{2}{*}{CLIP}   & + & 48.71 & 51.32 & \textbf{+2.61} & 69.32 & 74.14 & \textbf{+4.82} & 80.36 & 82.60 & \textbf{+2.24} & \multirow{2}{*}{77.63} & \multirow{2}{*}{80.14} & \multirow{2}{*}{\textbf{+2.51}} & \multirow{2}{*}{\textbf{9.69}} \\
 &                         & -- & 44.88 & 42.99 & \textbf{-1.89} & 66.44 & 69.44 & +3.00 & 78.74 & 80.12 & +1.38 &  &  &  &  \\
\cmidrule(l){2-16}
 & \multirow{2}{*}{SigLIP} & + & 37.66 & 39.60 & \textbf{+1.94} & 60.53 & 63.55 & \textbf{+3.02} & 73.49 & 75.95 & \textbf{+2.46} & \multirow{2}{*}{55.47} & \multirow{2}{*}{62.64} & \multirow{2}{*}{\textbf{+7.17}} & \multirow{2}{*}{\textbf{7.90}} \\
 &                         & -- & 36.65 & 38.45 & +1.80 & 59.28 & 61.88 & +2.60 & 72.38 & 74.67 & +2.29 &  &  &  &  \\
\midrule
\multirow{4}{*}{Att.CAM}
 & \multirow{2}{*}{CLIP}   & + & 40.14 & 47.96 & \textbf{+7.82} & 68.67 & 73.28 & \textbf{+4.61} & 70.34 & 76.62 & \textbf{+6.28} & \multirow{2}{*}{52.68} & \multirow{2}{*}{67.73} & \multirow{2}{*}{\textbf{+15.05}} & \multirow{2}{*}{\textbf{24.68}} \\
 &                         & -- & 33.34 & 36.63 & +3.29 & 62.44 & 66.42 & +3.98 & 65.74 & 67.55 & +1.81 &  &  &  &  \\
\cmidrule(l){2-16}
 & \multirow{2}{*}{SigLIP} & + & 50.01 & 52.12 & \textbf{+2.11} & 70.33 & 72.58 & \textbf{+2.25} & 80.20 & 80.81 & \textbf{+0.61} & \multirow{2}{*}{80.49} & \multirow{2}{*}{83.45} & \multirow{2}{*}{\textbf{+2.96}} & \multirow{2}{*}{\textbf{6.27}} \\
 &                         & -- & 40.00 & 38.51 & \textbf{-1.49} & 62.57 & 68.26 & +5.69 & 72.70 & 70.16 & \textbf{-2.54} &  &  &  &  \\
\midrule
\multirow{4}{*}{GradCAM}
 & \multirow{2}{*}{CLIP}   & + & 44.68 & 51.43 & \textbf{+6.75} & 69.48 & 72.61 & \textbf{+3.13} & 74.94 & 79.80 & \textbf{+4.86} & \multirow{2}{*}{65.39} & \multirow{2}{*}{71.82} & \multirow{2}{*}{\textbf{+6.43}} & \multirow{2}{*}{\textbf{22.45}} \\
 &                         & -- & 33.83 & 34.23 & +0.40 & 58.86 & 56.07 & \textbf{-2.79} & 67.34 & 68.45 & +1.11 &  &  &  &  \\
\cmidrule(l){2-16}
 & \multirow{2}{*}{SigLIP} & + & 38.69 & 43.26 & \textbf{+4.57} & 57.78 & 69.55 & \textbf{+11.77} & 71.43 & 73.61 & \textbf{+2.18} & \multirow{2}{*}{67.07} & \multirow{2}{*}{73.01} & \multirow{2}{*}{\textbf{+5.94}} & \multirow{2}{*}{\textbf{6.80}} \\
 &                         & -- & 34.78 & 37.21 & +2.43 & 54.72 & 68.96 & +14.24 & 68.53 & 69.52 & +0.99 &  &  &  &  \\
\midrule
\multirow{2}{*}{DAAM}
 & \multirow{2}{*}{SD 2}   & + & 65.71 & 66.34 & \textbf{+0.63} & 81.33 & 83.25 & \textbf{+1.92} & 88.55 & 89.76 & \textbf{+1.21} & \multirow{2}{*}{83.07} & \multirow{2}{*}{85.48} & \multirow{2}{*}{\textbf{+2.41}} & \multirow{2}{*}{\textbf{6.35}} \\
 &                         & -- & 59.49 & 59.03 & \textbf{-0.46} & 77.10 & 76.44 & \textbf{-0.66} & 86.39 & 87.33 & +0.94 &  &  &  &  \\
\bottomrule
\end{tabular}}
\end{table}

\subsection{Strategy 4 for the dictionary of semantics:: GPT-OSS 120B Generation}
Similarly, this strategy leverages the GPT-OSS 120B model to generate a custom dictionary of semantics following the same prompt structure shown in Figure~\ref{fig:llm_prompt}. The hyperparameters selected for this approach are detailed in Table~\ref{tab:all_hyperparameters}, and the extensive quantitative results in Table~\ref{tab:full_results_gpt}.

\begin{table}[h]
\centering
\caption{\textbf{Full quantitative results on ImageNet-Segmentation using the dictionary from Strategy 4 (GPT-OSS 120B).} All four metrics are shown before (Base) and after (+OSP) applying Orthogonal Semantic Projection when using a dictionary of semantics created with GPT-OSS 120B, along with deltas ($\Delta$). The Gap Improvement is computed as $\Delta_{\text{pos}} - \Delta_{\text{neg}}$, where each $\Delta$ sums changes in mIoU, Accuracy, mAP, and AUROC. Favorable changes are highlighted in \textbf{bold}.}
\label{tab:full_results_gpt}
\resizebox{\textwidth}{!}{%
\setlength{\tabcolsep}{2.5pt}
\scriptsize
\begin{tabular}{ll@{\hskip 3pt}c@{\hskip 4pt}rr@{\hskip 2pt}r@{\hskip 6pt}rr@{\hskip 2pt}r@{\hskip 6pt}rr@{\hskip 2pt}r@{\hskip 6pt}rr@{\hskip 2pt}r@{\hskip 6pt}r}
\toprule
\textbf{Method} & \textbf{Model} & \textbf{Pr.} & \multicolumn{3}{c}{\textbf{mIoU}} & \multicolumn{3}{c}{\textbf{Accuracy}} & \multicolumn{3}{c}{\textbf{mAP}} & \multicolumn{3}{c}{\textbf{AUROC}} & \textbf{Gap} \\
\cmidrule(lr){4-6} \cmidrule(lr){7-9} \cmidrule(lr){10-12} \cmidrule(lr){13-15}
 & & & B & +O & $\Delta$ & B & +O & $\Delta$ & B & +O & $\Delta$ & B & +O & $\Delta$ & \textbf{Impr.} \\
\midrule
\multirow{4}{*}{LeGrad}
 & \multirow{2}{*}{CLIP}   & + & 58.66 & 58.01 & -0.65 & 77.52 & 75.09 & -2.43 & 82.49 & 85.39 & \textbf{+2.90} & \multirow{2}{*}{79.62} & \multirow{2}{*}{80.06} & \multirow{2}{*}{\textbf{+0.44}} & \multirow{2}{*}{\textbf{10.04}} \\
 &                         & -- & 40.88 & 37.37 & \textbf{-3.51} & 64.32 & 55.82 & \textbf{-8.50} & 67.99 & 70.22 & +2.23 &  &  &  &  \\
\cmidrule(l){2-16}
 & \multirow{2}{*}{SigLIP} & + & 49.51 & 51.68 & \textbf{+2.17} & 73.28 & 72.76 & -0.52 & 78.32 & 79.32 & \textbf{+1.00} & \multirow{2}{*}{74.42} & \multirow{2}{*}{74.67} & \multirow{2}{*}{\textbf{+0.25}} & \multirow{2}{*}{\textbf{15.22}} \\
 &                         & -- & 37.27 & 33.66 & \textbf{-3.61} & 63.47 & 57.31 & \textbf{-6.16} & 63.63 & 61.08 & \textbf{-2.55} &  &  &  &  \\
\midrule
\multirow{4}{*}{CheferCAM}
 & \multirow{2}{*}{CLIP}   & + & 48.71 & 51.98 & \textbf{+3.27} & 69.32 & 73.94 & \textbf{+4.62} & 80.36 & 82.59 & \textbf{+2.23} & \multirow{2}{*}{77.63} & \multirow{2}{*}{81.37} & \multirow{2}{*}{\textbf{+3.74}} & \multirow{2}{*}{\textbf{11.11}} \\
 &                         & -- & 44.88 & 43.29 & \textbf{-1.59} & 66.44 & 69.35 & +2.91 & 78.74 & 80.17 & +1.43 &  &  &  &  \\
\cmidrule(l){2-16}
 & \multirow{2}{*}{SigLIP} & + & 37.66 & 38.37 & \textbf{+0.71} & 60.53 & 65.87 & \textbf{+5.34} & 73.49 & 75.67 & \textbf{+2.18} & \multirow{2}{*}{55.47} & \multirow{2}{*}{60.36} & \multirow{2}{*}{\textbf{+4.89}} & \multirow{2}{*}{\textbf{3.54}} \\
 &                         & -- & 36.65 & 38.83 & +2.18 & 59.28 & 64.34 & +5.06 & 72.38 & 74.72 & +2.34 &  &  &  &  \\
\midrule
\multirow{4}{*}{Att.CAM}
 & \multirow{2}{*}{CLIP}   & + & 40.14 & 44.64 & \textbf{+4.50} & 68.67 & 71.37 & \textbf{+2.70} & 70.34 & 74.64 & \textbf{+4.30} & \multirow{2}{*}{52.68} & \multirow{2}{*}{63.99} & \multirow{2}{*}{\textbf{+11.31}} & \multirow{2}{*}{\textbf{15.19}} \\
 &                         & -- & 33.34 & 36.12 & +2.78 & 62.44 & 66.36 & +3.92 & 65.74 & 67.19 & +1.45 &  &  &  &  \\
\cmidrule(l){2-16}
 & \multirow{2}{*}{SigLIP} & + & 50.01 & 52.89 & \textbf{+2.88} & 70.33 & 73.69 & \textbf{+3.36} & 80.20 & 81.72 & \textbf{+1.52} & \multirow{2}{*}{80.49} & \multirow{2}{*}{85.81} & \multirow{2}{*}{\textbf{+5.32}} & \multirow{2}{*}{\textbf{11.75}} \\
 &                         & -- & 40.00 & 37.74 & \textbf{-2.26} & 62.57 & 68.52 & +5.95 & 72.70 & 70.34 & \textbf{-2.36} &  &  &  &  \\
\midrule
\multirow{4}{*}{GradCAM}
 & \multirow{2}{*}{CLIP}   & + & 44.68 & 53.78 & \textbf{+9.10} & 69.48 & 73.89 & \textbf{+4.41} & 74.94 & 81.26 & \textbf{+6.32} & \multirow{2}{*}{65.39} & \multirow{2}{*}{76.55} & \multirow{2}{*}{\textbf{+11.16}} & \multirow{2}{*}{\textbf{26.74}} \\
 &                         & -- & 33.83 & 35.83 & +2.00 & 58.86 & 59.43 & +0.57 & 67.34 & 69.02 & +1.68 &  &  &  &  \\
\cmidrule(l){2-16}
 & \multirow{2}{*}{SigLIP} & + & 38.69 & 44.37 & \textbf{+5.68} & 57.78 & 67.38 & \textbf{+9.60} & 71.43 & 73.32 & \textbf{+1.89} & \multirow{2}{*}{67.07} & \multirow{2}{*}{75.46} & \multirow{2}{*}{\textbf{+8.39}} & \multirow{2}{*}{\textbf{7.74}} \\
 &                         & -- & 34.78 & 38.02 & +3.24 & 54.72 & 68.28 & +13.56 & 68.53 & 69.55 & +1.02 &  &  &  &  \\
\midrule
\multirow{2}{*}{DAAM}
 & \multirow{2}{*}{SD 2}   & + & 65.71 & 66.39 & \textbf{+0.68} & 81.33 & 82.37 & \textbf{+1.04} & 88.55 & 88.68 & \textbf{+0.13} & \multirow{2}{*}{83.07} & \multirow{2}{*}{88.12} & \multirow{2}{*}{\textbf{+5.05}} & \multirow{2}{*}{\textbf{8.94}} \\
 &                         & -- & 59.49 & 58.18 & \textbf{-1.31} & 76.60 & 76.39 & \textbf{-0.21} & 86.39 & 85.87 & \textbf{-0.52} &  &  &  &  \\
\bottomrule
\end{tabular}}
\end{table}

\subsection{Hyperparameter Selection and Optimization Objective}

To determine the optimal hyperparameters for each dictionary strategy, we conducted a grid search over a constrained parameter space. This search was performed on a validation subset consisting of 20 representative samples. We defined a composite objective function to balance improvements in localization with the preservation of discriminative power:
\begin{equation}
\label{eq:optimization_objective}
    \text{Objective} = (\sum \Delta_{\text{pos}}) - (\sum \Delta_{\text{neg}}) + \Delta \text{AUROC}
\end{equation}
where $\Delta_{\text{pos}}$ and $\Delta_{\text{neg}}$ represent the changes in localization metrics (mIoU, Accuracy, mAP) for positive and negative prompts respectively, and $\Delta \text{AUROC}$ is the change in the paired AUROC metric.

We observe that, although hallucination metrics (e.g., mIoU on negative prompts) may occasionally increase after the intervention, the gap between positive- and negative-prompt performance consistently widens, indicating a relative improvement in disentanglement. The optimization objective could be further modified to strictly enforce decreases in negative prompt activations.

One observation is that when the max cosine similarity parameter ($\tau_{\text{cos}}$) is increased beyond 0.9, fewer atoms are filtered from the dictionary, often leading to a much more aggressive drop in hallucinations. However, our primary goal remains to maximize AUROC while maintaining a robust localization fidelity.

\subsection{Dictionary-Size Ablation Study}
To evaluate the sensitivity of OSP to the number of atoms in the dictionary of semantics, we conducted an ablation study on the CLIP ViT-B/16 architecture using the LeGrad attribution method and the Gemini-generated dictionary. We varied the dictionary size $T$ (number of atoms) across the values $\{3, 5, 10, 20, 40, 60, 80\}$. 

As shown in Table~\ref{tab:dict_ablation}, OSP yields substantial improvements in AUROC compared to the baseline (79.62) even with very small dictionary sizes (e.g., +2.04 with just 3 atoms). The peak improvement is achieved at 5 atoms (+2.54), after which the performance remains high and stable, verifying that our method is highly robust to the exact size of the dictionary of semantics.

\begin{table}[h]
\centering
\caption{\textbf{Dictionary-size ablation} on CLIP ViT-B/16 using the LeGrad attribution method and the Gemini dictionary (baseline AUROC is 79.62).}
\label{tab:dict_ablation}
\vspace{-2pt}
\setlength{\tabcolsep}{6.5pt}
\scriptsize
\begin{tabular}{lccccccc}
\toprule
\textbf{\# Atoms} & 3 & 5 & 10 & 20 & 40 & 60 & 80 \\
\midrule
\textbf{AUROC} & 81.66 & \textbf{82.16} & 82.09 & 81.90 & 81.51 & 81.40 & 81.40 \\
\textbf{Delta ($\Delta$)} & +2.04 & \textbf{+2.54} & +2.47 & +2.28 & +1.89 & +1.78 & +1.78 \\
\bottomrule
\end{tabular}
\end{table}

\subsection{Perturbation-Based Faithfulness Evaluation}
Following the established evaluation methodology in explainable AI (Petsiuk et al.~\cite{petsiuk2018rise}, Hedstr{\"o}m et al.~\cite{hedstrom2023quantus}), we conduct perturbation-based evaluation to assess the faithfulness of the attributions. Specifically, we measure:
\begin{itemize}
    \item \textbf{Insertion AUC ($\uparrow$)}: Pixels are iteratively inserted into a blurred baseline image in the order of their attribution importance. A higher area under the curve (AUC) indicates that the most important pixels are highly informative for the model's prediction.
    \item \textbf{Deletion AUC ($\downarrow$)}: Pixels are iteratively removed (blurred) from the image in the order of their attribution importance. A lower deletion AUC indicates that the attribution method successfully identifies the most critical pixels.
\end{itemize}
Importantly, to isolate the effect of OSP on spatial attribution accuracy without modifying the classification target, we perform the classification scoring using the \textit{original} (un-projected) text embeddings, whereas OSP is only used to compute the spatial attribution mask.

As reported in Table~\ref{tab:perturbation}, OSP consistently preserves or improves faithfulness metrics across all evaluated attribution methods. The most significant gains are observed for AttentionCAM (improving the Insertion$-$Deletion gap by \textbf{+2.01}) and GradCAM (improving the gap by \textbf{+1.12}). These results demonstrate that OSP not only improves localization but also yields more faithful attributions that align with the features causally used by the underlying model.

\begin{table}[h]
\centering
\caption{\textbf{Perturbation-based faithfulness evaluation} on CLIP ViT-B/16 on the ImageNet-Segmentation dataset using a blur baseline (20 steps). Higher Insertion AUC and lower Deletion AUC indicate better faithfulness.}
\label{tab:perturbation}
\vspace{-2pt}
\setlength{\tabcolsep}{3pt}
\scriptsize
\begin{tabular}{l cc c cc c cc}
\toprule
& \multicolumn{2}{c}{\textbf{Ins.\ AUC} $\uparrow$} & & \multicolumn{2}{c}{\textbf{Del.\ AUC} $\downarrow$} & & \multicolumn{2}{c}{\textbf{Ins$-$Del} $\uparrow$} \\
\cmidrule(lr){2-3} \cmidrule(lr){5-6} \cmidrule(lr){8-9}
\textbf{Method} & Base & +OSP & & Base & +OSP & & Base & +OSP \\
\midrule
LeGrad       & 25.08 & \textbf{25.15} & & 18.48 & 18.56 & & 6.60 & 6.59 \\
CheferCAM    & 24.47 & \textbf{24.78} & & 19.25 & \textbf{19.10} & & 5.22 & \textbf{5.68} \\
GradCAM      & 24.47 & \textbf{24.69} & & 20.51 & \textbf{19.61} & & 3.96 & \textbf{5.08} \\
AttentionCAM & 23.03 & \textbf{23.63} & & 22.44 & \textbf{21.03} & & 0.59 & \textbf{2.60} \\
\bottomrule
\end{tabular}
\end{table}

\subsection{Generalization to Large Vision-Language Models (LVLMs)}
To demonstrate the generalizability of OSP to modern Large Vision-Language Models (LVLMs), we evaluate our method on the CLIP ViT-L/14 visual backbone of LLaVA-1.5~\cite{liu2024llava}. LLaVA-1.5 keeps the CLIP visual encoder frozen, so the encoder-level spatial attributions are structurally identical to standalone CLIP. 

Because OSP acts as a geometric intervention modifying solely the text query embedding, it can be integrated directly with the LLaVA visual encoder without modifying any of the autoregressive language model's weights. Table~\ref{tab:lvlm} reports the results using the LeGrad attribution method and the Gemini dictionary. OSP achieves consistent gains across all positive prompt metrics (+0.38 mIoU, +0.32 Accuracy, +0.64 mAP) while successfully suppressing attributions on negative prompts, raising the AUROC from 73.72 to \textbf{73.91}. These results demonstrate that OSP generalizes effectively to downstream multimodal systems that leverage contrastive vision-language representation spaces.

\begin{table}[h]
\centering
\caption{\textbf{OSP evaluation on the visual encoder of LLaVA-1.5} (CLIP ViT-L/14, LeGrad, Gemini dictionary).}
\label{tab:lvlm}
\vspace{-2pt}
\setlength{\tabcolsep}{5pt}
\scriptsize
\begin{tabular}{lcccc}
\toprule
& \textbf{mIoU} & \textbf{Accuracy} & \textbf{mAP} & \textbf{AUROC} \\
\midrule
Base (Pos) & 57.52 & 75.59 & 84.71 & \multirow{2}{*}{73.72 $\to$ \textbf{73.91}} \\
+OSP (Pos) & \textbf{57.90} & \textbf{75.91} & \textbf{85.35} & \\
\midrule
Base (Neg) & 34.09 & 51.12 & 71.25 & \\
+OSP (Neg) & \textbf{33.85} & \textbf{50.91} & \textbf{70.69} & \\
\bottomrule
\end{tabular}
\end{table}

\section{Extra Use Cases: Evaluation on Concept-Level Annotations on PartImageNet++}
\label{sec:extra_use_cases}

While ImageNet provides comprehensive class-level labels, it lacks fine-grained part annotations. To evaluate the robustness of \textbf{OSP} in more granular scenarios, we extended our evaluation using the PartImageNet++ dataset~\cite{li2024partimagenet}. This dataset scales up part-based models for robust recognition and provides detailed part-level or concept-level segmentations.

We compare the performance of baseline methods (LeGrad~\cite{legrad2025} and CHILI~\cite{kazmierczak2025enhancing}) against their OSP-enhanced versions. As shown in Table~\ref{tab:partimagenet_results}, OSP consistently improves accuracy across different attribution and disentanglement techniques. Specifically, for usual attribution methods like LeGrad, OSP increases the localization accuracy. 
Notably, OSP is complementary to such methods. CHILI~\cite{kazmierczak2025enhancing} is a disentanglement technique specifically designed for image-side embeddings. While CHILI disentangles visual concepts in the image feature space, our proposed OSP orthogonalizes the concepts within the text-side embeddings. Our results demonstrate that when the text-side OSP is integrated with the image-side disentanglement of CHILI, the combined technique yields significant performance enhancements over using CHILI in isolation. The visual results can be seen in Figures~\ref{fig:visual_jet} and \ref{fig:timber_wolf}.

\begin{table}[htbp]
\centering
\caption{\textbf{Quantitative Results on PartImageNet++.} We report mIoU, Pixel Accuracy, and mAP for LeGrad, CHILI, and their OSP-augmented versions. The hyperparameters are binarization threshold ($\tau_{\text{act}}$), number of OMP atoms, (= number of semantics), ($T$), and maximum cosine similarity threshold ($\tau_{\text{cos}}$).}
\label{tab:partimagenet_results}
\resizebox{0.8\textwidth}{!}{
\begin{tabular}{lcccccc}
\toprule
\textbf{Method} & \textbf{mIoU} & \textbf{Pixel Acc.} & \textbf{mAP} & $\tau_{\text{act}}$ & $T$ & $\tau_{\text{cos}}$ \\
\midrule
LeGrad & 48.30 & 76.55 & 80.30 & --- & --- & --- \\
LeGrad + OSP & \textbf{52.09} & \textbf{88.37} & \textbf{82.56} & 0.725 & 3 & 0.85 \\
\midrule
CHILI & 43.42 & 82.79 & 83.96 & --- & --- & --- \\
CHILI + OSP & \textbf{51.10} & \textbf{86.32} & \textbf{92.65} & 0.5 & 4 & 0.7 \\
\bottomrule
\end{tabular}
}
\end{table}

\section{Detailed User Study Setup and Results}
\label{sec:user_study_appendix}

\subsection{Study Setup}

\textbf{Stimuli.} We compiled a dataset of 50 randomly selected images from the validation sets of the Pascal VOC 2012 \cite{everingham2015pascal} and MS COCO \cite{lin2014microsoft} datasets. Each image was specifically chosen to contain at least 2 unique objects. 

\textbf{Participants.} We recruited 200 participants. There were no specific inclusion criteria. The recruited participants were from 21 countries across 5 continents.

\textbf{Study Procedure.} Prior to the main evaluation, we provided a detailed tutorial to familiarize participants with the interface and the task (see Figure~\ref{fig:user_study_ui}). Each participant is inspecting 31 images. For each image, users were presented with saliency maps generated by one standard attribution method with or without our proposed OSP. Participants did not know the text prompts used to generate the saliency maps. To ensure that people were not answering randomly, we added an attention check question which filtered out some users eventually. Among the 31 heatmaps shown, one heatmap with a great mIoU was an attention check question, and users answering that question wrongly were not counted. 98\% of the participants answered it correctly. For each image, users face the following steps:
\begin{enumerate}
    \item \textbf{Object Identification:} Users were asked, \textit{``Based on the heatmap, which class is the model focusing on?''} The available options were:
    \begin{itemize}
        \item Target 1 (existing in the image)
        \item Target 2 (existing in the image)
        \item None of them
    \end{itemize}
    \item \textbf{Confidence Rating:} Before confirming their selection, users reported their confidence level in their answer, on a scale from 1 to 5 (where 1 is not confident and 5 is highly confident).
    \item \textbf{Post-Disclosure Assessment:} After the user submitted their answer, the correct target class was revealed. We then asked, \textit{``How well does the heatmap work to understand the correct answer?''} to assess the perceived quality and fidelity of the explanation. Participants had to answer on a scale from 1 to 5. 
\end{enumerate}

\begin{figure}[htbp]
\centering
\includegraphics[width=0.45\textwidth]{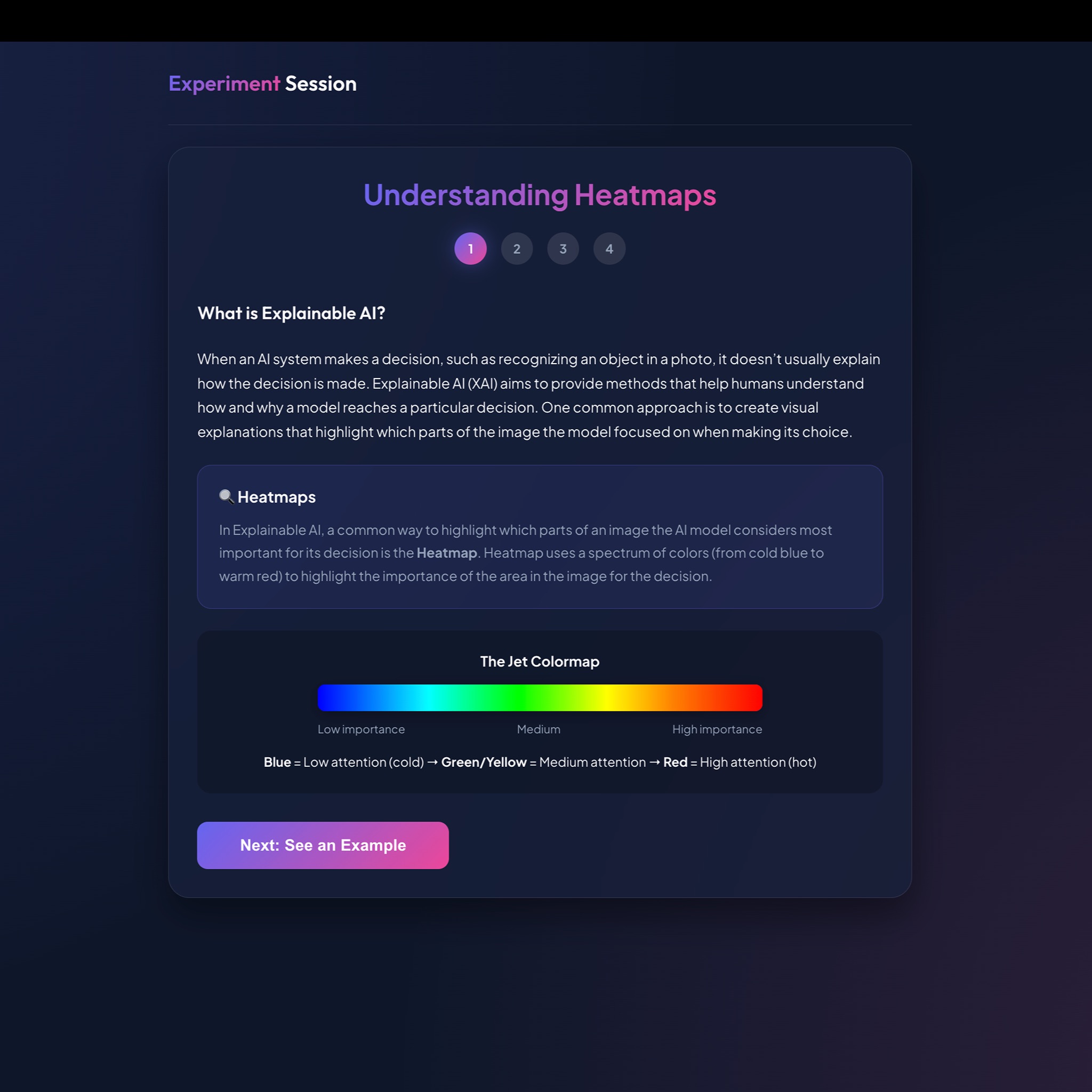}\hfill
\includegraphics[width=0.45\textwidth]{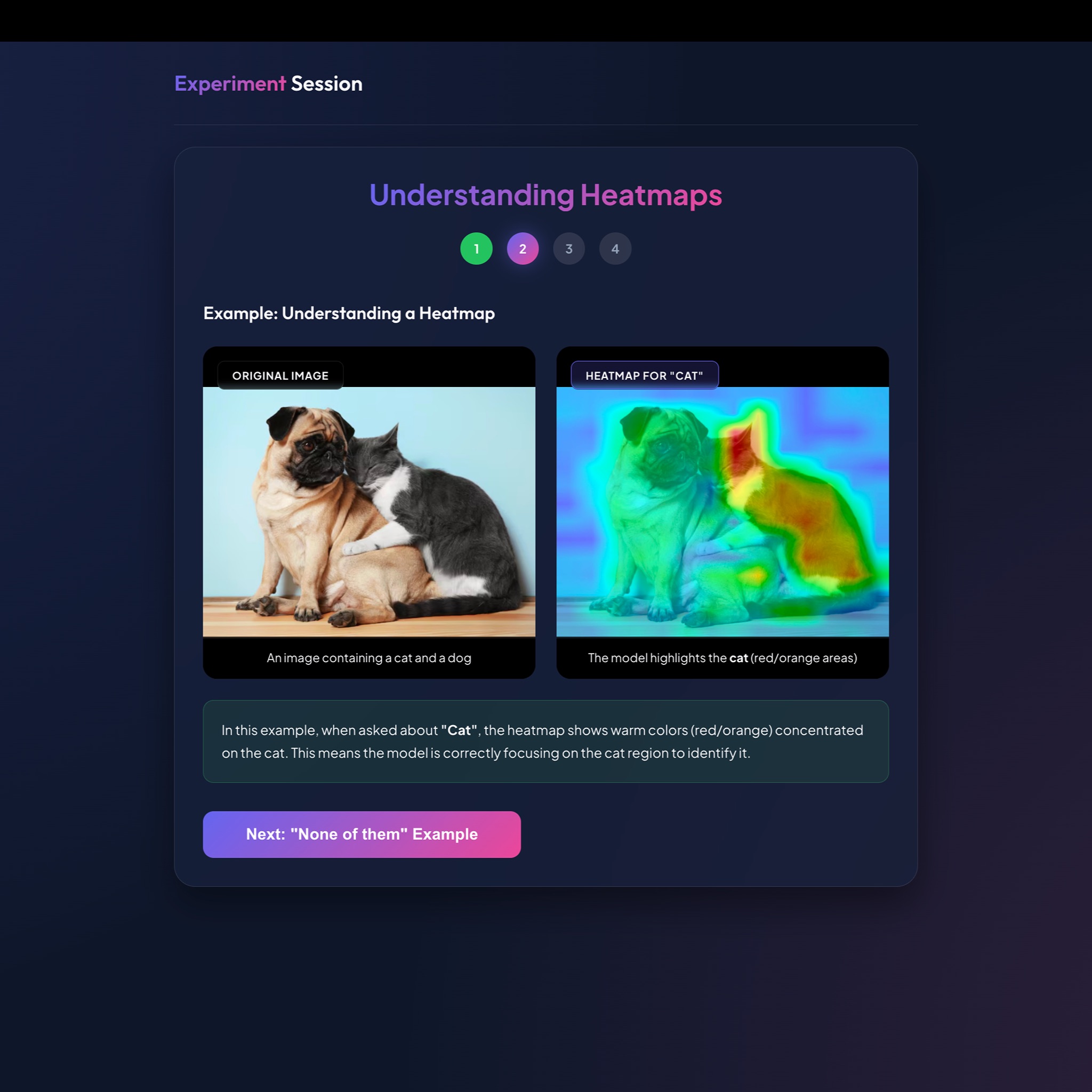}\hfill
\includegraphics[width=0.45\textwidth]{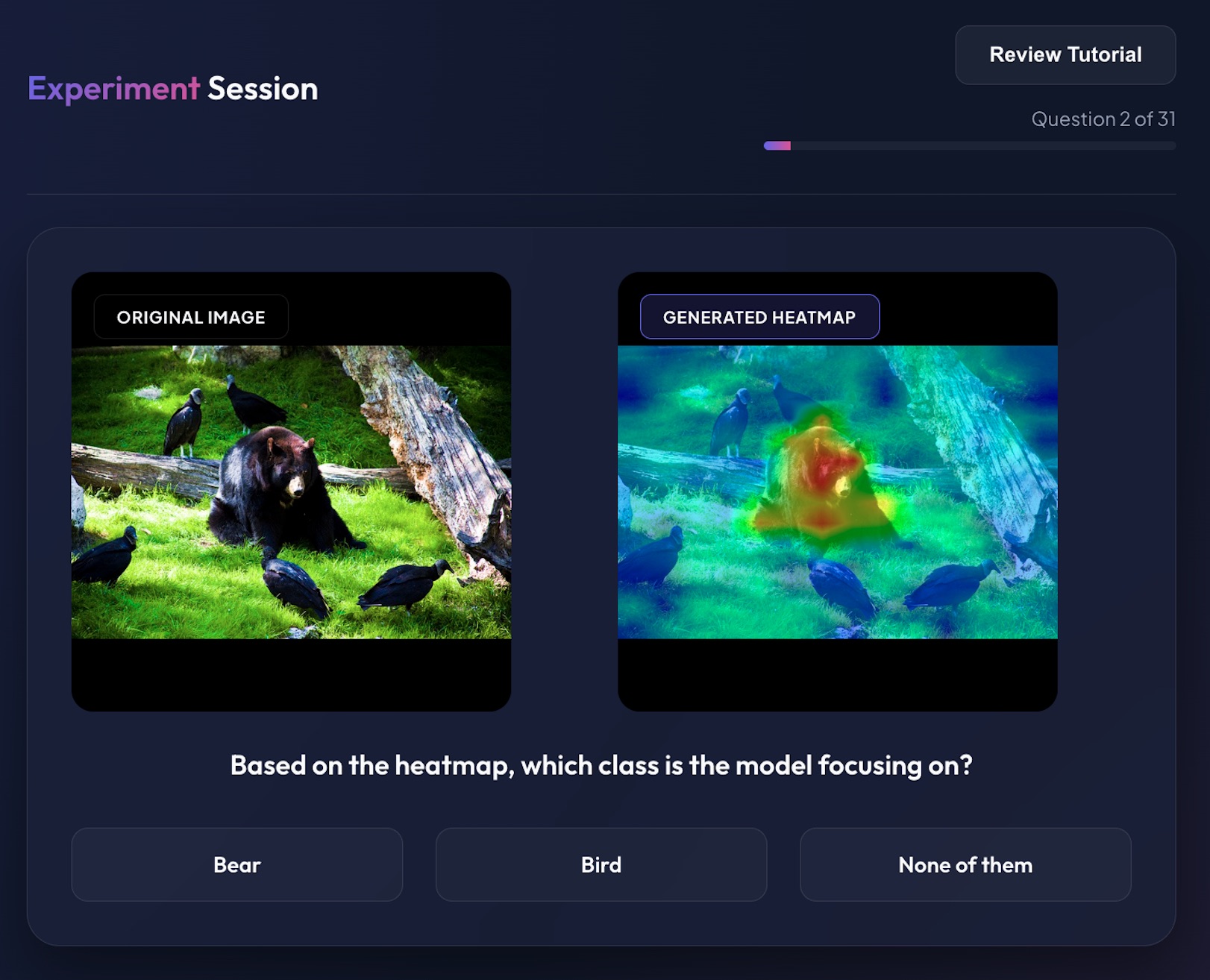}
\caption{User study interface. From left to right: the first two images show the tutorial provided to participants to explain the task, and the third image displays an example of the main evaluation interface.}
\label{fig:user_study_ui}
\end{figure}

Each of the 50 images was seen by 4 participants, allowing us to compute the mean accuracy (the number of correct guesses of the class among the four answers), the mean confidence, and the mean trust value. 

This methodology allows us to quantitatively measure how semantic hallucination in baseline methods misleads users into selecting incorrect objects or ``None of them,'' and how OSP improves both the accuracy of human identification and the users' confidence in the model's explanations.

\subsection{Participant Demographics}
We collected responses from 200 participants from 21 unique countries across 5 continents. Figure~\ref{fig:demographics_overall1} provides a comprehensive overview of the participant demographics, highlighting the diversity in age, gender, educational background, AI expertise, and geographic location. We also present statistics related to the survey itself, such as perceived difficulty, duration, and response consistency.

\begin{figure}[htbp]
\centering
\includegraphics[width=0.48\textwidth]{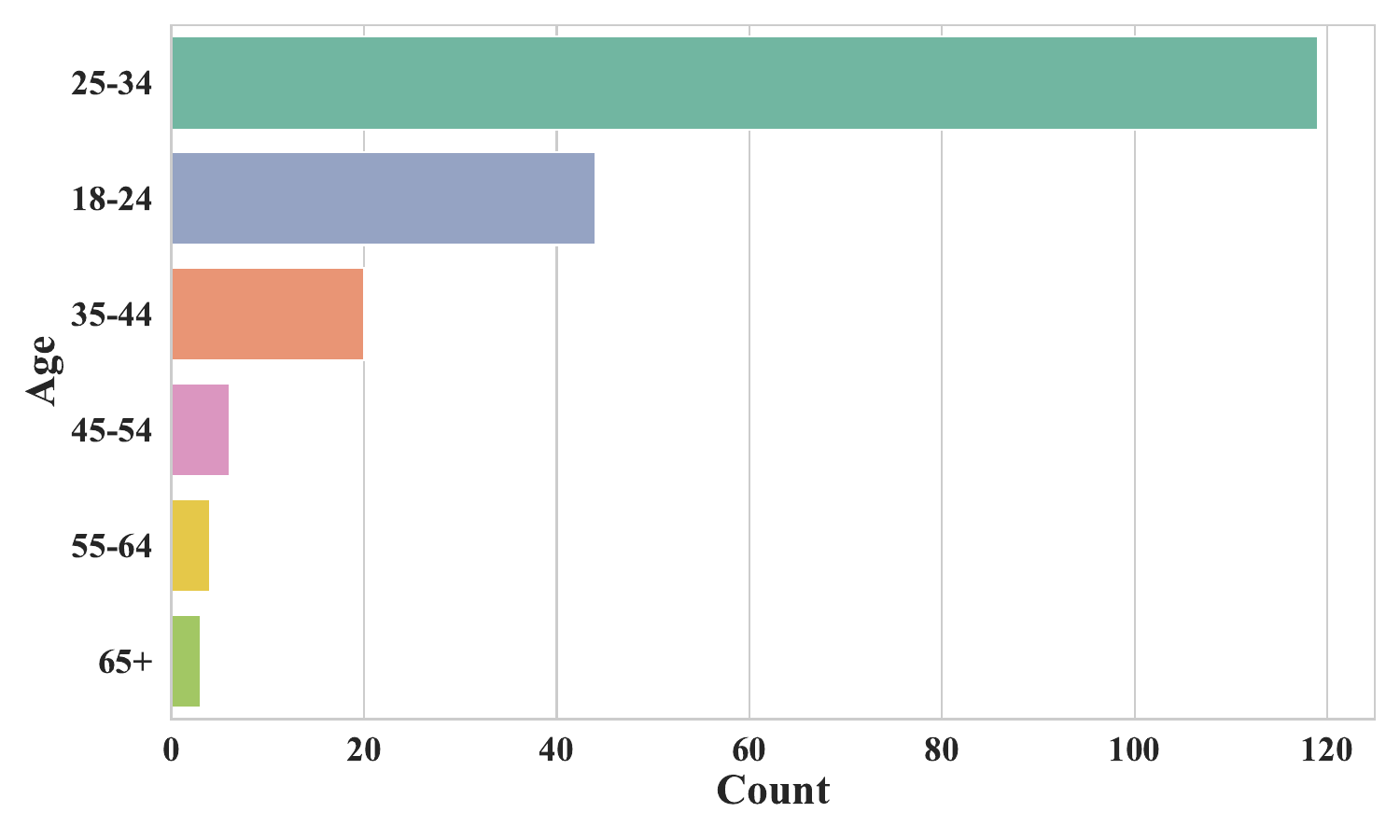}\hfill
\includegraphics[width=0.48\textwidth]{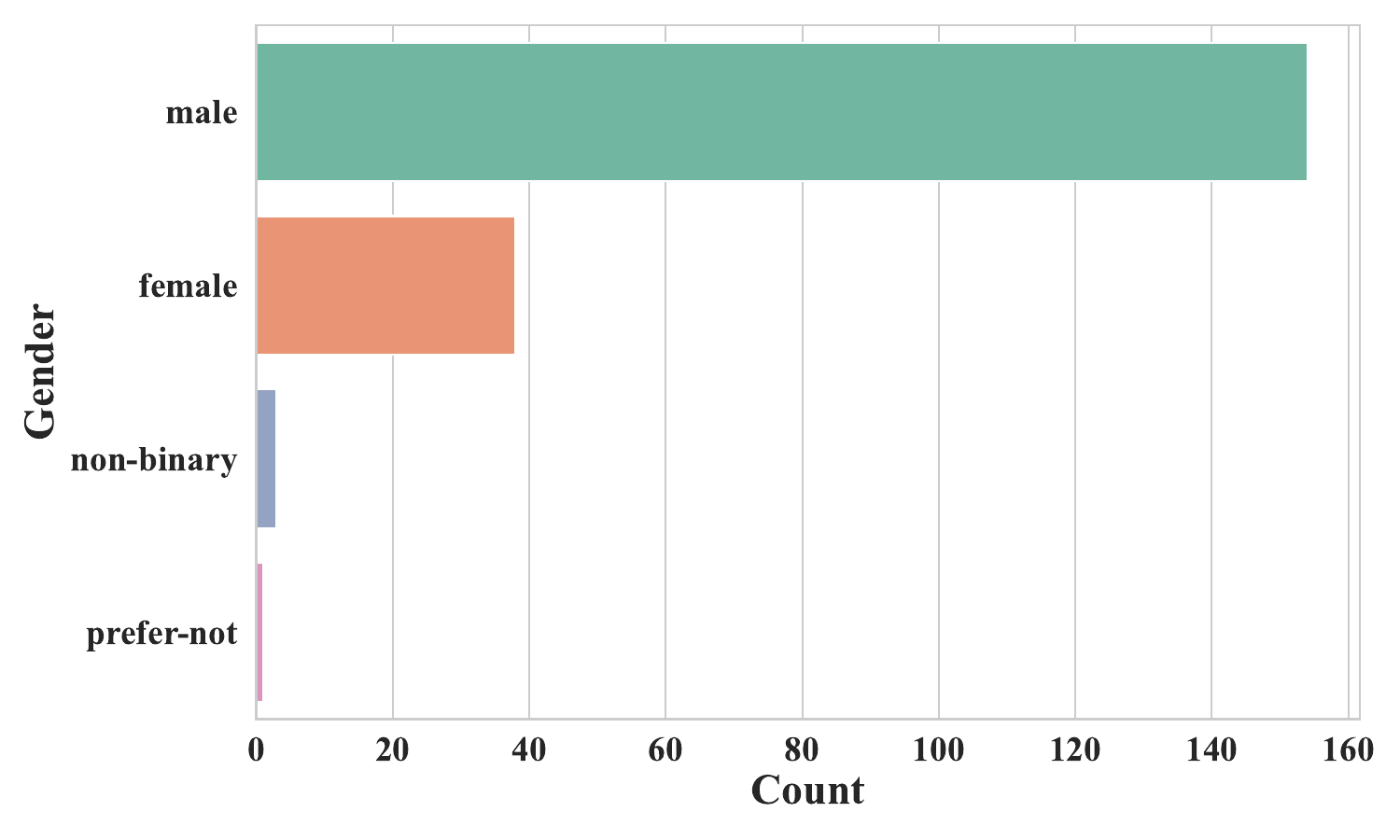}
\\
\vspace{0.3cm}
\includegraphics[width=0.48\textwidth]{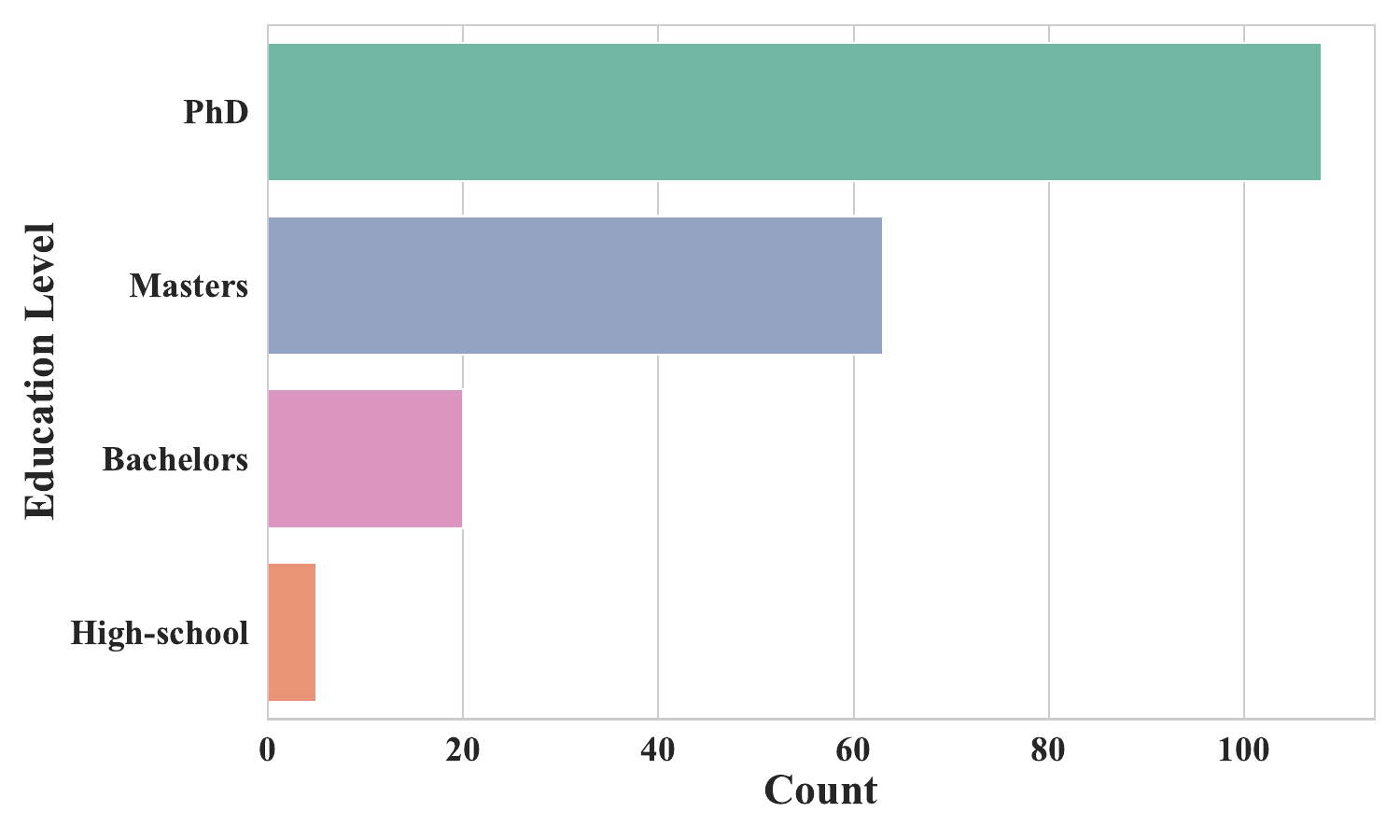}\hfill
\includegraphics[width=0.48\textwidth]{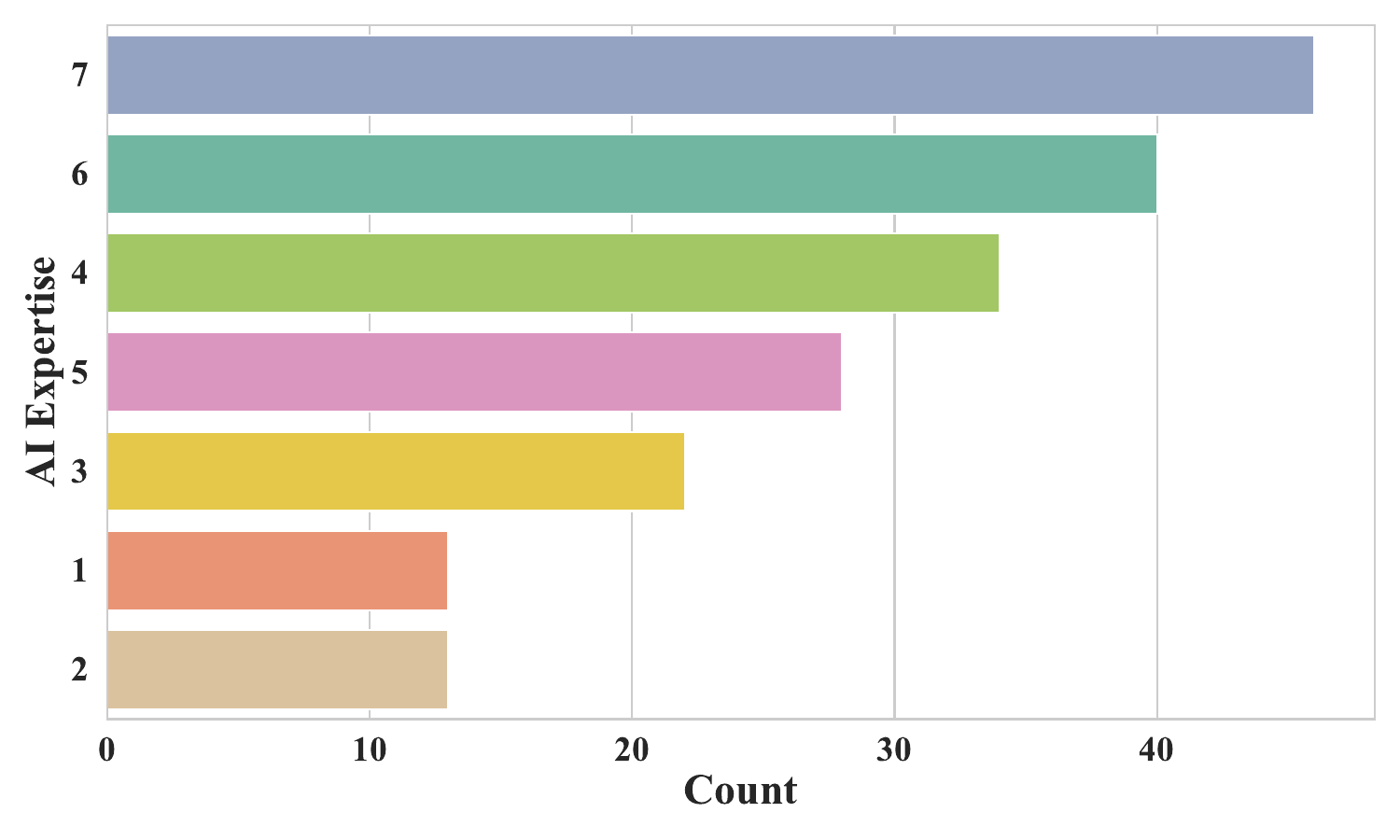}
\caption{Participant demographics including age, gender, education level, and AI expertise.}
\label{fig:demographics_overall1}
\end{figure}

\subsection{Detailed Metrics, Statistical Analysis, and Subgroup Analysis}
In addition to the overall method comparison presented in the main text, Figure~\ref{fig:method_detailed} shows detailed breakdowns of confidence and trust scores. 

\textbf{Statistical Analysis.} A two-way ANOVA reports a significant effect of the method ($F(4, 1950) = 87.1$, $p<1$e-3) and a significant effect of the use of OSP ($F(1, 1950) = 149.8$, $p<1$e-3) on accuracy. Similar results hold for confidence ($F(4, 1950) = 20.4$, $p<1$e-3 and $F(4, 1950) = 38.2$, $p<1$e-3), and trust scores ($F(4, 1950) = 74.8$, $p<1$e-3 and $F(4, 1950) = 160.6$, $p<1$e-3) compared to the baseline methods across all architectures.

Furthermore, we analyzed the results across different demographic subgroups, as shown in Figures~\ref{fig:subgroup_ai} and~\ref{fig:subgroup_edu}, indicating that OSP's improvements hold consistently regardless of the participants' AI expertise or educational background.

\begin{figure}[htbp]
\centering
\includegraphics[width=0.48\textwidth]{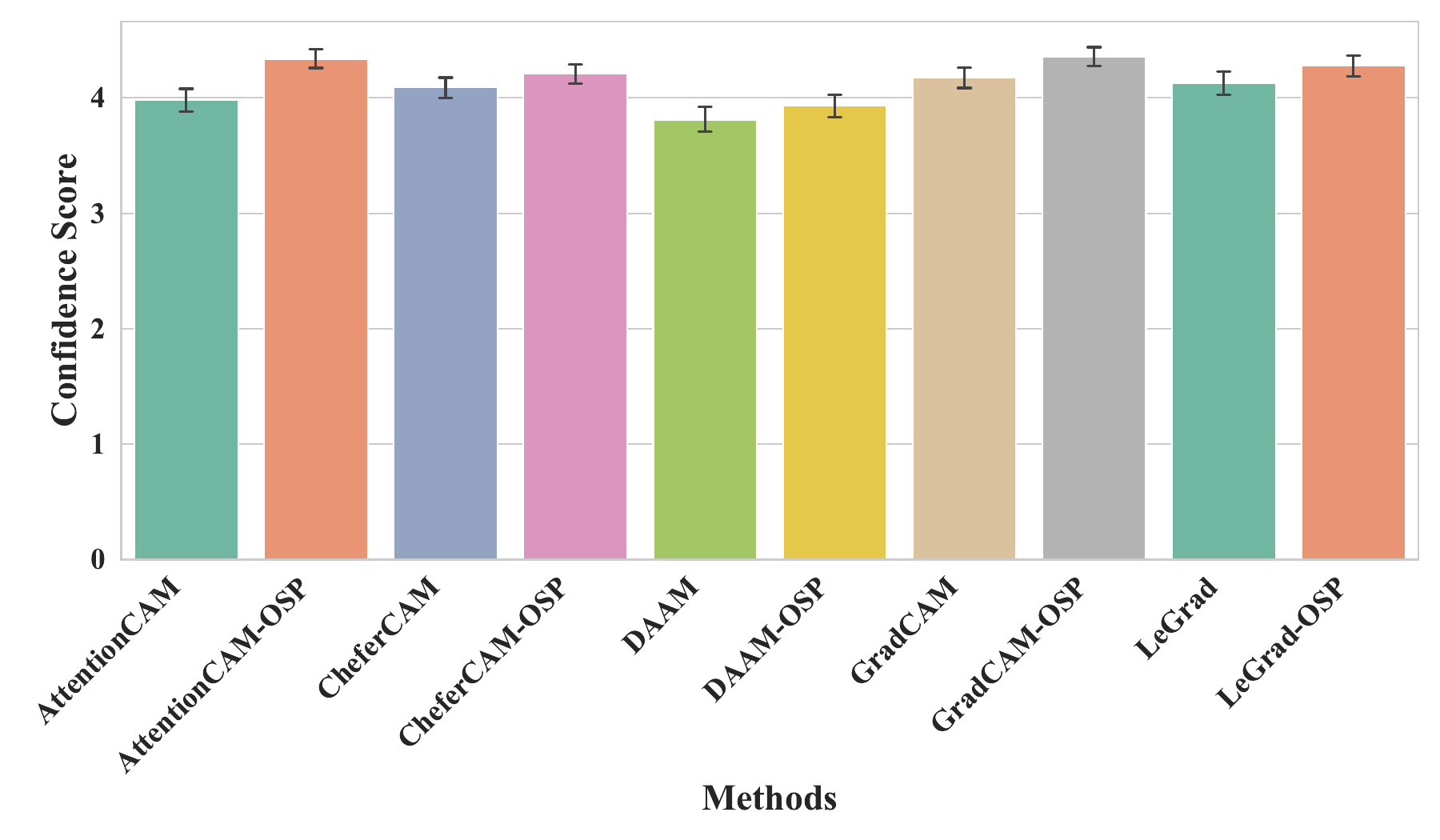}\hfill
\includegraphics[width=0.48\textwidth]{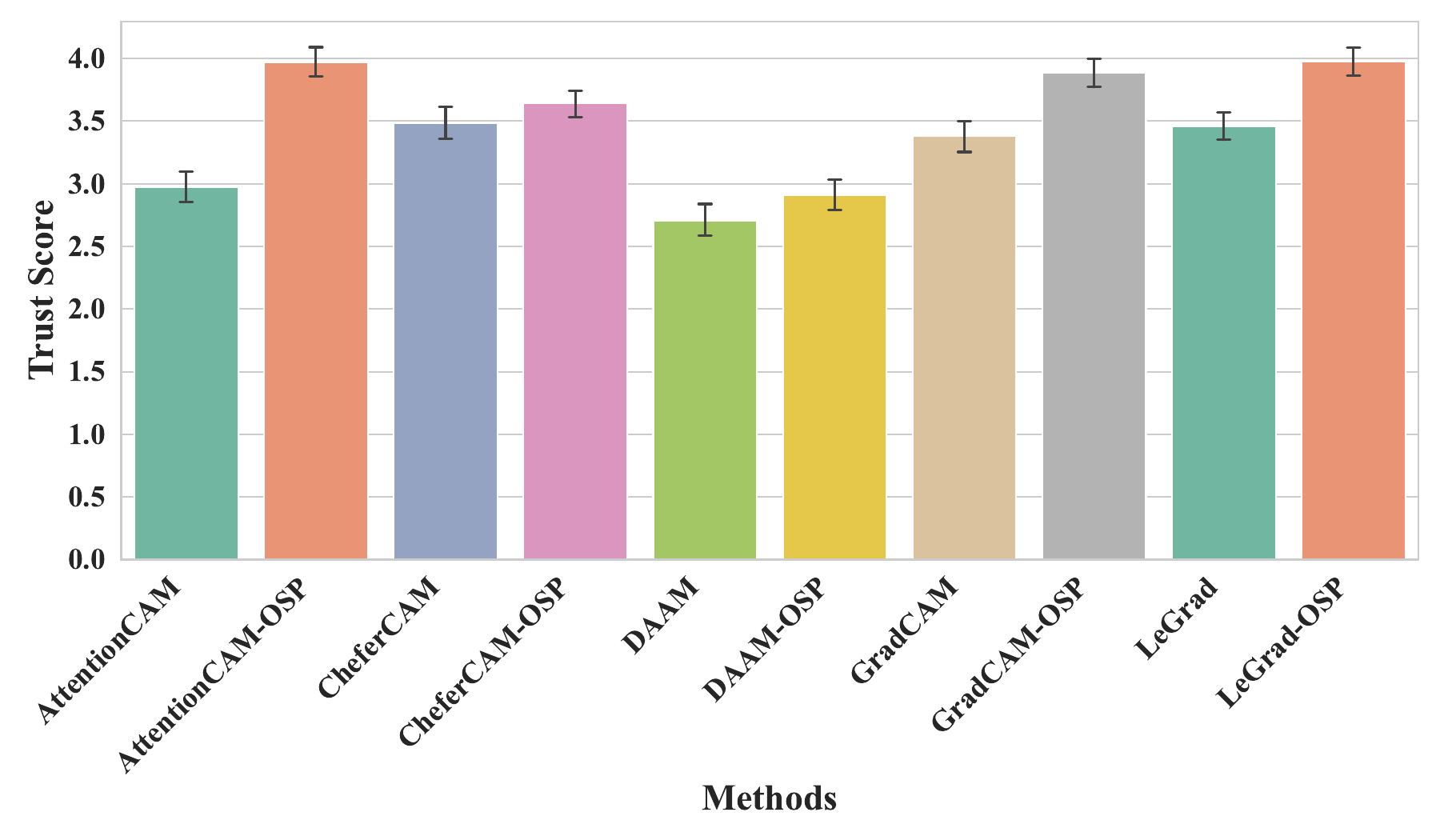}
\caption{Detailed comparison of confidence and trust scores across methods.}
\label{fig:method_detailed}
\end{figure}

\begin{figure}[htbp]
\centering
\includegraphics[width=0.45\textwidth]{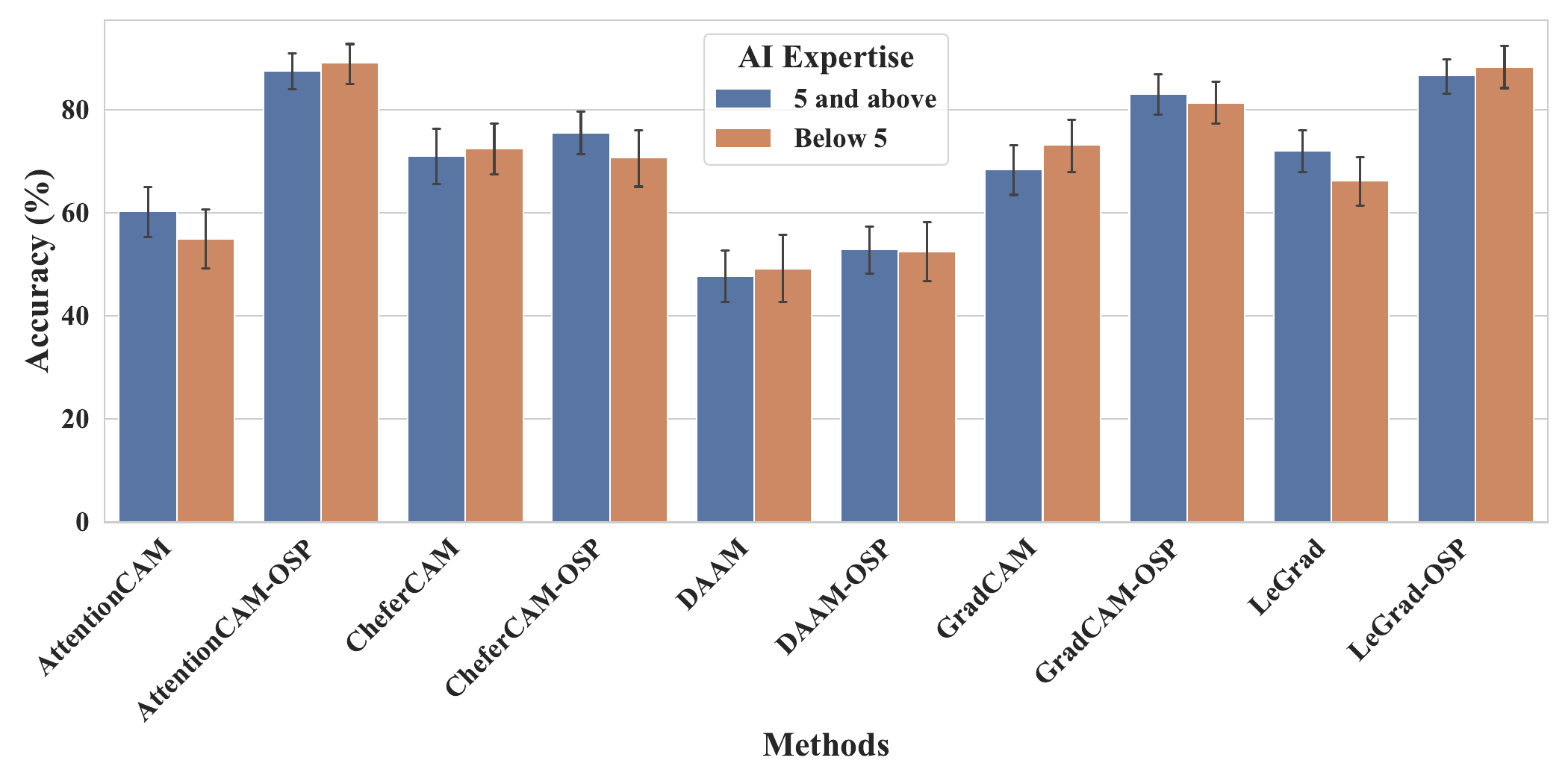}\hfill
\includegraphics[width=0.45\textwidth]{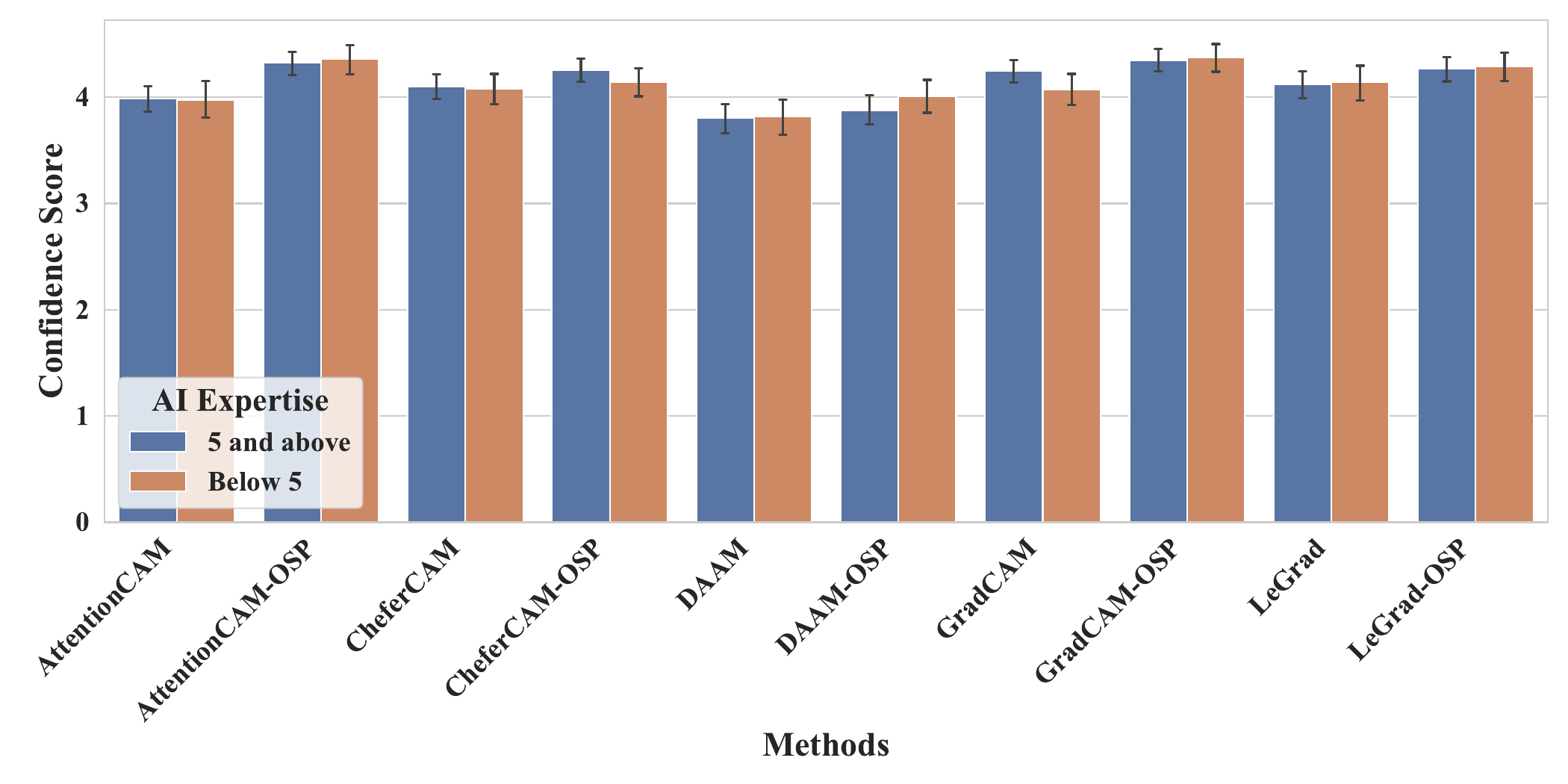}\hfill
\includegraphics[width=0.45\textwidth]{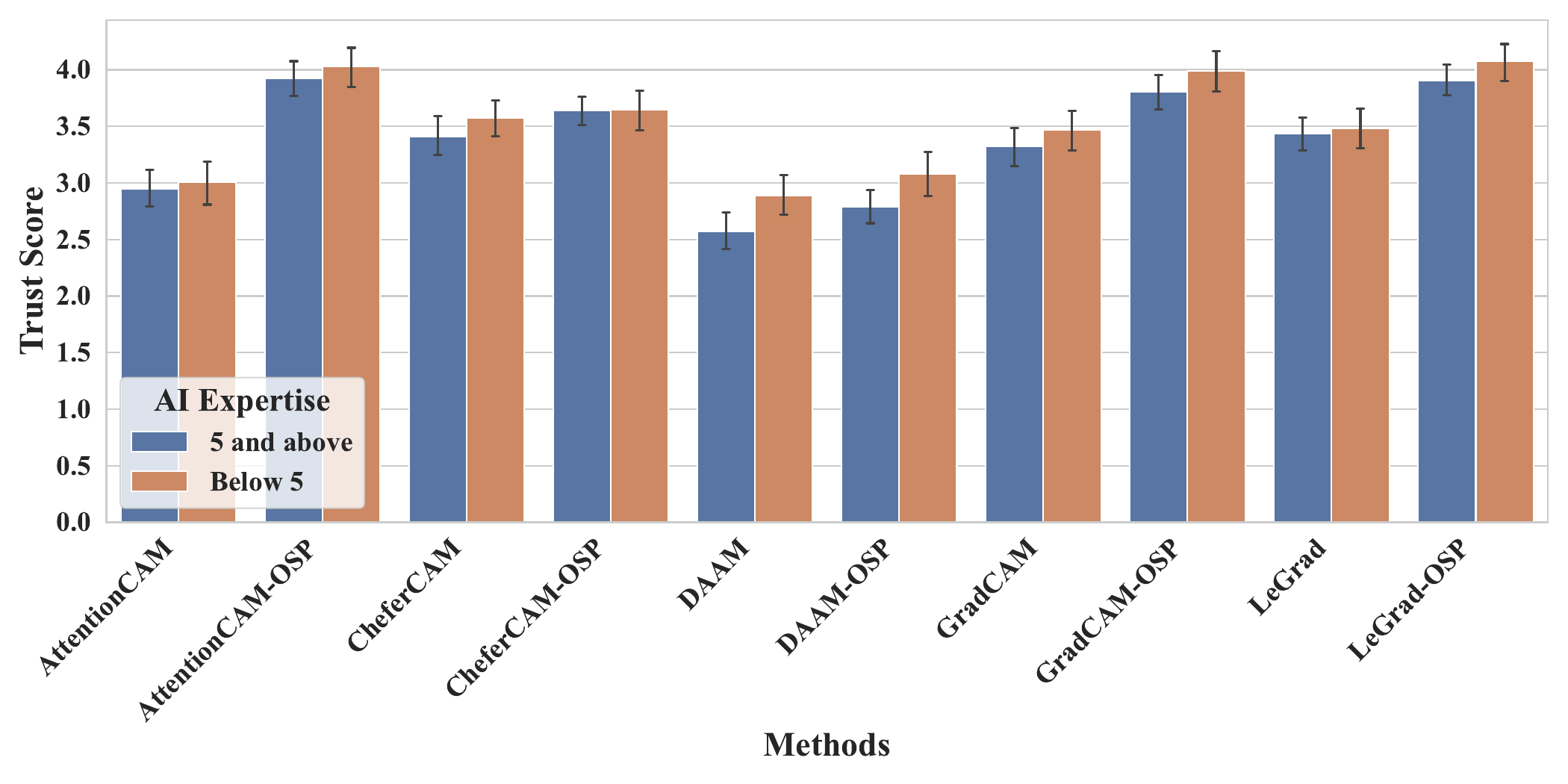}
\caption{Performance metrics categorized by participants' AI expertise.}
\label{fig:subgroup_ai}
\end{figure}

\begin{figure}[htbp]
\centering
\includegraphics[width=0.45\textwidth]{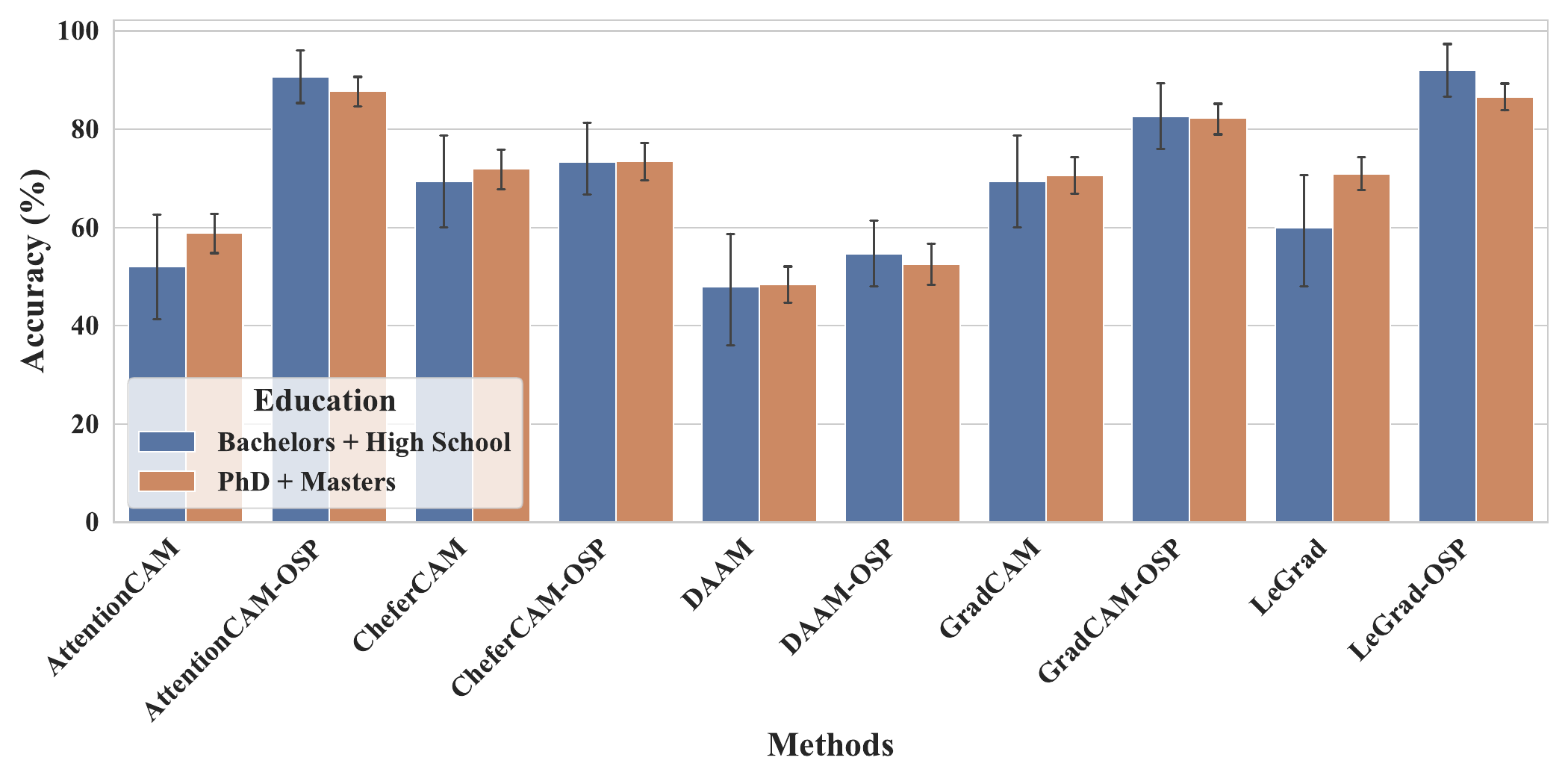}\hfill
\includegraphics[width=0.45\textwidth]{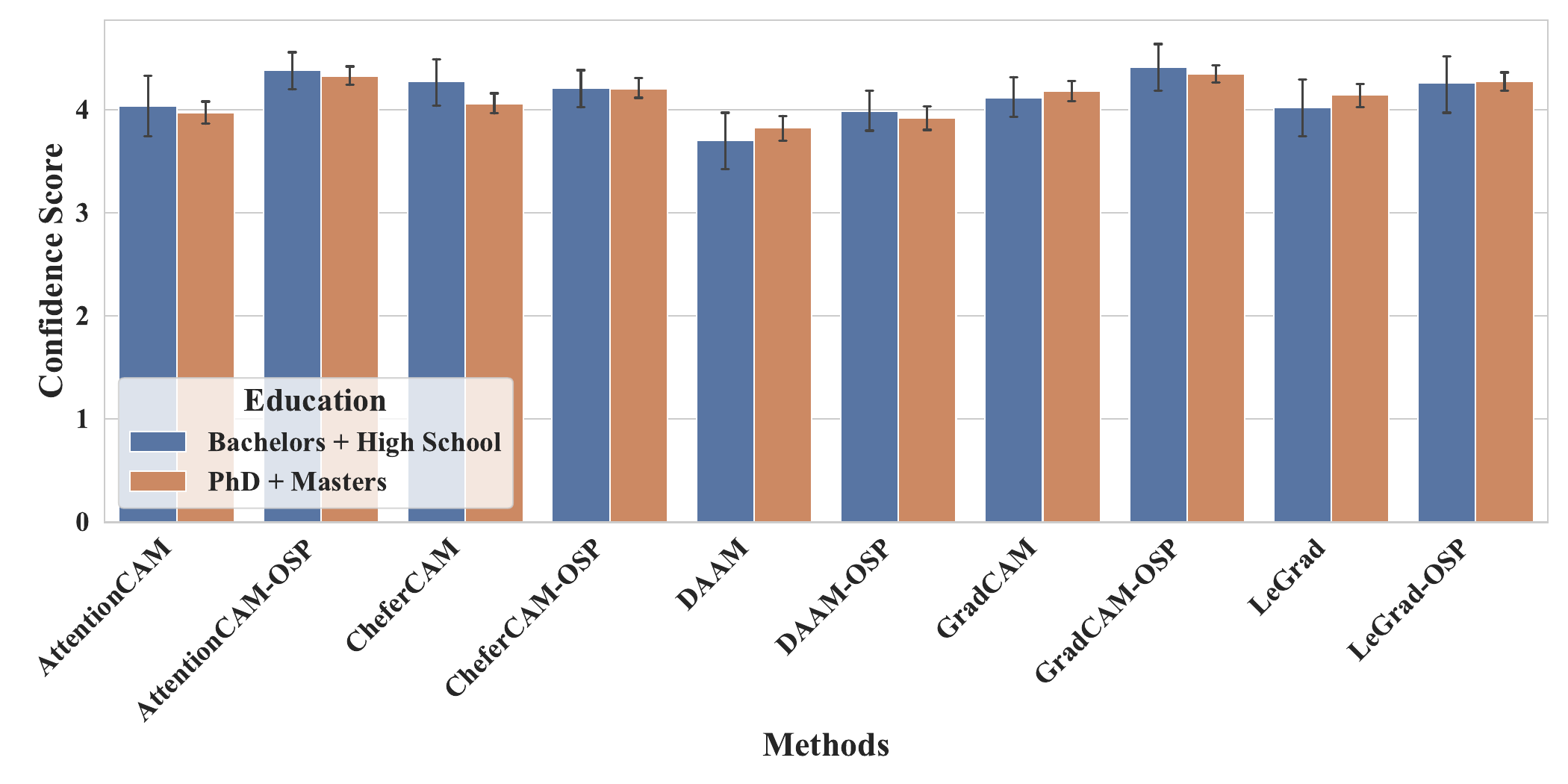}\hfill
\includegraphics[width=0.45\textwidth]{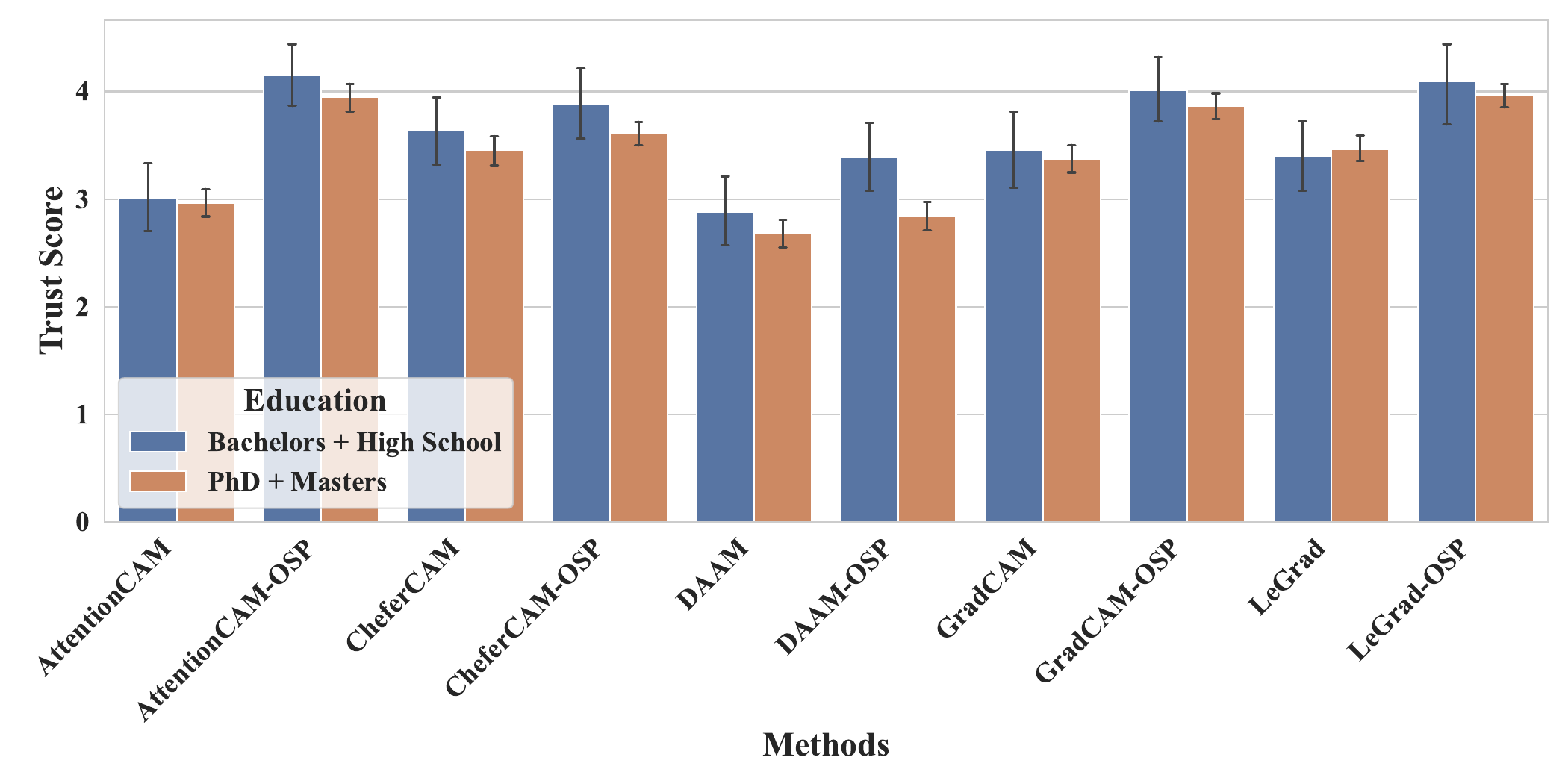}
\caption{Performance metrics categorized by participants' education levels.}
\label{fig:subgroup_edu}
\end{figure}

\section{Extra Visual Results}
\label{sec:extra_visuals}

In this section, we provide additional qualitative results to further demonstrate the effectiveness of our proposed Orthogonal Semantic Projection (OSP) across different scenarios. Figure~\ref{fig:visual_all3} showcases a visual comparison across all methods on a complex scene containing multiple objects, highlighting how OSP successfully suppresses spurious activations for both target and non-existent distractor classes. We also provide fine-grained part-level localization results on the PartImageNet++ dataset for the hare and timber wolf categories in Figures~\ref{fig:visual_jet} and \ref{fig:timber_wolf}, respectively, demonstrating the precision of OSP in granular segmentations.

\begin{figure}[htbp]
\centering
\includegraphics[width=1.08\textwidth]{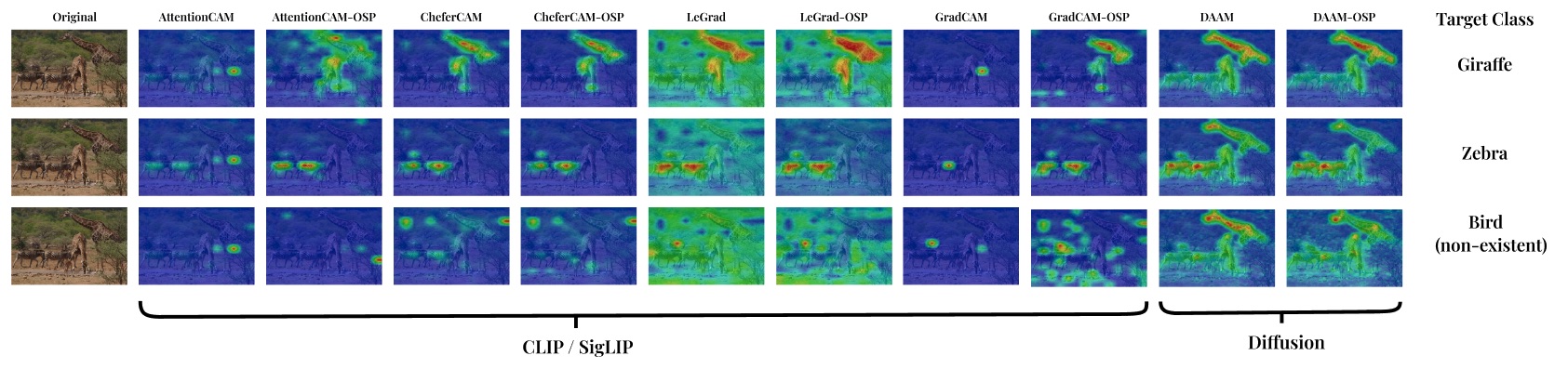}
\caption{Visual comparison across all methods on a \textbf{giraffe--zebra} scene. Rows correspond to queries for ``Giraffe'' (target), ``Zebra'' (target), and ``Bird'' (non-existent distractor). Even for semantically distant distractors, standard methods produce spurious activations that OSP removes.}
\label{fig:visual_all3}
\end{figure}

\begin{figure}[htbp]
\centering
\includegraphics[width=0.85\textwidth]{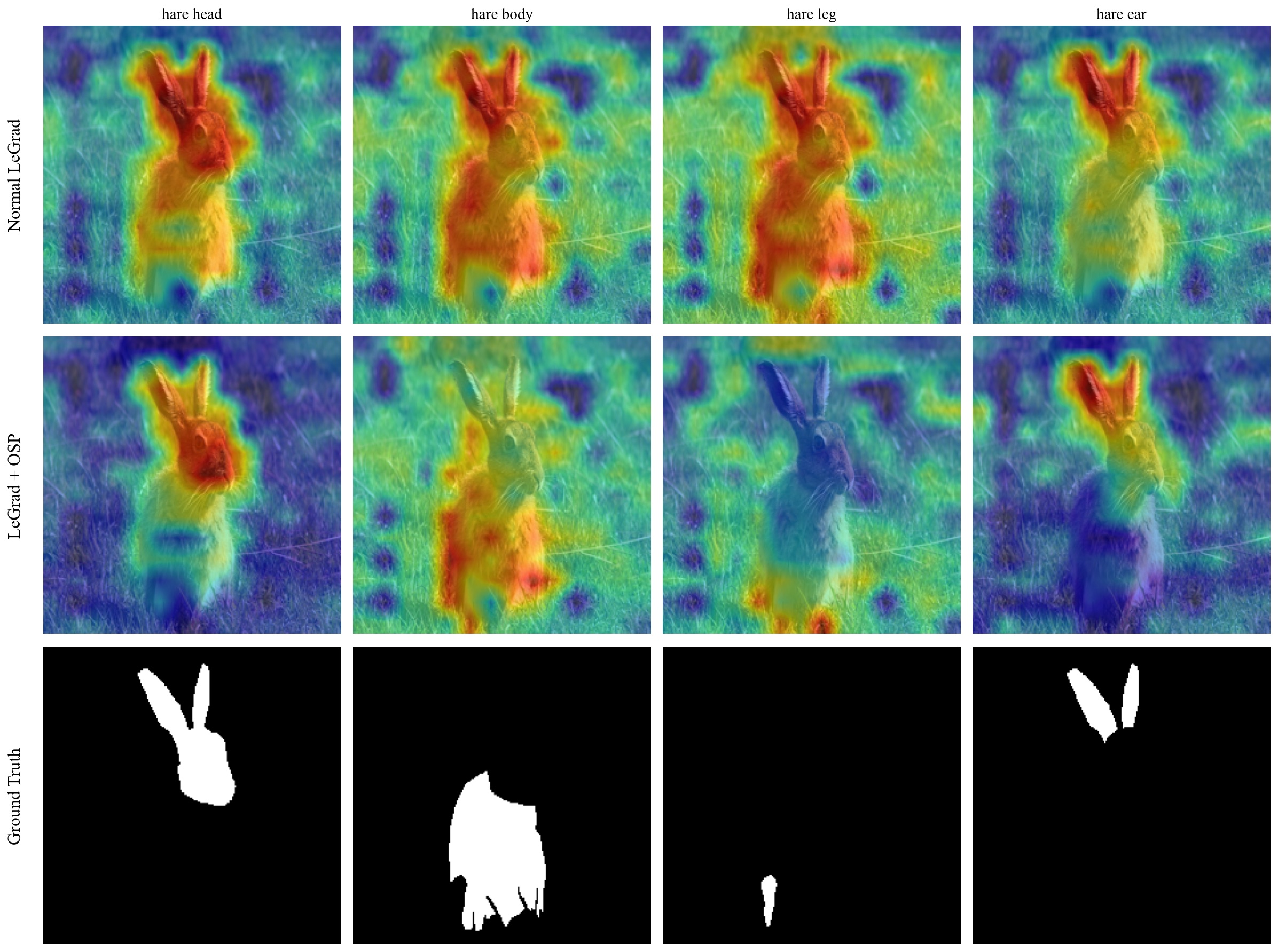}
\caption{Visual results on the \textbf{hare} category from PartImageNet++. The heatmap demonstrates precise part-level localization when using OSP, effectively distinguishing between different hare (rabbit) parts.}
\label{fig:visual_jet}
\end{figure}

\begin{figure}[htbp]
\centering
\includegraphics[width=0.85\textwidth]{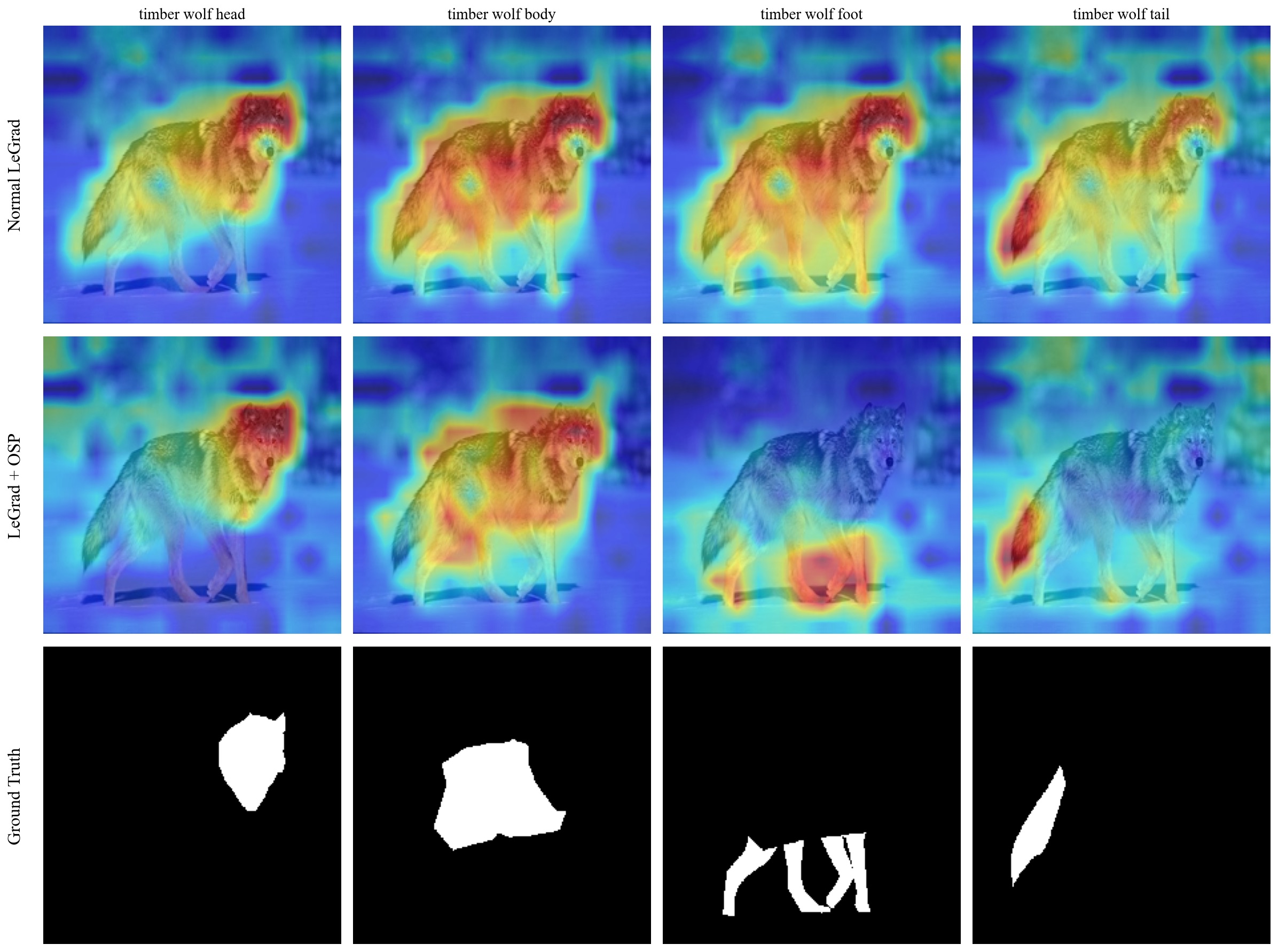}
\caption{Visual results on the \textbf{timber wolf} category from PartImageNet++. The heatmap demonstrates precise part-level localization when using OSP, effectively distinguishing between different timber wolf parts.}
\label{fig:timber_wolf}
\end{figure}

\section{LLM Prompt for Dictionary of Semantics Generation}
\label{sec:prompt_appendix}

We utilized the following prompt (see Figure~\ref{fig:llm_prompt}) to generate the dictionaries of semantics for each ImageNet class. The prompt was identical for both Gemini 3 Flash and GPT-OSS 120B generation strategies.

\begin{figure}[htbp]
\begin{tcolorbox}[title=LLM Prompt for Dictionary of Semantics Generation, fontupper=\footnotesize]
You are helping build a visual concept dictionary of semantics for zero-shot image segmentation. Use the 445 existing ImageNet classes in \texttt{"scripts/data/gtsegs\_ijcv.mat"}. 

For the ImageNet class \texttt{"\{class\_name\}"} (WordNet ID: \texttt{\{wnid\}}), generate exactly 40 concepts organized into 3 groups. Each concept should be 1-2 words.

\textbf{RULES:}
\begin{itemize}
    \item Do NOT include the class name itself
    \item Each concept must be distinct (no near-duplicates)
    \item Focus on what would be relevant in \textbf{PHOTOGRAPHS} (not abstract concepts)
    \item Prefer concrete, visually-grounded nouns over adjectives
\end{itemize}

\textbf{GROUP 1 - VISUAL CONFUSERS (15 concepts):}\\
Objects or textures that look visually similar to a \texttt{\{class\_name\}} and could be mistaken for it in photographs. Think about shape, color, texture, and size.

\textbf{GROUP 2 - CO-OCCURRING CONTEXT (15 concepts):}\\
Objects, scenes, or environments that frequently appear in the SAME photograph as a \texttt{\{class\_name\}} but are NOT the \texttt{\{class\_name\}} itself.

\textbf{GROUP 3 - SEMANTIC HIERARCHY (10 concepts):}\\
5 broader categories that \texttt{\{class\_name\}} belongs to (hypernyms/superclasses), and 5 more specific variants or subtypes of \texttt{\{class\_name\}} (hyponyms/subspecies).

Return as a JSON object:
\begin{verbatim}
{
  "visual_confusers": [...],
  "co_occurring_context": [...],  
  "semantic_hierarchy": [...]
}
\end{verbatim}
\end{tcolorbox}
\caption{The structured LLM prompt used to generate visual concept dictionaries for each ImageNet class.}
\label{fig:llm_prompt}
\end{figure}

\section{Runtime Analysis}
\label{sec:runtime_appendix}
The runtime of the Orthogonal Semantic Projection (OSP) was evaluated across various hardware configurations and dictionary sizes ($K$). Table~\ref{tab:omp_runtime} summarizes the average duration (ms) for OMP-based residual embedding creation, measured with an embedding dimension $d=512$ and 8 atoms. Across all architectures (CLIP, SigLIP, and Stable Diffusion 2), the overhead added by the OMP-based purification is consistently negligible compared to the total model inference and gradient computation time, ensuring that OSP remains a practical and efficient addition to the attribution pipeline.

\begin{table}[h]
\centering
\caption{\textbf{Runtime analysis of OMP-based residual embedding creation (ms) across different hardware and dictionary sizes (K).} Measured with $d=512$ and 8 atoms.}
\label{tab:omp_runtime}
\begin{tabular}{lcccc}
\toprule
$K$ & M2 Pro (CPU) & Apple MPS (GPU) & i9-10980XE (CPU)  &  NVIDIA RTX 3090 (GPU) \\ \midrule
100  & 0.77 & 2.13 & 2.40 & 2.63 \\
1000 & 0.87 & 2.44 & 2.19 & 2.82 \\
5000 & 1.38 & 4.13 & 2.33 & 4.07 \\ \bottomrule
\end{tabular}
\end{table}

\section{Evaluation on Multi-Object Datasets: MS COCO 2017}
\label{sec:coco_appendix}

We utilized four distinct dictionary creation strategies: (1) a dictionary consisting solely of the 80 MS COCO class names, (2) an augmented dictionary combining these classes with semantic relationships (hyponyms and hypernyms) from WordNet, excluding synonyms, (3) a dictionary generated by Gemini 3 Flash, and (4) a dictionary generated by prompting GPT-OSS 120B. The results the average performance per dictionary strategy is in Table~\ref{tab:avg_per_dictionary_coco} and the full results are in Tables~\ref{tab:full_results_coco}, \ref{tab:full_results_coco_wordnet}, \ref{tab:full_results_coco_gemini}, and \ref{tab:full_results_coco_gpt}, respectively. In all settings, OSP consistently improves the separation between present and absent concepts even in these complex multi-object scenes. The hyperparameters used for these experiments are detailed in Table~\ref{tab:all_hyperparameters_coco}.

\begin{table}[h!]
\centering
\caption{\textbf{Average performance per dictionary strategy} on MS COCO 2017. Values are averaged over all 9 method--model pairs (4 discriminative methods $\times$ 2 models + DAAM). $\Delta$ columns show the mean change introduced by OSP. For positive prompts (\textcolor{green!60!black}{$\uparrow$} desired), OSP should increase scores; for negative prompts (\textcolor{red!70!black}{$\downarrow$} desired), OSP should decrease them. Favorable changes are highlighted in \textbf{bold}.}
\label{tab:avg_per_dictionary_coco}
\setlength{\tabcolsep}{3pt}
\scriptsize
\resizebox{\textwidth}{!}{%
\begin{tabular}{l@{\hskip 4pt}c@{\hskip 6pt}rr@{\hskip 3pt}r@{\hskip 8pt}rr@{\hskip 3pt}r@{\hskip 8pt}rr@{\hskip 3pt}r}
\toprule
\textbf{Dictionary} & \textbf{Prompt} & \multicolumn{3}{c}{\textbf{mIoU}} & \multicolumn{3}{c}{\textbf{mAP}} & \multicolumn{3}{c}{\textbf{AUROC}}\\
\cmidrule(lr){3-5} \cmidrule(lr){6-8} \cmidrule(lr){9-11}
 & & Base & +OSP & $\Delta$ & Base & +OSP & $\Delta$ & Base & +OSP & $\Delta$ \\
\midrule
\multirow{2}{*}{Classes Only}
 & Pos \textcolor{green!60!black}{$\uparrow$} & 45.62 & 51.59 & \textbf{+5.98} & 88.94 & 90.31 & \textbf{+1.37} & \multirow{2}{*}{ 76.53 } & \multirow{2}{*}{ 82.32 } & \multirow{2}{*}{ \textbf{+5.79 }} \\
 & Neg \textcolor{red!70!black}{$\downarrow$} & 37.43 & 37.12 & \textbf{-0.31} & 82.84 & 82.29 & \textbf{-0.55} & & & \\
\midrule
\multirow{2}{*}{WordNet + Classes}
 & Pos \textcolor{green!60!black}{$\uparrow$} & 45.62 & 51.19 & \textbf{+5.58} & 88.94 & 89.81 & \textbf{+0.86} & \multirow{2}{*}{ 76.53 } & \multirow{2}{*}{ 81.17 } & \multirow{2}{*}{ \textbf{+4.64 }} \\
 & Neg \textcolor{red!70!black}{$\downarrow$} & 37.43 & 37.71 & +0.28 & 82.84 & 82.06 & \textbf{-0.79} & & & \\
\midrule
\multirow{2}{*}{Gemini 3 Flash}
 & Pos \textcolor{green!60!black}{$\uparrow$} & 45.62 & 51.69 & \textbf{+6.08} & 88.94 & 90.00 & \textbf{+1.06} & \multirow{2}{*}{ 76.53 } & \multirow{2}{*}{ 81.00 } & \multirow{2}{*}{ \textbf{+4.47 }} \\
 & Neg \textcolor{red!70!black}{$\downarrow$} & 37.43 & 38.81 & +1.38 & 82.84 & 83.18 & +0.34 & & & \\
\midrule
\multirow{2}{*}{GPT-OSS 120B}
 & Pos \textcolor{green!60!black}{$\uparrow$} & 45.62 & 51.39 & \textbf{+5.77} & 88.94 & 90.12 & \textbf{+1.17} & \multirow{2}{*}{ 76.53 } & \multirow{2}{*}{ 82.02 } & \multirow{2}{*}{ \textbf{+5.48 }} \\
 & Neg \textcolor{red!70!black}{$\downarrow$} & 37.43 & 37.88 & +0.45 & 82.84 & 82.56 & \textbf{-0.28} & & & \\
\bottomrule
\end{tabular}
}
\end{table}

\begin{table}[h]
\centering
\caption{\textbf{Full quantitative results on MS COCO 2017 using the dictionary from Strategy 1 (Classes Only).} All four metrics are shown before (Base) and after (+OSP) applying Orthogonal Semantic Projection, along with deltas ($\Delta$). The Gap Improvement is the cumulative change across metrics. Favorable changes are highlighted in \textbf{bold}.}
\label{tab:full_results_coco}
\resizebox{\textwidth}{!}{%
\setlength{\tabcolsep}{2.5pt}
\scriptsize
\begin{tabular}{ll@{\hskip 3pt}c@{\hskip 4pt}rr@{\hskip 2pt}r@{\hskip 6pt}rr@{\hskip 2pt}r@{\hskip 6pt}rr@{\hskip 2pt}r@{\hskip 6pt}rr@{\hskip 2pt}r@{\hskip 6pt}r}
\toprule
\textbf{Method} & \textbf{Model} & \textbf{Pr.} & \multicolumn{3}{c}{\textbf{mIoU}} & \multicolumn{3}{c}{\textbf{Accuracy}} & \multicolumn{3}{c}{\textbf{mAP}} & \multicolumn{3}{c}{\textbf{AUROC}} & \textbf{Gap} \\
\cmidrule(lr){4-6} \cmidrule(lr){7-9} \cmidrule(lr){10-12} \cmidrule(lr){13-15}
 & & & B & +O & $\Delta$ & B & +O & $\Delta$ & B & +O & $\Delta$ & B & +O & $\Delta$ & \textbf{Impr.} \\
\midrule
\multirow{4}{*}{LeGrad}
 & \multirow{2}{*}{CLIP}   & + & 59.40 & 57.67 & -1.73 & 86.68 & 83.06 & -3.62 & 90.78 & 90.38 & -0.40 & \multirow{2}{*}{87.45} & \multirow{2}{*}{85.67} & \multirow{2}{*}{-1.78} & \multirow{2}{*}{-0.53} \\
 &                         & -- & 39.04 & 37.08 & \textbf{-1.96} & 75.06 & 69.15 & \textbf{-5.91} & 77.00 & 77.87 & +0.87 &  &  &  &  \\
\cmidrule(l){2-16}
 & \multirow{2}{*}{SigLIP} & + & 47.30 & 51.07 & \textbf{+3.77} & 85.16 & 77.18 & -7.98 & 89.37 & 90.15 & \textbf{+0.78} & \multirow{2}{*}{77.81} & \multirow{2}{*}{80.64} & \multirow{2}{*}{\textbf{+2.83}} & \multirow{2}{*}{\textbf{25.46}} \\
 &                         & -- & 41.82 & 34.29 & \textbf{-7.53} & 82.57 & 63.74 & \textbf{-18.83} & 80.21 & 80.51 & +0.30 &  &  &  &  \\
\midrule
\multirow{4}{*}{GradCAM}
 & \multirow{2}{*}{CLIP}   & + & 46.98 & 54.06 & \textbf{+7.08} & 75.45 & 79.60 & \textbf{+4.15} & 88.69 & 91.49 & \textbf{+2.80} & \multirow{2}{*}{77.82} & \multirow{2}{*}{82.86} & \multirow{2}{*}{\textbf{+5.04}} & \multirow{2}{*}{\textbf{18.77}} \\
 &                         & -- & 35.75 & 35.89 & +0.14 & 66.50 & 65.54 & \textbf{-0.96} & 81.91 & 83.03 & +1.12 &  &  &  &  \\
\cmidrule(l){2-16}
 & \multirow{2}{*}{SigLIP} & + & 37.69 & 44.39 & \textbf{+6.70} & 61.73 & 70.50 & \textbf{+8.77} & 87.08 & 87.51 & \textbf{+0.43} & \multirow{2}{*}{69.47} & \multirow{2}{*}{73.60} & \multirow{2}{*}{\textbf{+4.13}} & \multirow{2}{*}{\textbf{11.52}} \\
 &                         & -- & 30.15 & 33.31 & +3.16 & 54.23 & 60.96 & +6.73 & 80.45 & 79.07 & \textbf{-1.38} &  &  &  &  \\
\midrule
\multirow{4}{*}{CheferCAM}
 & \multirow{2}{*}{CLIP}   & + & 45.36 & 53.77 & \textbf{+8.41} & 71.41 & 78.65 & \textbf{+7.24} & 90.52 & 91.91 & \textbf{+1.39} & \multirow{2}{*}{83.16} & \multirow{2}{*}{86.85} & \multirow{2}{*}{\textbf{+3.69}} & \multirow{2}{*}{\textbf{16.07}} \\
 &                         & -- & 40.61 & 42.15 & +1.54 & 67.56 & 70.26 & +2.70 & 88.40 & 88.82 & +0.42 &  &  &  &  \\
\cmidrule(l){2-16}
 & \multirow{2}{*}{SigLIP} & + & 39.92 & 45.54 & \textbf{+5.62} & 68.19 & 78.23 & \textbf{+10.04} & 87.33 & 88.88 & \textbf{+1.55} & \multirow{2}{*}{71.77} & \multirow{2}{*}{77.37} & \multirow{2}{*}{\textbf{+5.60}} & \multirow{2}{*}{\textbf{7.40}} \\
 &                         & -- & 37.86 & 42.83 & +4.97 & 66.39 & 75.99 & +9.60 & 85.56 & 86.40 & +0.84 &  &  &  &  \\
\midrule
\multirow{4}{*}{Att.CAM}
 & \multirow{2}{*}{CLIP}   & + & 44.65 & 58.37 & \textbf{+13.72} & 78.10 & 83.29 & \textbf{+5.19} & 85.13 & 91.00 & \textbf{+5.87} & \multirow{2}{*}{53.84} & \multirow{2}{*}{86.26} & \multirow{2}{*}{\textbf{+32.42}} & \multirow{2}{*}{\textbf{71.24}} \\
 &                         & -- & 40.24 & 37.10 & \textbf{-3.14} & 74.66 & 68.87 & \textbf{-5.79} & 83.49 & 78.38 & \textbf{-5.11} &  &  &  &  \\
\cmidrule(l){2-16}
 & \multirow{2}{*}{SigLIP} & + & 45.68 & 47.08 & \textbf{+1.40} & 71.28 & 71.53 & \textbf{+0.25} & 90.16 & 89.83 & -0.33 & \multirow{2}{*}{81.01} & \multirow{2}{*}{82.06} & \multirow{2}{*}{\textbf{+1.05}} & \multirow{2}{*}{\textbf{8.45}} \\
 &                         & -- & 33.50 & 31.78 & \textbf{-1.72} & 60.86 & 57.87 & \textbf{-2.99} & 81.45 & 80.08 & \textbf{-1.37} &  &  &  &  \\
\midrule
\multirow{2}{*}{DAAM}
 & \multirow{2}{*}{SD 2}   & + & 43.58 & 52.39 & \textbf{+8.81} & 66.72 & 77.48 & \textbf{+10.76} & 91.43 & 91.64 & \textbf{+0.21} & \multirow{2}{*}{86.47} & \multirow{2}{*}{85.56} & \multirow{2}{*}{-0.91} & \multirow{2}{*}{\textbf{11.65}} \\
 &                         & -- & 37.86 & 39.64 & +1.78 & 61.47 & 67.53 & +6.06 & 87.10 & 86.48 & \textbf{-0.62} &  &  &  &  \\
\bottomrule
\end{tabular}}
\end{table}

\begin{table}[h]
\centering
\caption{\textbf{Full quantitative results on MS COCO 2017 using the dictionary from Strategy 2 (WordNet + Classes).} All four metrics are shown before (Base) and after (+OSP) applying Orthogonal Semantic Projection, along with deltas ($\Delta$). The Gap Improvement is the cumulative change across metrics. Favorable changes are highlighted in \textbf{bold}.}
\label{tab:full_results_coco_wordnet}
\resizebox{\textwidth}{!}{%
\setlength{\tabcolsep}{2.5pt}
\scriptsize
\begin{tabular}{ll@{\hskip 3pt}c@{\hskip 4pt}rr@{\hskip 2pt}r@{\hskip 6pt}rr@{\hskip 2pt}r@{\hskip 6pt}rr@{\hskip 2pt}r@{\hskip 6pt}rr@{\hskip 2pt}r@{\hskip 6pt}r}
\toprule
\textbf{Method} & \textbf{Model} & \textbf{Pr.} & \multicolumn{3}{c}{\textbf{mIoU}} & \multicolumn{3}{c}{\textbf{Accuracy}} & \multicolumn{3}{c}{\textbf{mAP}} & \multicolumn{3}{c}{\textbf{AUROC}} & \textbf{Gap} \\
\cmidrule(lr){4-6} \cmidrule(lr){7-9} \cmidrule(lr){10-12} \cmidrule(lr){13-15}
 & & & B & +O & $\Delta$ & B & +O & $\Delta$ & B & +O & $\Delta$ & B & +O & $\Delta$ & \textbf{Impr.} \\
\midrule
\multirow{4}{*}{LeGrad}
 & \multirow{2}{*}{CLIP}   & + & 59.40 & 56.48 & -2.92 & 86.68 & 82.92 & -3.76 & 90.78 & 89.71 & -1.07 & \multirow{2}{*}{87.45} & \multirow{2}{*}{83.80} & \multirow{2}{*}{-3.65} & \multirow{2}{*}{-4.00} \\
 &                         & -- & 39.04 & 37.19 & \textbf{-1.85} & 75.06 & 69.42 & \textbf{-5.64} & 77.00 & 77.09 & +0.09 &  &  &  &  \\
\cmidrule(l){2-16}
 & \multirow{2}{*}{SigLIP} & + & 47.30 & 52.83 & \textbf{+5.53} & 85.16 & 82.39 & -2.77 & 89.37 & 88.84 & -0.53 & \multirow{2}{*}{77.81} & \multirow{2}{*}{78.33} & \multirow{2}{*}{\textbf{+0.52}} & \multirow{2}{*}{\textbf{17.15}} \\
 &                         & -- & 41.82 & 38.28 & \textbf{-3.54} & 82.57 & 72.94 & \textbf{-9.63} & 80.21 & 78.98 & \textbf{-1.23} &  &  &  &  \\
\midrule
\multirow{4}{*}{GradCAM}
 & \multirow{2}{*}{CLIP}   & + & 46.98 & 52.08 & \textbf{+5.10} & 75.45 & 77.86 & \textbf{+2.41} & 88.69 & 91.37 & \textbf{+2.68} & \multirow{2}{*}{77.82} & \multirow{2}{*}{83.33} & \multirow{2}{*}{\textbf{+5.51}} & \multirow{2}{*}{\textbf{17.29}} \\
 &                         & -- & 35.75 & 35.31 & \textbf{-0.44} & 66.50 & 64.76 & \textbf{-1.74} & 81.91 & 82.50 & +0.59 &  &  &  &  \\
\cmidrule(l){2-16}
 & \multirow{2}{*}{SigLIP} & + & 37.69 & 40.17 & \textbf{+2.48} & 61.73 & 64.37 & \textbf{+2.64} & 87.08 & 87.77 & \textbf{+0.69} & \multirow{2}{*}{69.47} & \multirow{2}{*}{72.10} & \multirow{2}{*}{\textbf{+2.63}} & \multirow{2}{*}{\textbf{8.07}} \\
 &                         & -- & 30.15 & 30.32 & +0.17 & 54.23 & 54.66 & +0.43 & 80.45 & 80.22 & \textbf{-0.23} &  &  &  &  \\
\midrule
\multirow{4}{*}{CheferCAM}
 & \multirow{2}{*}{CLIP}   & + & 45.36 & 54.54 & \textbf{+9.18} & 71.41 & 81.17 & \textbf{+9.76} & 90.52 & 91.64 & \textbf{+1.12} & \multirow{2}{*}{83.16} & \multirow{2}{*}{86.16} & \multirow{2}{*}{\textbf{+3.00}} & \multirow{2}{*}{\textbf{13.33}} \\
 &                         & -- & 40.61 & 43.43 & +2.82 & 67.56 & 73.85 & +6.29 & 88.40 & 89.02 & +0.62 &  &  &  &  \\
\cmidrule(l){2-16}
 & \multirow{2}{*}{SigLIP} & + & 39.92 & 45.67 & \textbf{+5.75} & 68.19 & 78.31 & \textbf{+10.12} & 87.33 & 88.70 & \textbf{+1.37} & \multirow{2}{*}{71.77} & \multirow{2}{*}{76.47} & \multirow{2}{*}{\textbf{+4.70}} & \multirow{2}{*}{\textbf{7.54}} \\
 &                         & -- & 37.86 & 42.27 & +4.41 & 66.39 & 75.65 & +9.26 & 85.56 & 86.29 & +0.73 &  &  &  &  \\
\midrule
\multirow{4}{*}{Att.CAM}
 & \multirow{2}{*}{CLIP}   & + & 44.65 & 54.93 & \textbf{+10.28} & 78.10 & 81.77 & \textbf{+3.67} & 85.13 & 88.73 & \textbf{+3.60} & \multirow{2}{*}{53.84} & \multirow{2}{*}{80.51} & \multirow{2}{*}{\textbf{+26.67}} & \multirow{2}{*}{\textbf{56.57}} \\
 &                         & -- & 40.24 & 37.89 & \textbf{-2.35} & 74.66 & 70.86 & \textbf{-3.80} & 83.49 & 77.29 & \textbf{-6.20} &  &  &  &  \\
\cmidrule(l){2-16}
 & \multirow{2}{*}{SigLIP} & + & 45.68 & 47.88 & \textbf{+2.20} & 71.28 & 73.09 & \textbf{+1.81} & 90.16 & 89.01 & -1.15 & \multirow{2}{*}{81.01} & \multirow{2}{*}{81.79} & \multirow{2}{*}{\textbf{+0.78}} & \multirow{2}{*}{\textbf{6.41}} \\
 &                         & -- & 33.50 & 32.89 & \textbf{-0.61} & 60.86 & 60.50 & \textbf{-0.36} & 81.45 & 79.65 & \textbf{-1.80} &  &  &  &  \\
\midrule
\multirow{2}{*}{DAAM}
 & \multirow{2}{*}{SD 2}   & + & 43.58 & 56.17 & \textbf{+12.59} & 66.72 & 80.72 & \textbf{+14.00} & 91.43 & 92.50 & \textbf{+1.07} & \multirow{2}{*}{86.47} & \multirow{2}{*}{88.05} & \multirow{2}{*}{\textbf{+1.58}} & \multirow{2}{*}{\textbf{16.25}} \\
 &                         & -- & 37.86 & 41.80 & +3.94 & 61.47 & 70.16 & +8.69 & 87.10 & 87.46 & +0.36 &  &  &  &  \\
\bottomrule
\end{tabular}}
\end{table}

\begin{table}[h]
\centering
\caption{\textbf{Full quantitative results on MS COCO 2017 using the dictionary from Strategy 3 (Gemini Generated).} All four metrics are shown before (Base) and after (+OSP) applying Orthogonal Semantic Projection, along with deltas ($\Delta$). The Gap Improvement is the cumulative change across metrics. Favorable changes are highlighted in \textbf{bold}.}
\label{tab:full_results_coco_gemini}
\resizebox{\textwidth}{!}{%
\setlength{\tabcolsep}{2.5pt}
\scriptsize
\begin{tabular}{ll@{\hskip 3pt}c@{\hskip 4pt}rr@{\hskip 2pt}r@{\hskip 6pt}rr@{\hskip 2pt}r@{\hskip 6pt}rr@{\hskip 2pt}r@{\hskip 6pt}rr@{\hskip 2pt}r@{\hskip 6pt}r}
\toprule
\textbf{Method} & \textbf{Model} & \textbf{Pr.} & \multicolumn{3}{c}{\textbf{mIoU}} & \multicolumn{3}{c}{\textbf{Accuracy}} & \multicolumn{3}{c}{\textbf{mAP}} & \multicolumn{3}{c}{\textbf{AUROC}} & \textbf{Gap} \\
\cmidrule(lr){4-6} \cmidrule(lr){7-9} \cmidrule(lr){10-12} \cmidrule(lr){13-15}
 & & & B & +O & $\Delta$ & B & +O & $\Delta$ & B & +O & $\Delta$ & B & +O & $\Delta$ & \textbf{Impr.} \\
\midrule
\multirow{4}{*}{LeGrad}
 & \multirow{2}{*}{CLIP}   & + & 59.40 & 57.85 & -1.55 & 86.68 & 83.45 & -3.23 & 90.78 & 90.54 & -0.24 & \multirow{2}{*}{87.45} & \multirow{2}{*}{86.04} & \multirow{2}{*}{-1.41} & \multirow{2}{*}{-3.04} \\
 &                         & -- & 39.04 & 37.96 & \textbf{-1.08} & 75.06 & 70.26 & \textbf{-4.80} & 77.00 & 79.49 & +2.49 &  &  &  &  \\
\cmidrule(l){2-16}
 & \multirow{2}{*}{SigLIP} & + & 47.30 & 50.02 & \textbf{+2.72} & 85.16 & 75.91 & -9.25 & 89.37 & 89.68 & \textbf{+0.31} & \multirow{2}{*}{77.81} & \multirow{2}{*}{79.85} & \multirow{2}{*}{\textbf{+2.04}} & \multirow{2}{*}{\textbf{21.72}} \\
 &                         & -- & 41.82 & 34.65 & \textbf{-7.17} & 82.57 & 62.82 & \textbf{-19.75} & 80.21 & 81.23 & +1.02 &  &  &  &  \\
\midrule
\multirow{4}{*}{GradCAM}
 & \multirow{2}{*}{CLIP}   & + & 46.98 & 52.58 & \textbf{+5.60} & 75.45 & 78.88 & \textbf{+3.43} & 88.69 & 91.07 & \textbf{+2.38} & \multirow{2}{*}{77.82} & \multirow{2}{*}{82.84} & \multirow{2}{*}{\textbf{+5.02}} & \multirow{2}{*}{\textbf{14.12}} \\
 &                         & -- & 35.75 & 36.25 & +0.50 & 66.50 & 66.40 & \textbf{-0.10} & 81.91 & 83.82 & +1.91 &  &  &  &  \\
\cmidrule(l){2-16}
 & \multirow{2}{*}{SigLIP} & + & 37.69 & 41.28 & \textbf{+3.59} & 61.73 & 66.66 & \textbf{+4.93} & 87.08 & 87.03 & -0.05 & \multirow{2}{*}{69.47} & \multirow{2}{*}{70.07} & \multirow{2}{*}{\textbf{+0.41}} & \multirow{2}{*}{\textbf{2.81}} \\
 &                         & -- & 30.15 & 32.27 & +2.12 & 54.23 & 58.17 & +3.94 & 80.45 & 80.46 & +0.01 &  &  &  &  \\
\midrule
\multirow{4}{*}{CheferCAM}
 & \multirow{2}{*}{CLIP}   & + & 45.36 & 52.33 & \textbf{+6.97} & 71.41 & 77.74 & \textbf{+6.33} & 90.52 & 91.67 & \textbf{+1.15} & \multirow{2}{*}{83.16} & \multirow{2}{*}{85.79} & \multirow{2}{*}{\textbf{+2.63}} & \multirow{2}{*}{\textbf{10.33}} \\
 &                         & -- & 40.61 & 43.10 & +2.49 & 67.56 & 71.04 & +3.48 & 88.40 & 89.18 & +0.78 &  &  &  &  \\
\cmidrule(l){2-16}
 & \multirow{2}{*}{SigLIP} & + & 39.92 & 45.55 & \textbf{+5.63} & 68.19 & 78.10 & \textbf{+9.91} & 87.33 & 88.81 & \textbf{+1.48} & \multirow{2}{*}{71.77} & \multirow{2}{*}{76.58} & \multirow{2}{*}{\textbf{+4.81}} & \multirow{2}{*}{\textbf{7.19}} \\
 &                         & -- & 37.86 & 42.47 & +4.61 & 66.39 & 75.53 & +9.14 & 85.56 & 86.45 & +0.89 &  &  &  &  \\
\midrule
\multirow{4}{*}{Att.CAM}
 & \multirow{2}{*}{CLIP}   & + & 44.65 & 57.90 & \textbf{+13.25} & 78.10 & 87.09 & \textbf{+8.99} & 85.13 & 90.16 & \textbf{+5.03} & \multirow{2}{*}{53.84} & \multirow{2}{*}{81.48} & \multirow{2}{*}{\textbf{+27.64}} & \multirow{2}{*}{\textbf{55.36}} \\
 &                         & -- & 40.24 & 40.72 & +0.48 & 74.66 & 77.73 & +3.07 & 83.49 & 79.49 & \textbf{-4.00} &  &  &  &  \\
\cmidrule(l){2-16}
 & \multirow{2}{*}{SigLIP} & + & 45.68 & 51.90 & \textbf{+6.22} & 71.28 & 78.69 & \textbf{+7.41} & 90.16 & 89.66 & -0.50 & \multirow{2}{*}{81.01} & \multirow{2}{*}{80.10} & \multirow{2}{*}{-0.91} & \multirow{2}{*}{\textbf{1.95}} \\
 &                         & -- & 33.50 & 37.01 & +3.51 & 60.86 & 67.72 & +6.86 & 81.45 & 81.35 & \textbf{-0.10} &  &  &  &  \\
\midrule
\multirow{2}{*}{DAAM}
 & \multirow{2}{*}{SD 2}   & + & 43.58 & 55.83 & \textbf{+12.25} & 66.72 & 84.08 & \textbf{+17.36} & 91.43 & 91.42 & -0.01 & \multirow{2}{*}{86.47} & \multirow{2}{*}{86.24} & \multirow{2}{*}{-0.23} & \multirow{2}{*}{\textbf{6.39}} \\
 &                         & -- & 37.86 & 44.82 & +6.96 & 61.47 & 77.46 & +15.99 & 87.10 & 87.13 & +0.03 &  &  &  &  \\
\bottomrule
\end{tabular}}
\end{table}

\begin{table}[h]
\centering
\caption{\textbf{Full quantitative results on MS COCO 2017 using the dictionary from Strategy 4 (GPT Generated).} All four metrics are shown before (Base) and after (+OSP) applying Orthogonal Semantic Projection, along with deltas ($\Delta$). The Gap Improvement is the cumulative change across metrics. Favorable changes are highlighted in \textbf{bold}.}
\label{tab:full_results_coco_gpt}
\resizebox{\textwidth}{!}{%
\setlength{\tabcolsep}{2.5pt}
\scriptsize
\begin{tabular}{ll@{\hskip 3pt}c@{\hskip 4pt}rr@{\hskip 2pt}r@{\hskip 6pt}rr@{\hskip 2pt}r@{\hskip 6pt}rr@{\hskip 2pt}r@{\hskip 6pt}rr@{\hskip 2pt}r@{\hskip 6pt}r}
\toprule
\textbf{Method} & \textbf{Model} & \textbf{Pr.} & \multicolumn{3}{c}{\textbf{mIoU}} & \multicolumn{3}{c}{\textbf{Accuracy}} & \multicolumn{3}{c}{\textbf{mAP}} & \multicolumn{3}{c}{\textbf{AUROC}} & \textbf{Gap} \\
\cmidrule(lr){4-6} \cmidrule(lr){7-9} \cmidrule(lr){10-12} \cmidrule(lr){13-15}
 & & & B & +O & $\Delta$ & B & +O & $\Delta$ & B & +O & $\Delta$ & B & +O & $\Delta$ & \textbf{Impr.} \\
\midrule
\multirow{4}{*}{LeGrad}
 & \multirow{2}{*}{CLIP}   & + & 59.40 & 56.66 & -2.74 & 86.68 & 82.31 & -4.37 & 90.78 & 90.32 & -0.46 & \multirow{2}{*}{87.45} & \multirow{2}{*}{87.70} & \multirow{2}{*}{\textbf{+0.25}} & \multirow{2}{*}{\textbf{2.89}} \\
 &                         & -- & 39.04 & 36.06 & \textbf{-2.98} & 75.06 & 67.83 & \textbf{-7.23} & 77.00 & 77.00 & 0.00 &  &  &  &  \\
\cmidrule(l){2-16}
 & \multirow{2}{*}{SigLIP} & + & 47.30 & 52.46 & \textbf{+5.16} & 85.16 & 83.34 & -1.82 & 89.37 & 89.12 & -0.25 & \multirow{2}{*}{77.81} & \multirow{2}{*}{78.48} & \multirow{2}{*}{\textbf{+0.67}} & \multirow{2}{*}{\textbf{10.38}} \\
 &                         & -- & 41.82 & 40.62 & \textbf{-1.20} & 82.57 & 76.17 & \textbf{-6.40} & 80.21 & 81.19 & +0.98 &  &  &  &  \\
\midrule
\multirow{4}{*}{GradCAM}
 & \multirow{2}{*}{CLIP}   & + & 46.98 & 53.79 & \textbf{+6.81} & 75.45 & 80.18 & \textbf{+4.73} & 88.69 & 91.20 & \textbf{+2.51} & \multirow{2}{*}{77.82} & \multirow{2}{*}{83.65} & \multirow{2}{*}{\textbf{+5.83}} & \multirow{2}{*}{\textbf{15.19}} \\
 &                         & -- & 35.75 & 37.16 & +1.41 & 66.50 & 67.75 & +1.25 & 81.91 & 83.94 & +2.03 &  &  &  &  \\
\cmidrule(l){2-16}
 & \multirow{2}{*}{SigLIP} & + & 37.69 & 42.56 & \textbf{+4.87} & 61.73 & 67.57 & \textbf{+5.84} & 87.08 & 87.58 & \textbf{+0.50} & \multirow{2}{*}{69.47} & \multirow{2}{*}{72.63} & \multirow{2}{*}{\textbf{+3.16}} & \multirow{2}{*}{\textbf{6.18}} \\
 &                         & -- & 30.15 & 33.44 & +3.29 & 54.23 & 59.21 & +4.98 & 80.45 & 80.37 & \textbf{-0.08} &  &  &  &  \\
\midrule
\multirow{4}{*}{CheferCAM}
 & \multirow{2}{*}{CLIP}   & + & 45.36 & 54.68 & \textbf{+9.32} & 71.41 & 81.18 & \textbf{+9.77} & 90.52 & 91.60 & \textbf{+1.08} & \multirow{2}{*}{83.16} & \multirow{2}{*}{86.34} & \multirow{2}{*}{\textbf{+3.18}} & \multirow{2}{*}{\textbf{13.17}} \\
 &                         & -- & 40.61 & 43.67 & +3.06 & 67.56 & 74.06 & +6.50 & 88.40 & 89.02 & +0.62 &  &  &  &  \\
\cmidrule(l){2-16}
 & \multirow{2}{*}{SigLIP} & + & 39.92 & 44.96 & \textbf{+5.04} & 68.19 & 78.26 & \textbf{+10.07} & 87.33 & 88.17 & \textbf{+0.84} & \multirow{2}{*}{71.77} & \multirow{2}{*}{74.88} & \multirow{2}{*}{\textbf{+3.11}} & \multirow{2}{*}{\textbf{4.50}} \\
 &                         & -- & 37.86 & 42.04 & +4.18 & 66.39 & 76.06 & +9.67 & 85.56 & 86.27 & +0.71 &  &  &  &  \\
\midrule
\multirow{4}{*}{Att.CAM}
 & \multirow{2}{*}{CLIP}   & + & 44.65 & 58.14 & \textbf{+13.49} & 78.10 & 83.09 & \textbf{+4.99} & 85.13 & 91.13 & \textbf{+6.00} & \multirow{2}{*}{53.84} & \multirow{2}{*}{86.52} & \multirow{2}{*}{\textbf{+32.68}} & \multirow{2}{*}{\textbf{69.53}} \\
 &                         & -- & 40.24 & 37.65 & \textbf{-2.59} & 74.66 & 69.15 & \textbf{-5.51} & 83.49 & 79.22 & \textbf{-4.27} &  &  &  &  \\
\cmidrule(l){2-16}
 & \multirow{2}{*}{SigLIP} & + & 45.68 & 53.90 & \textbf{+8.22} & 71.28 & 80.83 & \textbf{+9.55} & 90.16 & 89.81 & -0.35 & \multirow{2}{*}{81.01} & \multirow{2}{*}{81.23} & \multirow{2}{*}{\textbf{+0.22}} & \multirow{2}{*}{\textbf{7.81}} \\
 &                         & -- & 33.50 & 36.85 & +3.35 & 60.86 & 68.73 & +7.87 & 81.45 & 80.06 & \textbf{-1.39} &  &  &  &  \\
\midrule
\multirow{2}{*}{DAAM}
 & \multirow{2}{*}{SD 2}   & + & 43.58 & 45.36 & \textbf{+1.78} & 66.72 & 68.18 & \textbf{+1.46} & 91.43 & 92.12 & \textbf{+0.69} & \multirow{2}{*}{86.47} & \multirow{2}{*}{86.71} & \multirow{2}{*}{\textbf{+0.22}} & \multirow{2}{*}{\textbf{14.72}} \\
 &                         & -- & 37.86 & 33.41 & \textbf{-4.45} & 61.47 & 56.53 & \textbf{-4.94} & 87.10 & 85.94 & \textbf{-1.16} &  &  &  &  \\
\bottomrule
\end{tabular}}
\end{table}

Tables~\ref{tab:all_hyperparameters} and~\ref{tab:all_hyperparameters_coco} show the hyperparameters used for Orthogonal Semantic Projection across all datasets and dictionary strategies evaluated in this work. For each method--model pair, we report the heatmap binarization threshold ($\tau_{\text{act}}$), number of OMP atoms, (= number of semantics), ($T$), and maximum cosine similarity threshold ($\tau_{\text{cos}}$). DAAM additionally uses the OMP substitution weight ($\beta_{\text{omp}}$), introduced in Section~\ref{sec:extended_quantitative_results}, reported below each table.

\begin{table}[htbp]
\centering
\caption{\textbf{Consolidated hyperparameters on ImageNet-Segmentation} across all four dictionary strategies.}
\label{tab:all_hyperparameters}
\resizebox{\textwidth}{!}{%
\setlength{\tabcolsep}{3pt}
\scriptsize
\begin{tabular}{ll ccc ccc ccc ccc}
\toprule
 & & \multicolumn{3}{c}{\textbf{S1: Classes}} & \multicolumn{3}{c}{\textbf{S2: WordNet}} & \multicolumn{3}{c}{\textbf{S3: Gemini}} & \multicolumn{3}{c}{\textbf{S4: GPT-OSS}} \\
\cmidrule(lr){3-5} \cmidrule(lr){6-8} \cmidrule(lr){9-11} \cmidrule(lr){12-14}
\textbf{Method} & \textbf{Model} & $\tau_{\text{act}}$ & $T$ & $\tau_{\text{cos}}$ & $\tau_{\text{act}}$ & $T$ & $\tau_{\text{cos}}$ & $\tau_{\text{act}}$ & $T$ & $\tau_{\text{cos}}$ & $\tau_{\text{act}}$ & $T$ & $\tau_{\text{cos}}$ \\
\midrule
\multirow{2}{*}{LeGrad} & CLIP & 0.4 & 3 & 0.55 & 0.4 & 5 & 0.5 & 0.4 & 27 & 0.6 & 0.35 & 29 & 0.7 \\
 & SigLIP & 0.375 & 6 & 0.5 & 0.375 & 11 & 0.55 & 0.4 & 23 & 0.65 & 0.425 & 15 & 0.7 \\
\midrule
\multirow{2}{*}{CheferCAM} & CLIP & 0.15 & 4 & 0.55 & 0.125 & 5 & 0.5 & 0.175 & 4 & 0.7 & 0.175 & 10 & 0.85 \\
 & SigLIP & 0.1 & 6 & 0.9 & 0.1 & 5 & 0.9 & 0.1 & 4 & 0.85 & 0.125 & 19 & 0.8 \\
\midrule
\multirow{2}{*}{Att.CAM} & CLIP & 0.1 & 17 & 0.5 & 0.1 & 6 & 0.5 & 0.1 & 20 & 0.8 & 0.1 & 28 & 1.0 \\
 & SigLIP & 0.125 & 5 & 0.55 & 0.1 & 8 & 0.55 & 0.125 & 28 & 0.7 & 0.15 & 7 & 0.75 \\
\midrule
\multirow{2}{*}{GradCAM} & CLIP & 0.1 & 5 & 0.5 & 0.1 & 17 & 0.5 & 0.1 & 26 & 0.65 & 0.125 & 32 & 0.85 \\
 & SigLIP & 0.5 & 20 & 0.5 & 0.5 & 16 & 0.5 & 0.475 & 26 & 0.75 & 0.45 & 23 & 0.75 \\
\midrule
DAAM & SD 2 & 0.225 & 17 & 0.6 & 0.325 & 20 & 0.95 & 0.4 & 24 & 0.95 & 0.4 & 19 & 0.7 \\
\bottomrule
\end{tabular}
}
\vspace{2pt}
\par\noindent{\scriptsize DAAM $\beta_{\text{omp}}$: S1\,=\,0.275, S2\,=\,0.15, S3\,=\,0.1, S4\,=\,0.1.}
\end{table}

\begin{table}[htbp]
\centering
\caption{\textbf{Consolidated hyperparameters on MS COCO 2017} across all four dictionary strategies.}
\label{tab:all_hyperparameters_coco}
\resizebox{\textwidth}{!}{%
\setlength{\tabcolsep}{3pt}
\scriptsize
\begin{tabular}{ll ccc ccc ccc ccc}
\toprule
 & & \multicolumn{3}{c}{\textbf{S1: Classes}} & \multicolumn{3}{c}{\textbf{S2: WordNet}} & \multicolumn{3}{c}{\textbf{S3: Gemini}} & \multicolumn{3}{c}{\textbf{S4: GPT-OSS}} \\
\cmidrule(lr){3-5} \cmidrule(lr){6-8} \cmidrule(lr){9-11} \cmidrule(lr){12-14}
\textbf{Method} & \textbf{Model} & $\tau_{\text{act}}$ & $T$ & $\tau_{\text{cos}}$ & $\tau_{\text{act}}$ & $T$ & $\tau_{\text{cos}}$ & $\tau_{\text{act}}$ & $T$ & $\tau_{\text{cos}}$ & $\tau_{\text{act}}$ & $T$ & $\tau_{\text{cos}}$ \\
\midrule
\multirow{2}{*}{LeGrad} & CLIP & 0.4 & 13 & 0.8 & 0.425 & 27 & 0.75 & 0.425 & 25 & 0.75 & 0.4 & 2 & 0.55 \\
 & SigLIP & 0.275 & 8 & 0.9 & 0.425 & 27 & 0.75 & 0.275 & 31 & 0.8 & 0.425 & 27 & 0.85 \\
\midrule
\multirow{2}{*}{GradCAM} & CLIP & 0.125 & 28 & 0.75 & 0.15 & 19 & 0.65 & 0.15 & 9 & 0.75 & 0.15 & 21 & 0.75 \\
 & SigLIP & 0.25 & 18 & 0.8 & 0.15 & 15 & 0.65 & 0.15 & 19 & 0.6 & 0.225 & 24 & 0.8 \\
\midrule
\multirow{2}{*}{CheferCAM} & CLIP & 0.1 & 14 & 0.8 & 0.125 & 11 & 0.65 & 0.1 & 18 & 0.8 & 0.125 & 17 & 0.75 \\
 & SigLIP & 0.1 & 8 & 0.85 & 0.1 & 14 & 0.85 & 0.1 & 16 & 0.9 & 0.1 & 2 & 0.85 \\
\midrule
\multirow{2}{*}{Att.CAM} & CLIP & 0.4 & 20 & 0.7 & 0.425 & 14 & 0.6 & 0.55 & 19 & 0.75 & 0.4 & 12 & 0.75 \\
 & SigLIP & 0.25 & 12 & 0.8 & 0.275 & 22 & 0.5 & 0.3 & 6 & 0.85 & 0.35 & 18 & 0.8 \\
\midrule
DAAM & SD 2 & 0.25 & 2 & 0.7 & 0.3 & 27 & 0.85 & 0.425 & 2 & 1.0 & 0.125 & 7 & 0.8 \\
\bottomrule
\end{tabular}
}
\vspace{2pt}
\par\noindent{\scriptsize DAAM $\beta_{\text{omp}}$: S1\,=\,0.5, S2\,=\,0.1, S3\,=\,0.1, S4\,=\,0.1.}
\end{table}